\crefname{algocf}{Algo.}{Algos.}
\Crefname{algocf}{Algorithm}{Algorithms}
\DeclareMathOperator*{\vect}{\mathrm{vec}}
\DeclareMathOperator*{\diag}{\mathrm{diag}}
\DeclareMathOperator*{\Diag}{\mathrm{Diag}}
\renewcommand{\mid}{\,\vert\,}
\newcommand{\X}{\mathsf{X}}
\newcommand{\Eta}{\mathsf{H}}
\newcommand{\kl}{\mathrm{KL}}
\newcommand{\fim}{\mathcal{I}}
\newcommand{\efima}{\hat{\fim}_{1}}
\newcommand{\efimb}{\hat{\fim}_{2}}
\newcommand{\efimj}{\hat{\fim}_{j}}
\newcommand{\ydata}{\hat{\mathrm{y}}}
\newcommand{\empfim}{\mathrm{I}}
\newcommand{\empefim}{\hat{\empfim}}
\newcommand{\empkurt}{\mathrm{K}}
\newcommand{\empedv}{{\mathrm{V}}}
\newcommand{\edv}{{\mathcal{V}}}
\newcommand{\edva}{{\mathcal{V}}_{1}}
\newcommand{\edvb}{{\mathcal{V}}_{2}}
\newcommand{\edvj}{{\mathcal{V}}_{j}}
\newcommand{\edvi}{{\mathcal{V}}_{\mathrm{I}}}
\newcommand{\var}{\mathrm{Var}}
\newcommand{\cov}{\mathrm{Cov}}
\newcommand{\kurt}{\mathcal{K}}
\newcommand{\supp}{\mathrm{Supp}}
\newcommand{\trace}{\mathrm{tr}}
\newcommand{\relu}{\mathrm{ReLU}}
\newcommand{\hess}{\mathrm{Hes}}
\newcommand{\dhess}{\mathrm{dHes}}
\newcommand{\vJac}{\mathrm{vJac}}
\newcommand{\defeq}{\stackrel{\mathrm{.}}{=}}
\newcommand{\dydata}{\, \mathrm{d}{\hat{\bm{y}}}}
\newcommand{\expect}{\mathop{\mathbb{E}}}
\newcommand{\T}{\top}
\DeclareRobustCommand\onedot{\futurelet\@let@token\bmv@onedotaux}
\def\bmv@onedotaux{\ifx\@let@token.\else.\null\fi\xspace}
\def\eg{\emph{e.g}\onedot} 
\def\ie{\emph{i.e}\onedot} 
\def\etc{\emph{etc}\onedot} 
\def\wrt{w.r.t\onedot}
\def\iid{i.i.d\onedot}
\def\pdf{p.d.f\onedot}
\def\aka{a.k.a\onedot}
\def\std{s.t.d\onedot}
\def\bigoh{\mathcal{O}}
\newcolumntype{B}{>{\hsize=1.6\hsize}X}
\newcolumntype{M}{>{\hsize=1.1\hsize}X}
\newcolumntype{S}{>{\hsize=0.6\hsize}X}
\newcolumntype{L}{>{\hsize=0.3\hsize}X}
\newcolumntype{D}{>{\hsize=0.15\hsize}X}
\newcolumntype{N}{>{\hsize=0.85\hsize}X}
\def\th@remark{\thm@headfont{\bfseries}\normalfont \thm@preskip\topsep \divide\thm@preskip\tw@
  \thm@postskip\thm@preskip
}
\theoremstyle{plain}
\newtheorem{theorem}{Theorem}[section]
\newtheorem{proposition}[theorem]{Proposition}
\newtheorem{lemma}[theorem]{Lemma}
\newtheorem{corollary}[theorem]{Corollary}
\theoremstyle{definition}
\theoremstyle{remark}
\newtheorem{remark}[theorem]{Remark}
\renewcommand{\cite}[1]{\citep{#1}}
\newcommand\blfootnote[1]{\begingroup
  \renewcommand\thefootnote{}\footnote{#1}\addtocounter{footnote}{-1}\endgroup
}
\begin{document}

\newcommand{\papertitle}{Trade-Offs of Diagonal Fisher Information Matrix Estimators\\\blfootnote{The current article is a digital reprint of \citet{vardiagfim}.}}

\title{\papertitle}
\author{Alexander Soen~\orcidlink{0000-0002-2440-4814}\\
   The Australian National University\\
   RIKEN AIP\\
   \texttt{alexander.soen@anu.edu.au}\\
   \and
   Ke Sun~\orcidlink{0000-0001-6263-7355}\\
   CSIRO's Data61\\
   The Australian National University\\
  \texttt{sunk@ieee.org}
}
\date{}
\maketitle

\begin{abstract}
The Fisher information matrix can be used to characterize the local geometry of
the parameter space of neural networks. It elucidates insightful theories and
useful tools to understand and optimize neural networks. Given its high
computational cost, practitioners often use random estimators and evaluate only
the diagonal entries. We examine two popular estimators whose accuracy and sample
complexity depend on their associated variances. We derive bounds of the
variances and instantiate them in neural networks for regression and
classification. We navigate trade-offs for both estimators based on analytical
and numerical studies. We find that the variance quantities depend on the
non-linearity \wrt different parameter groups and should not be neglected when
estimating the Fisher information.

\end{abstract}
 \section{Settings} \label{sec:setting}

In the parameter space of neural networks (NNs),
\ie the \emph{neuromanifold}~\cite{aIGI},
the network weights and biases play the role of a coordinate system and the local metric tensor
can be described by the Fisher Information Matrix (FIM).
As a result, empirical estimation of the FIM helps reveal the geometry of the loss landscape
and the intrinsic structure of the neuromanifold.
Utilizing these insights has lead to efficient optimization
algorithms, \eg, the natural gradient~\cite{aIGI} and Adam~\cite{adam}.

A NN with inputs $\bm{x}$ and stochastic outputs $\bm{y}$
can be specified by a conditional \pdf
$p(\bm y \mid \bm x; \bm \theta)$, where $\bm\theta$
is the NN's weights and biases. This paper considers the general
parametric form
\begin{align}\label{eq:exp}
    p(\bm{y} \mid \bm{x}; \bm \theta)
    &= \pi(\bm {y}) \cdot \exp\left( \bm{t}^\T(\bm{y})\bm{h}_{\bm \theta}(\bm x) - F(\bm{h}_{\bm \theta}(\bm x)) \right),
\end{align}
where \(  \bm h_{\bm \theta} \colon \Re^{I} \rightarrow \Re^{T} \) maps \( I \)-dimensional inputs \( \bm x \) to \( T \)-dimensional exponential family parameters, \( \bm t(\bm y) \) is a vector of sufficient statistics, \( \pi(\bm {y}) \) is a base measure, and \(F(\cdot)\) is the log-partition function (normalizing the exponential).
For example, if $\bm{y}$ denotes class labels and $\bm{t}(\bm{y})$ maps to its corresponding one-hot vectors, then \cref{eq:exp} is associated with a multi-class classification network.

Assuming that the marginal distribution $q(\bm{x})$ is parameter-free,
we define parametric joint distributions \( p(\bm x, \bm y; \bm \theta ) = q(\bm x) p(\bm y \mid \bm x; \bm \theta )\).
The (joint) FIM is defined as \( \fim(\bm \theta) \defeq \expect_{q(\bm x)}\left[ \fim(\bm \theta \mid \bm x) \right] \),
where
\begin{align}\label{eq:fim}
        \fim(\bm \theta \mid \bm x)
        &\defeq \expect_{p(\bm y \mid \bm x; \bm \theta)}
        \left[
            \frac{\partial \log p(\bm y \mid \bm x; \bm \theta)}{\partial \bm \theta}\frac{\partial \log p(\bm y \mid \bm x; \bm \theta)}{\partial \bm \theta^{\T}}
        \right]
{\color{blue}\overset{(*)}{=}} -\expect_{p(\bm y \mid \bm x; \bm \theta)}\left[ \frac{\partial^{2} \log p(\bm y \mid \bm x; \bm \theta)}{\partial \bm \theta \partial \bm \theta^{\T}} \right]
\end{align}
is the `conditional FIM'.
The second equality {\color{blue} (*)} holds if $\bm{h}_{\bm \theta}$'s activation functions are in \( C^{2}(\Re) \) (\ie, $\bm{h}_{\bm \theta}$ is a sufficiently smooth NN).
$\fim(\bm \theta \mid \bm x)$ does \emph{not} have this equivalent expression {\color{blue} (*)} for NNs with ReLU activation functions~\cite{varfim}.
Both $\fim(\bm\theta)$ and $\fim(\bm\theta\mid\bm{x})$ define $\dim(\bm \theta)\times\dim(\bm\theta)$ positive semi-definite (PSD) matrices.
The distinction in notation is to emphasize that the joint FIM \( \fim(\bm\theta) \) (depending only on $\bm\theta$) is simply the average over individual conditional FIMs \( \fim(\bm \theta \mid \bm x) \) (depending on both $\bm\theta$ and $\bm{x}$).

In practice, the FIM is typically computationally expensive and needs to be estimated.
Given \(q(\bm x)\) and a NN with weights and biases \( \bm \theta \) parameterizing $p(\bm y \mid \bm x; \bm\theta)$, as per \cref{eq:exp},
we consider two commonly used estimators of the FIM \cite{spall,varfim} given by
\begingroup\makeatletter\def\f@size{9}\check@mathfonts
\begin{equation}
    \hspace{-5pt}
    \efima(\bm{\theta}) \defeq
    \frac{1}{N} \sum_{k=1}^{N}
    \left[
        \frac{\partial \log p(\bm{y}_k \mid \bm{x}_{k})}{\partial\bm\theta}
        \frac{\partial \log p(\bm{y}_k \mid \bm{x}_{k})}{\partial\bm\theta^\T}
    \right]; \quad \textrm{and} \quad
    \efimb(\bm \theta) \defeq
    \frac{1}{N}
    \sum_{k=1}^N
    \left[
        - \frac{\partial^{2} \log p(\bm{y}_k \mid \bm{x}_{k})}{\partial \bm \theta \partial \bm \theta^{\T}}
    \right], \label{eq:efima_and_efimb}
\end{equation}
\endgroup
where \( p(\bm{y}_k \mid \bm{x}_{k}) \defeq p(\bm{y}_k \mid
\bm{x}_{k}; \bm \theta) \) and \( (\bm x_{1}, \bm y_{1}), \ldots, (\bm x_{N},
\bm y_{N}) \) are \iid sampled from \( p(\bm{x}, \bm{y}; \bm \theta) \). A
conditional variant of the estimators, denoted as \( \efima(\bm \theta
\mid \bm x) \) and \( \efimb(\bm \theta \mid \bm x) \), can be defined by
fixing \( \bm{x} = \bm x_{1} = \cdots = \bm x_{N} \) and sampling
$\bm{y}_1,\ldots,\bm{y}_N$ independently from $p(\bm{y} \mid \bm{x})$ in
\cref{eq:efima_and_efimb} --- details omitted for brevity.

Both estimators, \( \efima(\bm{\theta}) \) and \( \efimb(\bm{\theta}) \),
are random matrices with the same shape as $\fim(\bm\theta)$. By \cref{eq:fim}, they are \emph{unbiased} --- for $\efimb(\bm \theta)$, this only holds if activations functions are in $ C^{2}(\Re)$.
Following \cref{eq:exp}'s setting,
the estimation variances of \( \efima(\bm{\theta}) \) and \( \efimb(\bm{\theta}) \)
can be expressed in closed form and upper bounded~\cite{varfim}. This provides an important, yet not widely discussed, tool for quantifying the estimators' accuracy~\cite{spall} and hence insights for where / when different estimators should be used.
Despite this, for deep NNs, neither these variances nor their bounds can be computed efficiently due to the huge dimensionality of $\bm\theta$.

This work focuses on estimating the \emph{diagonal entries} of the FIM and their associated variances.
Our results --- including estimators of the FIM, their variances, and their variance bounds --- can be implemented through automatic differentiation.
These computational tools empower us to practically explore the trade-offs between the two estimators.
For example, \cref{fig:teaser} shows natural gradient descent~\cite{aIGI} for generalized linear models on a toy dataset, where $\efimb(\bm \theta)$ is preferable (especially for regression) and $\efima(\bm \theta)$ suffers from high variance and an unstable learning curve.
Our analytical results reveal how moments of the output exponential family and gradients of the NN in \cref{eq:exp} affects the FIM estimators. We discover a general decomposition of the estimators' variances corresponding to the samples of $\bm{x}$ and $\bm{y}$. We investigate different scenarios where each FIM estimator is the preferred one and then connect our analysis to the empirical FIM.

\begin{figure*}[tb!]
    \centering
\includegraphics[width=1.0\textwidth, trim={7pt 7pt 7pt 7pt}, clip]{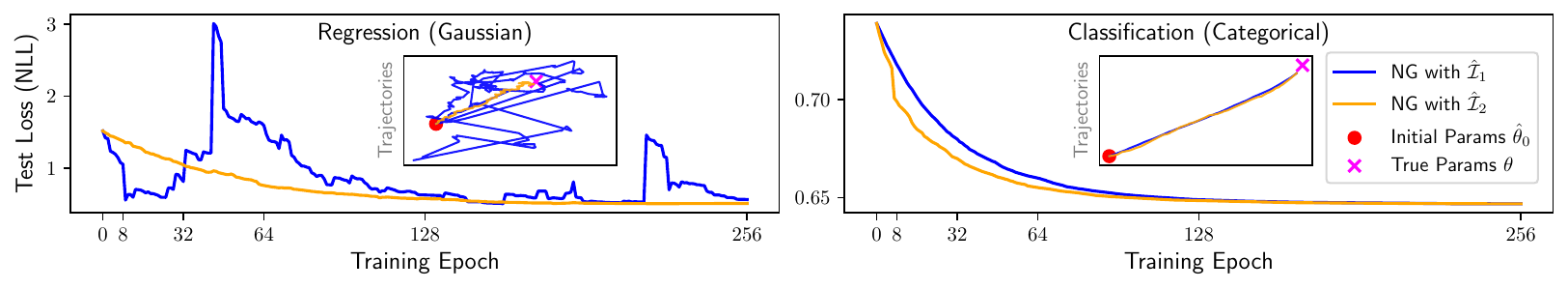}\caption{Natural gradient (NG) descent using $\efima(\bm \theta)$ / $\efimb(\bm
        \theta)$ on a 2D toy dataset for regression (linear regression) and classification (logistic regression) (details in \cref{sec:teaser}).
        Inset plot shows the parameter updates throughout training.
        Here, the variance of $\efimb(\bm \theta)$ is generally lower than $\efima(\bm
        \theta)$.}\label{fig:teaser}\end{figure*} 
 \section{Related Work}
\label{sec:related}

Prior efforts aim to analyze the structure of the FIM of NNs with random weights~\cite{pennington2018spectrum,karakida2019universal,pathologic,amari2019fisher,papyan}.
This body of work hinges on utilizing tools from random matrix theory and spectral analysis,
characterizing the behavior and statistics of the FIM.
One insight is that randomly weighted NNs have FIMs with a majority of eigenvalues close to zero; with the other eigenvalues taking large values~\cite{karakida2019universal,pathologic}.
In our work, the randomness stems from sampling from data distributions $ p( \bm x, \bm y )$ --- which follows the principle of Monte Carlo (MC) information geometry~\cite{mcig} that approximates information geometric quantities via MC estimation. We examine a different subject on how the distribution of the FIM on a matrix manifold is affected by finite sampling of the data distribution.

In the literature of NN optimization, a main focus is on deriving a computationally friendly proxy for the FIM.
One can consider the \emph{unit-wise} FIM~\cite{ollivier,roux2007topmoumoute,rfim,amari2019fisher}
(also known as quasi-diagonal FIM~\cite{ollivier}), where a block-diagonal approximation of the FIM is taken to capture intra-neuron curvature information.
Or one can consider the block-diagonal \emph{layer-wise} FIM where each block corresponds to parameters within a layer
~\cite{kurita1994iterative,martens2010deep,pbRNG,martens2015optimizing,heskes2000natural,tnt,karakida2020understanding}.
NN optimizers can approximate the inverse FIM~\cite{wood} or approximate the product of the inverse FIM and the gradient vector~\cite{tnt}.
Much less attention is paid to how related approximations deviate from the true FIM~\cite{spall,varfim} or how optimization is affected by such deviation~\cite{spall2}.
For the univariate case, one can study the asymptotic variance of the Fisher
information~\cite{spall} with the central limit theorem. In deep NNs,
the estimation variance of the FIM can be derived in closed form and bounded~\cite{varfim}.
However, our former analysis~\cite{varfim} has two limitations:
(1) the variance tensors are 4D and can not be easily computed;
(2) only the norm of these tensors are bounded, and
it is not clear how the variance is distributed among individual parameters.
The current work tackles these limitations by focusing on the diagonal elements of the FIM.
Our results can be computed numerically at a reasonable cost in typical learning settings.
We provide novel bounds so that one can quantify
the accuracy of the FIM computation \wrt individual parameters or subgroup of parameters.

Issues of utilizing the empirical FIM to approximate the FIM have been highlighted~\cite{pbRNG,martens}. For example, estimators of the FIM do not in general capture any second-order information about the log-likelihood~\cite{kunstner2020limitations}.
The empirical FIM is a biased estimator and can be connected with our unbiased estimators
via a generalized definition of the Fisher matrix in \cref{sec:ef}.

Alternative to the FIM, the Generalized Gauss-Newton (GGN) matrix --- a Hessian approximator --- was originally motivated through the squared loss for non-linear models~\cite{martens}. The GGN is equivalent to the FIM when a loss function is taken to be the empirical expectation of the negative log-likelihood of \cref{eq:exp}~\cite{heskes2000natural,pbRNG,martens}.
 \section{Variance of Diagonal FIM Estimators}\label{sec:var}

In our notations, all vectors such as $\bm{x}$, $\bm{y}$, and $\bm\theta$ are column vectors.
We use $k$ to index random samples $\bm{x}$ and $\bm{y}$
and use $i$ and $j$ to index the NN weights and biases $\bm\theta$.
We shorthand \( \bm h \defeq \bm h_{\bm\theta}\), \( p(\bm y \mid \bm x) \defeq p(\bm y \mid \bm x ; \bm \theta) \), and \( p(\bm y, \bm x) \defeq p(\bm y, \bm x ; \bm \theta) \) whenever the parameters \( \bm \theta \) is clear from context.
To be consistent, we use `\( \mid \bm x \) conditioning' to distinguish between jointly calculated values versus conditioned values with fixed \( \bm x \).
By default, the derivatives are \wrt $\bm\theta$. For example,
$\partial_i\bm{h}\defeq\partial\bm{h}/\partial\theta_i$ and
$\partial_i^2\bm{h}\defeq\partial^2\bm{h}/\partial\theta_i^2$.
We adopt Einstein notation to express tensor summations,
so that an index appearing as both a subscript and a superscript in the same term indicates a summation.
For example, $x^a y_a$ denotes $\sum_{a}x^ay_a$.
For clarity, we mix standard \( \Sigma \)-sum and Einstein notation.
We denote the variance and covariance of random variables by $\var(\cdot)$ and $\cov(\cdot)$, respectively.

Based on the parametric form of the model in \cref{eq:exp}, the diagonal entries of the FIM estimators in \cref{eq:efima_and_efimb} can be written as\footnote{This and subsequent derivations can be found in the appendix.}:
\begin{align*}
	\efima( \theta_i )
	 & \defeq
	\left( \efima( \bm\theta ) \right)_{ii}
	=
	\frac{1}{N} \sum_{k=1}^N
	\left(
	\frac{\partial F(\bm h(\bm x_{k}))}{\partial \theta_{i}}
	-
	\frac{\partial\bm{h}^a(\bm x_{k})}{\partial\theta_i}
	\cdot
	\bm{t}_a(\bm{y}_k)
	\right)^2; \\
	\efimb(\theta_i)
	 & \defeq
	\left( \efimb( \bm\theta ) \right)_{ii}
	=
	\frac{1}{N} \sum_{k=1}^N \left(
	\frac{\partial^2 F(\bm{h}(\bm{x}_{k}))}{\partial^2\theta_i}
	-
	\frac{\partial^2\bm{h}^a(\bm{x}_{k})}{\partial^2\theta_i} \cdot \bm{t}_a(\bm{y}_k)
	\right).
\end{align*}
Correspondingly, the $i$'th diagonal entry of the FIM $\fim( \bm\theta )$, which
is the expected value of $\efima( \theta_i )$ and $\efimb( \theta_i )$,
is denoted as $\fim( \theta_i )$.
Notation is abused in
$\fim( \theta_i )$, $\efima( \theta_i )$, and $\efimb( \theta_i )$
as they depend on the whole $\bm\theta$ vector rather than solely on $\theta_i$.
Clearly $\efima( \theta_i) \ge 0$, while there is no guarantee for
$\efimb( \theta_i)$ which can be negative.
Our results will be expressed in terms of the (central) moments of $\bm{t}(\bm{y})$:
\begin{align*}
	\bm\eta_{a}(\bm{x})
	\defeq
	\expect_{p(\bm y \mid \bm x)}
	[\bm t_{a}(\bm y)];
	\quad \quad  \quad
	\fim(\bm h \mid \bm x)
	\defeq
	\expect_{p(\bm y \mid \bm x)}[(\bm t(\bm y) - \bm \eta(\bm x))(\bm t(\bm y) - \bm \eta(\bm x))^{\T}]; \\
	\kurt^{p}(\bm t \mid \bm x)
	\defeq
	\expect_{p(\bm y \mid \bm x)}
	\left[ (\bm t( \bm y) - \bm \eta(\bm x)) \otimes (\bm t( \bm y) - \bm \eta(\bm x))
\otimes \; (\bm t( \bm y) - \bm \eta(\bm x)) \otimes (\bm t( \bm y) - \bm \eta(\bm x)) \right],
\end{align*}
where ``\( \otimes \)'' denotes the tensor product.
We denote the covariance of \( \bm t \) \wrt to \( p(\bm y \mid \bm x) \) as \( \cov^{p}(\bm t \mid \bm x) \) --- noting that \( \fim(\bm h \mid \bm x) = \cov^{p}(\bm t \mid \bm x) \).
The 4D tensor $\kurt^{p}(\bm t \mid \bm x)$ denotes the
\( 4^{\mathrm{th}} \) central moment of \( \bm t(\bm y) \) \wrt \( p(\bm y \mid \bm x)\).
These central moments correspond to the cumulants of $\bm{t}(\bm{y})$,
\ie the derivatives of $F$ \wrt the natural parameters $\bm{h}(\bm x)$
of the exponential family.
Therefore, the derivatives of $F$ in $\efima(\theta_i)$ and $\efimb(\theta_i)$ can further be
written in terms of \( \bm\eta(\bm x) \) and \( \fim(\bm h \mid \bm x) \)
following the chain rule.
Practically, $\efima$ and $\efimb$ involves computing the Jacobian $ \partial \bm h(\bm x) / \partial \theta_{i} $ and the Hessian $ \partial^{2} \bm h(\bm x) / \partial^{2} \theta_{i} $, respectively.

In practice, both estimators can be computed via automatic differentiation~\cite{torch,dangel2019backpack}.
In terms of complexity, by restricting to just the diagonal elements $\fim( \theta_i )$,
we need to calculate \( \bigoh(\dim(\bm\theta)) \) elements (originally \( \bigoh(\dim(\bm\theta) \times \dim(\bm\theta)) \) for the full FIM). Although the log-partition function for general exponential family distributions can be complicated, for the ones used in NNs
(determined by the loss functions used in optimization)~\cite{varfim}
the log-partition function $F$ is usually in closed-form; and thus the cumulants $\bm{\eta}(\bm x)$ and $\fim(\bm h \mid \bm x)$
can be calculated efficiently.

Indeed, the primary cost of the estimators comes from evaluating the gradient information of
the NN, given by \( \partial \bm h(\bm x) / \partial \theta_{i} \)
and \( \partial^{2} \bm h(\bm x) / \partial^{2} \theta_{i} \).
The former can be calculated easily. The latter is costly even when restricted to the diagonal elements
of the FIM. With the Hessian's quadratic complexity, in practice approximations are used to reduce
the computational overhead~\cite{becker1988improving,yao2020pyhessian,yao2021adahessian,elsayed2022hesscale}.
In this case, additional error and (potentially) variance may be introduced as a result of the Hessian approximation.
Note, the computational cost of the Hessian can still be manageable for the last few layers close to the output.
By the chain rule, we only require a sub-computational graph from the output
layer to a certain layer to compute the Hessian of that layer.
Despite this, there is still a memory cost that scales quadratically with the number of parameters for non-linear activation functions~\citep{dangel2019backpack}.

The high cost of Hessian computation does not justify refraining from using $\efimb$.
Depending on the setting (chosen loss function), an estimator's variance can outweigh the benefits of lower computational costs~\cite{varfim}.
This is especially true when the FIM is used in an offline setting --- where the Hessian's cost can be tolerated --- to
study, \eg, the singular structure of the neuromanifold~\cite{amarisingular,lightlike},
the curvature of the loss~\cite{efron18}, to quantify model
sensitivity~\cite{nickl23}, and to evaluate the quality of the local
optimum~\cite{karakida2019universal,pathologic}, \etc.

To study the quality of \(\efima(\bm \theta)\) and \(\efimb(\bm \theta)\),
it is natural to examine the variance of the estimators \cite{varfim}:
$
	\edvj(\theta_i \mid \bm x) \defeq\var(\efimj( \theta_i \mid \bm{x}))
$,
where \( \efimj( \theta_i \mid \bm{x}) \defeq \left( \efimj( \bm\theta \mid \bm{x} ) \right)_{ii} \)
($j\in\{1,2\}$) is the $i$'th diagonal element of $\efimj( \bm\theta \mid \bm{x} )$.
Similar to $\efima( \theta_i )$ and $\efimb( \theta_i )$,
$\edvj(\theta_i \mid \bm x)$ and $\efimj( \theta_i \mid \bm{x} )$
depend on the vector $\bm\theta$ and are abuses of notation.
An estimator with a smaller variance indicates that it is more accurate and more likely to be close to the true FIM.
Based on the variance, one can derive sample complexity bounds
of the diagonal FIM via Chebyshev's inequality, see for instance \citet[Section 3.4]{varfim}.

By its definition, $\edvj(\theta_i \mid \bm x)$ has a simple closed form,
which was proved in \cite{varfim} and is restated below.
\begin{lemma}\label{cor:efim_conditional}
	\( \forall\bm x \in \Re^{I} \),
	\( \forall{i}=1,\ldots,\dim(\bm\theta) \),
	\vspace{-.1em} \begin{align}
		\fim(\theta_i \mid \bm{x})
		= & \;
		\partial_i\bm{h}^a(\bm{x})
		\partial_i\bm{h}^b(\bm{x})
		\cdot
		\fim_{ab}(\bm h \mid \bm x), \label{eq:diag_fim}
		\\
\edva(\theta_i \mid \bm x)
		= & \;
		\frac{1}{N}
		\cdot
		\partial_{i}\bm{h}^a(\bm x)
		\partial_{i}\bm{h}^b(\bm x)
		\partial_{i}\bm{h}^c(\bm x)
		\partial_{i}\bm{h}^d(\bm x)\cdot
		\left[
			\kurt^{p}_{abcd}(\bm t \mid \bm x)
			-
			\fim_{ab}(\bm h \mid \bm x)
			\cdot
			\fim_{cd}(\bm h \mid \bm x)
		\right], \label{eq:efima_cond_var} \\
		\edvb(\theta_i \mid \bm x)
		= & \;
		\frac{1}{N}
		\cdot
		\partial^{2}_{i} {\bm h}^{a}(\bm x)
		\partial^{2}_{i} {\bm h}^{b}(\bm x)
		\cdot
		\fim_{ab}(\bm{h} \mid \bm x).
		\label{eq:efimb_cond_var}
	\end{align}
\end{lemma}

Given a fixed \( \bm x \in \Re^{I} \), both \( \edva(\theta_{i} \mid \bm x) \)
and \( \edvb(\theta_{i} \mid \bm x) \) have an order of \( \bigoh(1/N) \),
with \( N \) denoting the number of samples of \( \bm y_{k} \).
They further depend on two factors: \ding{172} the derivatives of the
parameter-output mapping \( \bm\theta \to \bm h \) stored in a $T\times\dim(\bm\theta)$
matrix, either $\partial_{i}\bm{h}^a(\bm x)$ or $\partial^{2}_{i} {\bm h}^{a}(\bm x)$,
where the latter can be expensive to calculate;
and \ding{173} the central moments of \( \bm t(\bm y) \),
whose computation only scales with $T$ (the number of output units)
and is independent to $\dim(\bm\theta)$.

From an information geometry~\cite{aIGI} perspective, $\fim(\bm\theta)$, $\edva(\bm\theta)$,
and $\edvb(\bm\theta)$ are all pullback tensors of different orders.
For example, $\fim(\bm\theta)$ is the pullback tensor of $\fim(\bm{h})$
and the singular semi-Riemannian metric~\cite{lightlike}.
They induce the geometric structures of the neuromanifold (parameterized by
$\bm\theta$) based on the corresponding low dimensional structures of the
exponential family (parameterized by $\bm{h}$).

 \begin{table*}[t]
    \centering
\caption{Exponential family statistics with eigenvalue upper bounds for moments. For classification, \( \sigma(\bm x) \) denotes the softmax of logit \( \bm h(\bm x) \). \textdagger~denotes exact eigenvalues rather than upper bounds.}
    {
    \fontsize{8}{8}\selectfont
        \begin{tabular}{@{}l|lcccc@{}}
        \toprule
        Setting & Exp. Family & Output \( \mathcal{Y} \) & Sufficient Statistic \( \bm{t}(\bm{y}) \) & UB \( \lambda_{\max}(\fim(\bm{h} \mid \bm{x})) \) & UB \( \tilde{\lambda}_{\max}(\kurt(\bm{t} \mid \bm{x})) \) \\
        \midrule
        Regression & (Iso.) Gaussian & \( \Re^{T} \) & \( {\bm y} \) & \( 1 \){\textdagger} & \( 3 \){\textdagger} \\
          \multirow{2}{*}{Classification}
        & \multirow{2}{*}{Categorical}
        & \multirow{2}{*}{\([C] \subset \Re\)}
        & \multirow{2}{*}{\( (\llbracket y = 0 \rrbracket, \ldots, \llbracket y = C \rrbracket) \)}
        & \( \min \left\{ \sigma_{\max}(\bm{x}), \right. \)
        & \( 2 \cdot \min \left\{ \sigma_{\max}(\bm{x}),\right. \) \\
        &&&
        & \(\left.1 - \Vert \sigma(\bm{x}) \Vert_2^2 \right\} \)
        & \(\quad\left.1 - \Vert \sigma(\bm{x}) \Vert_2^2 \right\} \) \\
        \bottomrule
        \end{tabular}
    }
    \label{tab:statistics}
\end{table*}
 \section{Practical Variance Estimation}\label{sec:bounds}

To further understand the dependencies of the derivative and central moment terms,
the FIM \( \fim(\theta_i \mid \bm x) \) and variances of estimators \( \efimj(\theta_i \mid \bm x) \)
can be bounded to strengthen intuition and to provide a computationally
convenient proxy of the interested quantities.
\begin{theorem}
	\label{thm:efim_conditional_bounds}
	\( \forall\bm x \in \Re^{I} \),
	\begin{alignat}{2}
		\Vert \partial_{i} \bm h(\bm x) \Vert_{2}^{2} \cdot \lambda_{\min}(\fim(\bm h \mid \bm x))
		 &
		\leq
		\fim(\theta_{i} \mid \bm x) &   &
		\leq
		\Vert \partial_{i} \bm h(\bm x) \Vert_{2}^{2} \cdot \lambda_{\max}(\fim(\bm h \mid \bm x)),
		\label{thm:fim_bound}          \\[9pt]
\frac{1}{N}
		\cdot \Vert \partial_{i}{\bm h}(\bm x) \Vert^{4}_{2}
		\cdot
		\tilde{\lambda}_{\min}\left( \mathcal{M} \right)
		 &
		\leq
		\edva(\theta_{i} \mid \bm x) &   &
		\leq \frac{1}{N}
		\cdot \Vert \partial_{i}{\bm h}(\bm x) \Vert^{4}_{2}
		\cdot
		\tilde{\lambda}_{\max}\left( \mathcal{M} \right),
\label{thm:fima_element_bound} \\[9pt]
		\frac{1}{N}
		\cdot
		\Vert \partial^{2}_{i} {\bm h}(\bm x) \Vert^{2}_{2}
		\cdot
		\lambda_{\min}(\fim(\bm{h} \mid \bm x))
		 &
		\leq
		\edvb(\theta_{i} \mid \bm x) &   &
		\leq
		\frac{1}{N}
		\cdot
		\Vert \partial^{2}_{i} {\bm h}(\bm x) \Vert^{2}_{2}
		\cdot
		\lambda_{\max}(\fim(\bm{h} \mid \bm x)),
		\label{thm:fimb_element_bound}
	\end{alignat}
	where
	\( \mathcal{M} = \kurt^{p}(\bm t \mid \bm x) - \fim({\bm h} \mid \bm x) \otimes \fim({\bm h} \mid \bm x) \);
	$ \lambda_{\min} $ / $ \lambda_{\max} $ denotes the minimum / maximum matrix eigenvalue;
	and \( \tilde{\lambda}_{\min}, \tilde{\lambda}_{\max} \colon \Re^{T \times T \times T \times T} \rightarrow \Re \) are defined as\begin{align}
		\tilde{\lambda}_{\min}(\mathcal{T}) \defeq \inf_{\bm u: \Vert \bm u \Vert_{2} = 1}  {\bm u}^{a}{\bm u}^{b}{\bm u}^{c}{\bm u}^{d} \mathcal{T}_{abcd}; \quad \textrm{and} \quad
		\tilde{\lambda}_{\max}(\mathcal{T}) \defeq \sup_{\bm u: \Vert \bm u \Vert_{2} = 1}  {\bm u}^{a}{\bm u}^{b}{\bm u}^{c}{\bm u}^{d} \mathcal{T}_{abcd}. \label{eq:var_tensor_eig}
	\end{align}
\end{theorem}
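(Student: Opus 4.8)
The plan is to reduce each of the three two-sided inequalities to a Rayleigh-quotient estimate: I will show that $\fim(\theta_i \mid \bm x)$, $\edvb(\theta_i \mid \bm x)$, and $\edva(\theta_i \mid \bm x)$ are, respectively, a quadratic form in $\partial_i\bm h(\bm x)$, a quadratic form in $\partial^2_{ii}\bm h(\bm x)$, and a quartic form in $\partial_i\bm h(\bm x)$, after which the matrix- and tensor-eigenvalue bounds follow by homogeneity. The common first step is to rewrite the per-sample log-likelihood derivative. Since $F$ is the log-partition function of \cref{eq:exp}, the chain rule together with $\partial F/\partial\bm h^a = \bm\eta_a(\bm x)$ gives $\partial_i\log p(\bm y\mid\bm x) = (\bm t_a(\bm y)-\bm\eta_a(\bm x))\,\partial_i\bm h^a(\bm x)$, i.e. the centred statistic contracted with a column of the Jacobian.

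For \cref{thm:fim_bound}, I substitute this into the definition \cref{eq:fim} and take the expectation, which turns $\fim(\theta_i\mid\bm x)$ into $\partial_i\bm h^a\,\partial_i\bm h^b\,\fim(\bm h\mid\bm x)_{ab}$ by \cref{eq:df2}. Writing $\partial_i\bm h = \|\partial_i\bm h\|_2\,\bm u$ with $\|\bm u\|_2=1$ exposes the Rayleigh quotient $\bm u^\T\fim(\bm h\mid\bm x)\bm u$, which is pinned between $\lambda_{\min}$ and $\lambda_{\max}$ of the PSD matrix $\fim(\bm h\mid\bm x)$; multiplying back by $\|\partial_i\bm h\|_2^2$ gives the claim.

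For the variance bounds I use that, under $\mid\bm x$ conditioning, the samples are \iid, so $\edvj(\theta_i\mid\bm x) = \frac1N\var(\,\cdot\,)$ of a single summand. In \cref{eq:efimbi} the term $\partial^2_{ii}F$ is deterministic and only $\bm t_a(\bm y)$ is random, so $\edvb(\theta_i\mid\bm x) = \frac1N\,\partial^2_{ii}\bm h^a\,\partial^2_{ii}\bm h^b\,\cov^p(\bm t\mid\bm x)_{ab}$; since $\cov^p(\bm t\mid\bm x) = \fim(\bm h\mid\bm x)$, the same Rayleigh-quotient argument applied to $\partial^2_{ii}\bm h$ yields \cref{thm:fimb_element_bound}. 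The genuinely harder case is $\edva$: each summand is the square $Z^2$ of $Z = -(\bm t_a-\bm\eta_a)\partial_i\bm h^a$, so $\var(Z^2) = \expect[Z^4] - (\expect[Z^2])^2$ produces a \emph{quartic} contraction $\partial_i\bm h^a\partial_i\bm h^b\partial_i\bm h^c\partial_i\bm h^d\,(\kurt^p(\bm t\mid\bm x)_{abcd} - \fim(\bm h\mid\bm x)_{ab}\fim(\bm h\mid\bm x)_{cd})$, which is exactly $\frac1N$ times $\partial_i\bm h^{\otimes 4}$ contracted with $\mathcal{M}$.

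The main obstacle — and the reason the tensor eigenvalues of \cref{eq:var_tensor_eig} are introduced — is that a symmetric $4$-tensor is not diagonalisable in the matrix sense, so I cannot invoke ordinary eigenvalues for $\edva$. Instead I again factor $\partial_i\bm h = \|\partial_i\bm h\|_2\,\bm u$ and use degree-$4$ homogeneity: the unit-vector contraction $\bm u^a\bm u^b\bm u^c\bm u^d\mathcal{M}_{abcd}$ lies between $\tilde\lambda_{\min}(\mathcal{M})$ and $\tilde\lambda_{\max}(\mathcal{M})$ by their very definition as the infimum/supremum over the unit sphere, and multiplying by $\|\partial_i\bm h\|_2^4/N$ closes \cref{thm:fima_element_bound}. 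The one step requiring care is the bookkeeping of the fourth-moment expansion, to verify that the cross terms recombine precisely into $\mathcal{M} = \kurt^p(\bm t\mid\bm x) - \fim(\bm h\mid\bm x)^{\otimes 2}$ rather than into some other index pairing.
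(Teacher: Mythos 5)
Your proposal is correct, and its skeleton is the same as the paper's: obtain closed-form contractions for $\fim(\theta_i \mid \bm x)$, $\edva(\theta_i \mid \bm x)$, and $\edvb(\theta_i \mid \bm x)$ (the paper imports these from \citet{varfim}; you re-derive them from the score $(\bm t_a(\bm y) - \bm\eta_a(\bm x))\,\partial_i \bm h^a(\bm x)$, which matches the paper's own appendix derivation), and then exploit homogeneity plus the unit-sphere variational characterizations of $\lambda_{\min}/\lambda_{\max}$ and $\tilde{\lambda}_{\min}/\tilde{\lambda}_{\max}$. The one place where your route differs --- and is in fact preferable --- is \cref{thm:fima_element_bound}: you contract the normalized vector $\bm u = \partial_i \bm h(\bm x)/\Vert \partial_i \bm h(\bm x)\Vert_2$ directly against the tensor $\mathcal{M} = \kurt^{p}(\bm t \mid \bm x) - \fim(\bm h \mid \bm x) \otimes \fim(\bm h \mid \bm x)$, so the definition \cref{eq:var_tensor_eig} immediately pins the contraction between $\tilde{\lambda}_{\min}(\mathcal{M})$ and $\tilde{\lambda}_{\max}(\mathcal{M})$, which is exactly the stated bound. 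The paper's appendix proof instead bounds the $\kurt^{p}$-contraction and the $(\fim \otimes \fim)$-contraction separately and then combines them; as written, that argument yields the looser constants $\tilde{\lambda}_{\min}(\kurt^{p}) - \lambda^{2}_{\max}(\fim)$ and $\tilde{\lambda}_{\max}(\kurt^{p}) - \lambda^{2}_{\min}(\fim)$, i.e., the statement of \cref{cor:variation_lambda_bounds} rather than of \cref{thm:fima_element_bound} itself. Your direct application of the variational definition to $\mathcal{M}$ is therefore what actually matches the theorem as claimed; the remaining pieces (determinism of $\partial^{2}_{ii} F$ under fixed $\bm x$ reducing $\edvb$ to a quadratic form in $\partial^{2}_{ii}\bm h$, and the fourth-moment bookkeeping $\var(Z^2) = \expect[Z^4] - (\expect[Z^2])^2$ producing $\mathcal{M}$) coincide with the paper's derivations.
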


To help ground \cref{thm:efim_conditional_bounds}, we summarize different sufficient statistics quantities for common learning settings in \cref{tab:statistics} --- with further learning setting implications presented in \cref{sec:learning_setting}.
Note that \cref{thm:fima_element_bound,thm:fimb_element_bound} (and many subsequent results) can be further generalized for off-diagonal elements. See \cref{sec:offdiagonal} for details.
Compared to prior work \citet{varfim}, \cref{thm:efim_conditional_bounds}
provides bounds for individual elements of the variance tensors, where
the NN weights (the derivatives) and sufficient statistics (the eigenvalues)
are neatly disentangled into a product.
From a technical point of view, this comes from a difference in proof technique:
we utilize variational definitions and computations of eigenvalues
to establish bounds whereas \citet{varfim} primarily applies H\"older's inequality.

We stress that $ \tilde{\lambda}_{\min}(\mathcal{T}) $ and $ \tilde{\lambda}_{\max}(\mathcal{T}) $ in \cref{eq:var_tensor_eig} correspond to tensor eigenvalues iff \( \mathcal{T} \) is a supersymmetric tensor~\cite{lim2005singular} (\aka totally symmetric tensor), \ie, indices are permutation invariant. In this case, \cref{eq:var_tensor_eig} is exactly the maximum and minimum Z-eigenvalues. These variational forms mirror the Courant-Fischer min-max theorem for symmetric matrices~\cite{taormt}. In the case of \cref{thm:fima_element_bound}, with \( \mathcal{M} = \kurt^{p}(\bm t \mid \bm x) - \fim({\bm h} \mid \bm x) \otimes \fim({\bm h} \mid \bm x) \), the tensor is not a supersymmetric tensor in general.
Despite this, we note that the lower bound of \cref{thm:fima_element_bound} is non-trivial.
A weaker bound than \cref{thm:fima_element_bound}
can be established based on the Z-eigenvalue of the
supersymmetric tensor $\kurt^{p}(\bm t \mid \bm x)$.
\begin{corollary}
	\label{cor:variation_lambda_bounds}
	$ \forall\bm x \in \Re^{I} $,
	\begin{align}
		\tilde{\lambda}_{\min}\left( \kurt^{p}(\bm t \mid \bm x) - \fim({\bm h} \mid \bm x) \otimes \fim({\bm h} \mid \bm x)\right)
		 & \geq                                                                                                                                                                      \max\left\{ 0, \tilde{\lambda}_{\min}\left( \kurt^{p}(\bm t \mid \bm x) \right) - \lambda^{2}_{\max} \left(\fim({\bm h} \mid \bm x)\right) \right\}; \label{eq:zeigen_lower} \\
		\tilde{\lambda}_{\max}\left( \kurt^{p}(\bm t \mid \bm x) - \fim({\bm h} \mid \bm x) \otimes \fim({\bm h} \mid \bm x)\right)
		 & \leq                                                                                                                                                                      \tilde{\lambda}_{\max}\left( \kurt^{p}(\bm t \mid \bm x) \right) - \lambda^{2}_{\min} \left(\fim({\bm h} \mid \bm x)\right). \label{eq:zeigen_upper}
	\end{align}
\end{corollary}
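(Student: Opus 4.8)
The plan is to reduce both inequalities to elementary facts about a single scalar random variable obtained by contracting everything with a unit vector $\bm u$. Fix $\bm u$ with $\Vert\bm u\Vert_{2}=1$ and define $s \defeq {\bm u}^{a}(\bm t_{a}(\bm y) - \bm\eta_{a}(\bm x))$ as a random variable under $p(\bm y \mid \bm x)$. First I would record the moment interpretations of the two contractions in \cref{eq:var_tensor_eig}: since $\kurt^{p}(\bm t \mid \bm x)$ is the fourth central moment tensor, ${\bm u}^{a}{\bm u}^{b}{\bm u}^{c}{\bm u}^{d}\,\kurt^{p}(\bm t \mid \bm x)_{abcd} = \expect_{p(\bm y\mid\bm x)}[s^{4}]$, and since $\fim(\bm h \mid \bm x)=\cov^{p}(\bm t \mid \bm x)$, we have ${\bm u}^{a}{\bm u}^{b}\,\fim(\bm h\mid\bm x)_{ab} = \expect_{p(\bm y\mid\bm x)}[s^{2}] = {\bm u}^{\T}\fim(\bm h\mid\bm x)\,\bm u$. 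Because a tensor product of matrices satisfies $(\fim\otimes\fim)_{abcd}=\fim_{ab}\fim_{cd}$, its contraction factorizes: ${\bm u}^{a}{\bm u}^{b}{\bm u}^{c}{\bm u}^{d}(\fim\otimes\fim)_{abcd} = ({\bm u}^{\T}\fim\bm u)^{2}$ (note this holds even though $\fim\otimes\fim$ is not supersymmetric, since contraction with ${\bm u}^{\otimes 4}$ only probes the symmetric part). Hence, writing $\mathcal{M}=\kurt^{p}(\bm t \mid \bm x)-\fim(\bm h \mid \bm x)\otimes\fim(\bm h \mid \bm x)$, the quartic form is exactly ${\bm u}^{a}{\bm u}^{b}{\bm u}^{c}{\bm u}^{d}\mathcal{M}_{abcd} = \expect[s^{4}] - (\expect[s^{2}])^{2} = \var(s^{2})$.

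With this identity, both bounds follow by controlling $({\bm u}^{\T}\fim\bm u)^{2}$ uniformly over the sphere. Since $\fim(\bm h \mid \bm x)$ is PSD, for every unit $\bm u$ we have ${\bm u}^{\T}\fim\bm u \in [\lambda_{\min}(\fim),\lambda_{\max}(\fim)]$ with both endpoints nonnegative, so $({\bm u}^{\T}\fim\bm u)^{2} \in [\lambda_{\min}^{2}(\fim),\lambda_{\max}^{2}(\fim)]$. For the upper bound \cref{eq:zeigen_upper}, I would use the pointwise inequality ${\bm u}^{a}{\bm u}^{b}{\bm u}^{c}{\bm u}^{d}\mathcal{M}_{abcd} = {\bm u}^{a}{\bm u}^{b}{\bm u}^{c}{\bm u}^{d}\kurt^{p}(\bm t \mid \bm x)_{abcd} - ({\bm u}^{\T}\fim\bm u)^{2} \le {\bm u}^{a}{\bm u}^{b}{\bm u}^{c}{\bm u}^{d}\kurt^{p}(\bm t \mid \bm x)_{abcd} - \lambda_{\min}^{2}(\fim)$ and take the supremum over $\bm u$; because $\lambda_{\min}^{2}(\fim)$ is a $\bm u$-independent constant, the right-hand side supremum is exactly $\tilde\lambda_{\max}(\kurt^{p}(\bm t \mid \bm x)) - \lambda_{\min}^{2}(\fim)$. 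Symmetrically, for the lower bound I would bound ${\bm u}^{a}{\bm u}^{b}{\bm u}^{c}{\bm u}^{d}\mathcal{M}_{abcd} \ge {\bm u}^{a}{\bm u}^{b}{\bm u}^{c}{\bm u}^{d}\kurt^{p}(\bm t \mid \bm x)_{abcd} - \lambda_{\max}^{2}(\fim)$ and take the infimum, yielding $\tilde\lambda_{\min}(\kurt^{p}(\bm t \mid \bm x)) - \lambda_{\max}^{2}(\fim)$.

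Finally, the $\max\{0,\cdot\}$ in \cref{eq:zeigen_lower} is immediate from the variance interpretation established in the first step: since ${\bm u}^{a}{\bm u}^{b}{\bm u}^{c}{\bm u}^{d}\mathcal{M}_{abcd}=\var(s^{2})\ge 0$ for every unit $\bm u$, the infimum defining $\tilde\lambda_{\min}(\mathcal{M})$ is itself nonnegative, and combining this with the previous paragraph gives $\tilde\lambda_{\min}(\mathcal{M})\ge\max\{0,\ \tilde\lambda_{\min}(\kurt^{p}(\bm t \mid \bm x))-\lambda_{\max}^{2}(\fim)\}$. I do not expect a genuine obstacle: the proof is essentially the Courant--Fischer-style argument adapted to the quartic form. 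The only point that needs care is that an infimum (resp.\ supremum) of a difference cannot be split into separate extremizations of each term; the argument stays valid precisely because the subtracted term $({\bm u}^{\T}\fim\bm u)^{2}$ is replaced by a \emph{$\bm u$-independent} eigenvalue bound before extremizing, which is also the source of the slack that makes \cref{eq:zeigen_lower,eq:zeigen_upper} weaker than the tight bounds in \cref{thm:fima_element_bound}.
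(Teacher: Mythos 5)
Your proof is correct and follows essentially the same route as the paper's: both arguments bound the subtracted term $({\bm u}^{\T}\fim({\bm h}\mid\bm x)\,{\bm u})^{2}$ uniformly over the unit sphere by $\lambda_{\min}^{2}$ / $\lambda_{\max}^{2}$ before extremizing (your pointwise-bound-then-extremize step is the same inequality as the paper's splitting of the $\inf$/$\sup$ over the difference), and both establish the $\max\{0,\cdot\}$ part by recognizing the quartic form as $\var(s^{2})\ge 0$ for the scalar contraction $s$ (the paper writes this as $\expect_{p}[({\bm u}^{a}{\bm u}^{b}(\bm v_{a}\bm v_{b}-\fim_{ab}))^{2}]\ge 0$, which is the same quantity).
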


The tensor eigenvalue is typically expensive to calculate.
However in our case, the eigenvalues \( \tilde{\lambda}_{\min}(\kurt^{p}(\bm t \mid \bm x)) \) and \( \tilde{\lambda}_{\max}(\kurt^{p}(\bm t \mid \bm x)) \) on the RHS of
\cref{eq:zeigen_lower,eq:zeigen_upper} can be calculated via \citet{kolda2014adaptive}'s method with \( \bigoh(T^{4} / 4!) \) complexity.
In this paper, we assume $T$ is reasonably bounded
and are mainly concerned with the complexity \wrt $\dim(\bm\theta)$. From this perspective, all our bounds scale linearly \wrt $\dim(\bm\theta)$, and thus can be computed efficiently.

When \( \bm{t}(\bm y) - \bm{\eta}(\bm x) \) is bounded (\eg in classification), we can upper bound \( \tilde{\lambda}_{\max}\left( \kurt^{p}(\bm t \mid \bm x)\right) \) with \( \lambda_{\max} \left( \fim({\bm h} \mid \bm x) \right) \), which is easier to calculate.
\begin{proposition}
	\label{prop:bounded_suff_stat}
	Suppose \( \Vert \bm{t}(\bm{y}) - \bm{\eta}(\bm x) \Vert^2_2 \leq B \). Then,
$
		\tilde{\lambda}_{\max}\left( \kurt^{p}(\bm t \mid \bm x) \right) \leq B \lambda_{\max} \left( \fim({\bm h} \mid \bm x) \right) \leq B^2.
	$
\end{proposition}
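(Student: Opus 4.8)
The plan is to unfold the supersymmetric tensor contraction defining $\tilde{\lambda}_{\max}$ as an ordinary fourth moment of a scalar projection, and then split that fourth power by Cauchy–Schwarz. Writing $\bm{s} \defeq \bm{t}(\bm{y}) - \bm{\eta}(\bm{x})$, the component form of the kurtosis tensor is $[\kurt^{p}(\bm{t}\mid\bm{x})]_{abcd} = \expect_{p(\bm{y}\mid\bm{x})}[\bm{s}_a \bm{s}_b \bm{s}_c \bm{s}_d]$, so the variational expression \cref{eq:var_tensor_eig} evaluated at this tensor becomes
\[
    \bm{u}^{a}\bm{u}^{b}\bm{u}^{c}\bm{u}^{d}\,[\kurt^{p}(\bm{t}\mid\bm{x})]_{abcd}
    = \expect_{p(\bm{y}\mid\bm{x})}\!\left[ (\bm{u}^{a}\bm{s}_a)^{4} \right] .
\]
Thus $\tilde{\lambda}_{\max}(\kurt^{p}(\bm{t}\mid\bm{x}))$ is simply the supremum over unit $\bm{u}$ of the fourth moment of the one-dimensional projection $\bm{u}^{\T}\bm{s}$.

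First I would peel off one squared factor using the hypothesis. For any unit $\bm{u}$, Cauchy–Schwarz gives $(\bm{u}^{\T}\bm{s})^{2} \leq \Vert\bm{u}\Vert_{2}^{2}\Vert\bm{s}\Vert_{2}^{2} = \Vert\bm{s}\Vert_{2}^{2} \leq B$ pointwise over the support of $p(\bm{y}\mid\bm{x})$. Hence $(\bm{u}^{\T}\bm{s})^{4} \leq B\,(\bm{u}^{\T}\bm{s})^{2}$ almost surely, and taking expectations yields
\[
    \expect\!\left[(\bm{u}^{\T}\bm{s})^{4}\right] \leq B\cdot\expect\!\left[(\bm{u}^{\T}\bm{s})^{2}\right] = B\cdot\bm{u}^{\T}\fim(\bm{h}\mid\bm{x})\,\bm{u},
\]
where the last equality is the definition \cref{eq:df2} of $\fim(\bm{h}\mid\bm{x})$ as the covariance of $\bm{s}$. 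Taking the supremum over unit $\bm{u}$ and invoking the Rayleigh–Ritz characterization of $\lambda_{\max}$ delivers the first inequality $\tilde{\lambda}_{\max}(\kurt^{p}(\bm{t}\mid\bm{x})) \leq B\cdot\lambda_{\max}(\fim(\bm{h}\mid\bm{x}))$.

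For the second inequality I would reuse the same scalar bound one order lower: since $(\bm{u}^{\T}\bm{s})^{2}\leq B$ almost surely, taking expectations gives $\bm{u}^{\T}\fim(\bm{h}\mid\bm{x})\,\bm{u}\leq B$ for every unit $\bm{u}$, and the supremum yields $\lambda_{\max}(\fim(\bm{h}\mid\bm{x}))\leq B$; multiplying through by $B$ closes the chain. There is essentially no hard step here — the entire argument rests on recognizing the tensor contraction as a fourth moment and then applying Cauchy–Schwarz to factor the fourth power into a bounded scalar times a quadratic form. The only point requiring mild care is confirming that $\Vert\bm{s}\Vert_{2}^{2}\leq B$ is a pointwise (almost-sure) statement, so that the inequality may legitimately be moved inside the expectation before integrating; this is precisely what the boundedness assumption supplies.
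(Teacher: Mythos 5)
Your proof is correct and takes essentially the same approach as the paper's: both identify the tensor contraction with the fourth moment $\expect_{p}\left[ (\bm{u}^{\T}\bm{v})^{4} \right]$ of the scalar projection, split off one squared factor via Cauchy--Schwarz to obtain the pointwise bound $(\bm{u}^{\T}\bm{v})^{2} \leq B$, and conclude by taking expectations and the supremum over unit vectors. The only difference is cosmetic --- you spell out the final step $\lambda_{\max}\left(\fim(\bm{h} \mid \bm{x})\right) \leq B$ explicitly, which the paper leaves implicit.
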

As long as the sufficient statistics $\bm{t}(\bm{y})$ has bounded norm \( \Vert \bm t \Vert_2 \),
we have that \( \Vert \bm{t}(\bm{y}) - \bm{\eta}(\bm x) \Vert^2_2 \le 4 \Vert \bm t \Vert_2^2 < \infty \). A similar lower bound
can be established for the minimum tensor eigenvalue \( \tilde{\lambda}_{\min}\left( \kurt^{p}(\bm t \mid \bm x) \right) \geq \lambda^2_{\min} \left( \fim({\bm h} \mid \bm x) \right) \), but this ends up being trivial when applying \cref{cor:variation_lambda_bounds}'s lower bound, \cref{eq:zeigen_lower}.

Examining \cref{thm:efim_conditional_bounds} reveals several trade-offs.
An immediate observation is that the first order gradients of \( \bm h(\bm {x}) \)
correspond to the robustness of \( \bm h \) to parameter misspecification (\wrt an input \( \bm x \)).
As such, from the bounds in \cref{thm:fim_bound,thm:fima_element_bound}, the scale of \( \fim(\theta_{i} \mid \bm {x}) \) and \( \edva(\theta_i \mid \bm{x}) \) will be large when small shifts in parameter space yield large changes in the output \( \bm h(\bm x) \).
Another observation is how the spectrum of \( \fim(\bm h \mid \bm x) \) affects the scale of \( \fim(\theta_i \mid \bm x) \) and the estimator variances.
In particular, when \( \lambda_{\min}\left( \fim(\bm h \mid \bm x) \right) \) increases, the scale of \( \edva(\theta_i \mid \bm x) \) decreases but the scale of \( \fim(\theta_{i} \mid \bm {x}) \) and \( \edvb(\theta_i \mid \bm{x}) \) increases.
When \( \lambda_{\max}\left( \fim(\bm h \mid \bm x) \right) \) decreases, then the opposite scaling occurs.
With these two observations, there is a tension in how the scale of \( \fim(\theta_i \mid \bm x) \) follows the different variances \( \edva(\theta_i \mid \bm x) \) and \( \edvb(\theta_i \mid \bm x) \).
The element-wise FIM $\fim(\theta_{i} \mid \bm x)$ follows \( \edva(\theta_i \mid \bm x) \) in terms of the scale of NN derivatives \( \Vert \partial_i \bm h(\bm x) \Vert_2 \); at the same time, $\fim(\theta_{i} \mid \bm x)$ follows \( \edvb(\theta_i \mid \bm x) \) in terms of the spectrum of sufficient statistics moment \( \fim(\bm h \mid \bm x) \).

\begin{remark}\label{remark:lastlayer}
Typically, $\bm{h}$ is the linear output units: $\bm{h}(\bm{x})=\bm{W}_{-1}\bm{h}_{-1}(\bm{x})$,
	where $\bm{W}_{-1}$ is the weights of the last layer, and $\bm{h}_{-1}(\bm{x})$ is
	the second last layer's output. We have
	\( \edvb(\theta_i \mid \bm{x}) = 0 \leq \edva(\theta_i \mid \bm{x}) \) for
	any $\theta_i$ in $\bm{W}_{-1}$. A smaller variance \( \edvb(\theta_i \mid \bm{x})\)
	is guaranteed for the last layer regardless of the choice of the exponential family in \cref{eq:exp}.
\end{remark}

\begin{remark}\label{remark:secondlastlayer}
	$\bm{h}(\bm{x})=\bm{w}_{-1}^{j}\phi(\bm{h}^\top_{-2}(\bm{x})
		\bm{w}_{-2}^j+C_{-2})+\bm{c}_{-1}$ defines the NN mapping \wrt the $j$'th neuron in the second last layer,
where $\bm{w}_{-2}^j$ and $\bm{w}_{-1}^j$ are
	incoming and outgoing links of the interested neuron, respectively;
	$\bm{h}_{-2}(\bm{x})$ is the output of the third last layer;
	and $\phi$ is the activation function.
	The `constants' $C_{-2}$ and $\bm{c}_{-1}$ denote an aggregation of all terms which are independent of $\bm{w}_{-2}^j$ and $\bm{w}_{-1}^j$ in their respective layers.
	The Hessian of $\bm{h}_{k}(\bm{x})$ \wrt $\bm{w}_{-2}^j$ is
	$\partial^2\bm{h}_{k}(\bm{x}) =
		(\bm{w}_{-1}^{j})_k \cdot \phi''(\bm{h}^\top_{-2}(\bm{x}) \bm{w}_{-2}^{j} +C_{-2}) \cdot (\bm{h}_{-2}(\bm{x})\bm{h}_{-2}^\top(\bm{x}))$.
	By \cref{thm:efim_conditional_bounds}, \( \edvb(\theta_i \mid \bm{x})\)
	can be arbitrarily small depending on
	$\phi''(\bm{h}^\top_{-2}(\bm{x}) \bm{w}_{-2}^{j} +C_{-2})$.
For example, if $\phi(t)=1/(1+\exp(-t))$, then
	$\phi''(t)=\phi(t)(1-\phi(t))(1-2\phi(t))$.
	In this case, for a neuron in the second last layer, a sufficient condition for \( \edvb(\theta_i \mid \bm{x}) = 0\)
	(and having $\efimb$ favored against $\efima$) is $\bm{h}^\top_{-2}(\bm{x}) \bm{w}_{-2}^{j} +C_{-2} = 0$ for
	the neuron's pre-activation.
	When the pre-activation value is saturated (\( - \infty \) or \( \infty \)), we also have that \( \edva(\theta_i \mid \bm{x}) = \edvb(\theta_i \mid \bm{x}) = 0\).
Alternatively, suppose that \( \phi(t) = \textrm{SoftPlus}(t) \defeq \log(1 + \exp(t)) \), a continuous relaxation of \( \relu \), then \( \phi''(t) = \phi'(t)(1-\phi'(t)) \) where \( \phi'(t) =1/(1+\exp(-t)) \).
	Then a sufficient condition for $\edvb(\theta_i \mid \bm{x}) = 0$ with $\edva(\theta_i \mid \bm{x}) \neq 0$ for a neuron in the second last layer is $\bm{h}^\top_{-2}(\bm{x}) \bm{w}_{-2}^{j} +C_{-2} \rightarrow +\infty$.
\end{remark}

These observations are further clarified by looking at related quantities over multiple parameters.
So far we have only examined the variance of the FIM element-wise \wrt parameters $\theta_i$. To study all parameters $\bm \theta$ jointly, we consider the \emph{trace variances} of the FIM estimators: for any  $j\in\{1,2\}$,
\( \mathcal{V}_j(\bm \theta \mid \bm x) \)
denotes the trace of the covariance matrix of
\( \diag(\efimj( \bm \theta \mid \bm{x})) \), where \( \diag(\cdot) \) extracts a matrix's diagonal elements into a column vector.
We present upper bounds of these joint quantities.

\begin{corollary}\label{cor:full_var_scale_bound}
	For any \( \bm x \in \Re^{I} \),
	\begingroup\makeatletter\def\f@size{9}\check@mathfonts
	\begin{align}
		\trace\left(\fim(\bm \theta \mid \bm x)\right)
		 & \leq
		\Vert
		\partial {\bm h}(\bm x)
		\Vert_{F}
		\cdot
		\min \left \{
		\trace\left( \fim({\bm h} \mid \bm x) \right),
		\Vert
		\partial {\bm h}(\bm x)
		\Vert_{F}
		\cdot
		{\lambda}_{\max}\left(\fim({\bm h} \mid \bm x)\right)
		\right \}; \label{eq:frobenius_fim}
		\\
		\edva(\bm \theta \mid \bm x)
		 & \leq
		\frac{1}{N}
		\cdot
		\Vert
		\partial {\bm h}(\bm x)
		\Vert_{F}^{2}
		\cdot
		\min \left \{ \sum_{t,u = 1}^{T} \kurt_{ttuu}^{p}(\bm t \mid \bm x)
		- \Vert \fim({\bm h} \mid \bm x) \Vert_F^2,
		\Vert
		\partial {\bm h}(\bm x)
		\Vert_{F}^{2}
		\cdot
		\tilde{\lambda}_{\max}\left(\mathcal{M}\right)
		\right\}; \label{eq:frobenius_var_efima}
		\\
		\edvb(\bm \theta \mid \bm x)
		 & \leq
		\frac{1}{N}
		\cdot
		\Vert \dhess(\bm h \mid \bm x) \Vert_{F}
		\cdot
		\min \left\{
		\trace(\fim(\bm{h} \mid \bm x)),
		\Vert \dhess(\bm h \mid \bm x) \Vert_{F}
		\cdot
		\lambda_{\max}(\fim(\bm{h} \mid \bm x))
		\right\}, \label{eq:frobenius_var_efimb}
	\end{align}
	\endgroup
	where
	\( \dhess(\bm h \mid \bm x) \defeq (\diag(\hess(\bm h_{1} \mid \bm x)), \ldots, \diag(\hess(\bm h_{T} \mid \bm x)))\)
	and
	\( \Vert \cdot \Vert_{F} \) is the Frobenius norm.
\end{corollary}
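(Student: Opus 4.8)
The plan is to reduce all three inequalities to a single canonical object --- the trace of a product of a PSD "output-moment" matrix/tensor with a PSD Gram object assembled from the network derivatives --- and then to obtain each minimum by taking the smaller of two complementary trace inequalities.

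First I would record the exact element-wise identities underlying \cref{thm:efim_conditional_bounds}. Writing $\partial_i\bm h$ for the $i$-th column of the Jacobian and using $\partial F/\partial\bm h^a=\bm\eta_a$ together with $\cov^{p}(\bm t\mid\bm x)=\fim(\bm h\mid\bm x)$, one has $\fim(\theta_i\mid\bm x)=(\partial_i\bm h)^{\T}\fim(\bm h\mid\bm x)(\partial_i\bm h)$ and, since $\partial^{2}_{ii}F$ is constant in $\bm y$, $\edvb(\theta_i\mid\bm x)=\tfrac1N(\partial^{2}_{ii}\bm h)^{\T}\fim(\bm h\mid\bm x)(\partial^{2}_{ii}\bm h)$; the first estimator yields the quartic form $\edva(\theta_i\mid\bm x)=\tfrac1N\,\mathcal M_{abcd}\,(\partial_i\bm h)^{a}(\partial_i\bm h)^{b}(\partial_i\bm h)^{c}(\partial_i\bm h)^{d}$ with $\mathcal M=\kurt^{p}(\bm t\mid\bm x)-\fim(\bm h\mid\bm x)\otimes\fim(\bm h\mid\bm x)$. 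Summing over $i$ collapses the FIM and $\edvb$ quantities to $\trace(\fim(\bm h\mid\bm x)\,G)$ with PSD Gram matrices $G=\partial\bm h(\partial\bm h)^{\T}$ resp.\ $G=\sum_i(\partial^{2}_{ii}\bm h)(\partial^{2}_{ii}\bm h)^{\T}$, whose traces are $\|\partial\bm h\|_F^{2}$ and $\|\dhess(\bm h\mid\bm x)\|_F^{2}$; the $\edva$ quantity becomes the matricized quadratic $\trace(\overline{\mathcal M}\,\bar G)$, where $\overline{\mathcal M}$ is the $T^{2}\times T^{2}$ matrix with entries $\overline{\mathcal M}_{(ab)(cd)}=\mathcal M_{abcd}$ and $\bar G=\sum_i v_iv_i^{\T}$ with $(v_i)_{ab}=(\partial_i\bm h)^{a}(\partial_i\bm h)^{b}$.

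With everything in the form $\trace(A\,B)$ ($A$ symmetric, $B$ PSD), the two branches of each minimum come from two inequalities. The $\lambda_{\max}$-branch is the operator--trace bound $\trace(A\,B)\le\lambda_{\max}(A)\,\trace(B)$, which for the FIM and $\edvb$ produces the $\lambda_{\max}(\fim(\bm h\mid\bm x))$ terms with $\trace(B)$ equal to $\|\partial\bm h\|_F^{2}$ resp.\ $\|\dhess(\bm h\mid\bm x)\|_F^{2}$; for $\edva$ I would instead invoke the Z-eigenvalue variational form \cref{eq:var_tensor_eig} to bound each quartic term by $\tilde\lambda_{\max}(\mathcal M)\,\|\partial_i\bm h\|_2^{4}$ and then use $\sum_i\|\partial_i\bm h\|_2^{4}\le\|\partial\bm h\|_F^{4}$. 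The second, "trace/kurtosis" branch comes from the Frobenius Cauchy--Schwarz inequality $\trace(A\,B)\le\|A\|_F\,\|B\|_F$ together with the PSD bound $\|\fim(\bm h\mid\bm x)\|_F\le\trace(\fim(\bm h\mid\bm x))$; the essential identity is that the trace of the matricized fourth moment is $\trace(\overline{\mathcal M})=\sum_{a,b}\mathcal M_{abab}=\sum_{t,u}\kurt^{p}_{ttuu}(\bm t\mid\bm x)-\|\fim(\bm h\mid\bm x)\|_F^{2}$, where I use the supersymmetry of $\kurt^{p}$ to equate $\sum_{t,u}\kurt_{ttuu}=\sum_{a,b}\kurt_{abab}$. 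Retaining the smaller of the two branches in each case produces \cref{eq:frobenius_fim,eq:frobenius_var_efima,eq:frobenius_var_efimb}.

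The hard part will be the $\edva$ bound. Its variance is a contraction against the \emph{non}-supersymmetric tensor $\mathcal M$, so neither a plain matrix eigenvalue argument nor the Z-eigenvalue of \cref{cor:variation_lambda_bounds} applies verbatim; the work is to choose the matricization $\overline{\mathcal M}$ so that its "operator'' direction reproduces $\tilde\lambda_{\max}(\mathcal M)$ rather than the looser $\lambda_{\max}(\overline{\mathcal M})$, while its trace reproduces $\sum_{t,u}\kurt^{p}_{ttuu}-\|\fim(\bm h\mid\bm x)\|_F^{2}$, and simultaneously to control the quartic Gram trace $\sum_i\|\partial_i\bm h\|_2^{4}$ against $\|\partial\bm h\|_F^{4}$. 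This last bookkeeping --- matching the powers of the derivative norms coming from the Gram object to those in the stated bounds --- is the delicate step; by contrast the FIM and $\edvb$ bounds are essentially the same two-line argument, since $\edvb(\theta_i\mid\bm x)$ is the same quadratic form as $\fim(\theta_i\mid\bm x)$ with $\partial^{2}_{ii}\bm h$ replacing $\partial_i\bm h$.
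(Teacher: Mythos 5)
Your reduction of all three quantities to traces of products --- $\trace\bigl(\fim(\bm h\mid\bm x)\,G\bigr)$ with $G=J^{\T}J$, $J^{ia}=\partial_i\bm h^a(\bm x)$, and the matricized form $\trace\bigl(\overline{\mathcal{M}}\,\bar G\bigr)$ --- is exactly the paper's route (\cref{thm:full_var_trace_bound} in the appendix, proved via Ruhe's trace inequality), and your $\lambda_{\max}$/Z-eigenvalue branches are correct as written. Indeed, your per-parameter use of \cref{eq:var_tensor_eig} followed by $\sum_i\Vert\partial_i\bm h(\bm x)\Vert_2^4\le\Vert\partial\bm h(\bm x)\Vert_F^4$ delivers the stated constant $\tilde{\lambda}_{\max}(\mathcal{M})$ in \cref{eq:frobenius_var_efima}, whereas the paper's own proof substitutes the weaker constant $\tilde{\lambda}_{\max}(\kurt^{p}(\bm t\mid\bm x))-\lambda_{\min}^{2}(\fim(\bm h\mid\bm x))$ from \cref{cor:variation_lambda_bounds}; your trace identity $\trace(\overline{\mathcal{M}})=\sum_{t,u}\kurt^{p}_{ttuu}(\bm t\mid\bm x)-\Vert\fim(\bm h\mid\bm x)\Vert_F^2$ (via supersymmetry of $\kurt^{p}$) is also right.

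The genuine gap is in your ``trace/kurtosis'' branch, and it sits precisely at the power bookkeeping you yourself flagged as delicate. Cauchy--Schwarz plus the PSD bound $\Vert\fim(\bm h\mid\bm x)\Vert_F\le\trace(\fim(\bm h\mid\bm x))$ gives $\trace(\fim(\bm h\mid\bm x)\,G)\le\trace(\fim(\bm h\mid\bm x))\cdot\Vert G\Vert_F$; to reach \cref{eq:frobenius_fim} you must then bound $\Vert J^{\T}J\Vert_F$ by $\Vert J\Vert_F$, but only $\Vert J^{\T}J\Vert_F\le\Vert J\Vert_F^{2}$ holds (with equality at rank one), so your argument yields $\Vert\partial\bm h(\bm x)\Vert_F^{2}\cdot\trace(\fim(\bm h\mid\bm x))$ --- one Frobenius power more than stated --- and the analogous extra power appears in \cref{eq:frobenius_var_efima,eq:frobenius_var_efimb}. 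No bookkeeping can close this: the left-hand sides are homogeneous of degree $2$ in $J$ (degree $4$ for $\edva$, degree $2$ in $\dhess$ for $\edvb$) while the stated first branches are of degree $1$ (resp.\ $2$, $1$), so rescaling $J\to cJ$ (e.g.\ reparameterizing $\bm\theta\to\bm\theta/c$, which leaves $\bm h(\bm x)$ and hence all exponential-family moments fixed) makes the first branch active and falsifies the stated inequality for large $c$. You are in good company: the paper's own proof commits the same error, invoking $s_{\max}(A)\le\Vert A\Vert_F$ even though, with its definition $s_{\max}(A)=\lambda_{\max}(A^{\T}A)=\Vert A\Vert_2^{2}$, this fails whenever $\Vert A\Vert_2^{2}>\Vert A\Vert_F$. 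What your argument (and a corrected version of the paper's) actually proves is the corollary with $\Vert\partial\bm h(\bm x)\Vert_F$ and $\Vert\dhess(\bm h\mid\bm x)\Vert_F$ replaced by their squares inside the first argument of each $\min$ --- at which point the $\min$ becomes redundant, since $\lambda_{\max}\le\trace$ and $\tilde{\lambda}_{\max}(\mathcal{M})\le\lambda_{\max}(\overline{\mathcal{M}})\le\trace(\overline{\mathcal{M}})$ for these PSD objects.
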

This upper bound comes from integrating the parameter-wise variances in \cref{thm:efim_conditional_bounds} and incorporating a trace variance bound which utilizes the full spectrum of the NN derivatives and sufficient statistics quantities. This is fully depicted in \cref{thm:full_var_trace_bound}.
Lower bounds can also be derived in terms of singular values (deferred to the Appendix).
Note the upper bounds in \cref{cor:full_var_scale_bound} can be improved by expressing the \( \min \) function's first term with singular value quantities.

Having the \( \min \) function in \cref{cor:full_var_scale_bound} is helpful as it shows a trade-off between two upper bounds: the scale of NN derivatives \( \partial \bm h(\bm x) \) and \( \partial^2 \bm h(\bm x) \) versus the spectrum of the sufficient statistic terms. In the case of \cref{eq:frobenius_fim,eq:frobenius_var_efimb}, the trace of \( \fim(\bm{h} \mid \bm x) \) is exactly the sum of all eigenvalues, including \( \lambda_{\max}(\fim(\bm{h} \mid \bm x)) \).
This can be helpful when the scale of the NN derivatives are not bounded by a small value.
It should be noted that, by the chain rule, these NN derivatives scale with the overall sharpness / flatness~\citep{li2018visualizing} of the landscape of the loss,
\ie, the log-likelihood of \cref{eq:exp}.
For NNs with large derivatives, the first term of the \( \min \) could yield tight bounds
of the variance, and one can therefore
avoid dealing with the quadratic scaling of \( \Vert \partial \bm h(\bm x) \Vert \) in the second term.
On the other hand, if the sharpness of the NN \( \bm {h} \) can be controlled, \eg via sharpness aware minimization~\citep{foret2020sharpness}, then one can benefit from the second term of the \( \min \)
and avoid computing the full spectrum of \( \fim(\bm h \mid \bm x) \) in the first term.

\paragraph{Joint FIM Estimators}
In the above, we considered the variance of conditional FIMs, which can scale differently depending on the input \( \bm x \).
Prior work's analysis was limited to that of conditional FIMs (and their estimators)~\citep{varfim}.
Nevertheless, the `joint FIM' estimators \( \efimj(\theta_{i}) \) depend on sampling of $\bm{x}$ \wrt the data distribution $q(\bm{x})$.
The bounds in \cref{thm:fim_bound} can be extended to the joint FIM \( \fim(\theta_{i}) \defeq \expect_{q(\bm{x})}\fim(\theta_{i}\,\vert\,\bm{x}) \) by simply taking an expectation \( \expect_{q(\bm x)}\) over the bounds.
To analyze the variances of the joint FIM estimators \( \edvj(\theta_{i}) \),
we present the following theorem which connects the prior
results established for \( \edvj(\theta_{i} \mid \bm x) \), \eg \cref{thm:efim_conditional_bounds}, via the law of total variance.

\begin{theorem}
	\label{thm:efim_var_joint}
	For any \( j \in \{ 1, 2 \}\), given $N_x$ samples of $\bm{x} \sim q(\bm {x}) $ and $N$ samples of $\bm{y}_{\mid \bm{x}} \sim p(\bm{y} \mid \bm{x}) $ for each $ \bm{x} $ sampled,
	\begin{equation}\label{eq:vardecomp}
		\edvj(\theta_{i})
		= \frac{1}{N_{x}} \cdot
		\var\left(
		\fim(\theta_{i} \mid \bm x)
		\right)
		+
		\frac{1}{N_{x}} \cdot \expect_{q(\bm{x})}\left[\edvj(\theta_{i} \mid \bm x)\right],
	\end{equation}
	where \( \var\left(  \fim(\theta_{i} \mid \bm x) \right) \) is the variance of \( \fim(\theta_{i} \mid \bm x) \) \wrt \( q(\bm x) \).
\end{theorem}

The dependence on $N$, the number of samples of ${\bm y}$ for each fixed ${\bm x}$,
is hidden in $\edvj(\theta_{i} \mid \bm x)$.
When $N = 1$, the hierarchical sampling described in the Theorem corresponds to an \iid sampling of the joint distribution $p(\bm{x}, \bm{y})$.

The variance incurred when estimating the FIM has two components. The first term on the RHS of \cref{eq:vardecomp} characterizes the randomness of the FIM \wrt $q(\bm{x})$,
\ie, the input randomness.
It vanishes when the FIM is estimated by taking the expectation \wrt $q(\bm{x})$, or the number of samples $\bm{x}$ is large enough.
The second term (although also depending on $N_x$) comes from the sampling of $\bm{y}_{\mid \bm{x}}$ according to
$p(\bm{y} \mid \bm{x})$, \ie, the output randomness, which scales with the central moments of $\bm{t}(\bm{y})$.
If the NN is trained so that $p(\bm{y} \mid \bm{x})$ tends to be deterministic, this term will disappear leaving the first term to dominate.
\cref{eq:vardecomp} can be further generalized using the law of total covariance to extend prior work considering conditional FIM covariances~\cite{varfim} to joint FIM covariances.
\cref{thm:efim_var_joint} connects the variance of assuming a fixed input \( \bm x \) with multiple samples \( \bm y_{k} \) with the variance of pairs of samples \( (\bm x_{k}, \bm y_{k}) \).
The \( \edvj(\theta_{i} \mid \bm x) \) bounds in this section can thus be applied to the corresponding joint variance \( \edvj(\theta_{i}) \) by using this theorem. This is straightforward and omitted.

The first variance term in \cref{thm:efim_var_joint} is difficult to compute in practice: it relies on how the closed-form FIM varies \wrt \( q(\bm{x}) \). As such, it is useful to bound the first term into computable quantities.
\begin{lemma}
	\label{lem:joint_first_term}
\(
\var\left( \fim(\theta_{i} \mid \bm x) \right) \leq \expect_{q(\bm{x})}
	\left[
	\Vert \partial_{i} \bm h(\bm x) \Vert_{2}^{4} \cdot  \lambda^{2}_{\max}(\fim(\bm h \mid \bm x))
	\right].
	\)
\end{lemma}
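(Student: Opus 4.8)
The plan is to control the variance by the raw second moment and then invoke the pointwise upper bound already established in \cref{thm:fim_bound}. First I would observe that $\fim(\theta_{i} \mid \bm x)$ is a diagonal entry of the PSD conditional FIM $\fim(\bm\theta \mid \bm x)$, and is therefore nonnegative for every $\bm x$; equivalently, it equals $\expect_{p(\bm y \mid \bm x)}[(\partial_{i} \log p)^{2}] \geq 0$. Viewing $Z(\bm x) \defeq \fim(\theta_{i} \mid \bm x)$ as a random variable under $q(\bm x)$, the identity $\var(Z) = \expect_{q(\bm x)}[Z^{2}] - (\expect_{q(\bm x)}[Z])^{2}$ together with $(\expect_{q(\bm x)}[Z])^{2} \geq 0$ gives the crude but sufficient bound $\var(\fim(\theta_{i} \mid \bm x)) \leq \expect_{q(\bm x)}[\fim(\theta_{i} \mid \bm x)^{2}]$.

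Next I would substitute the upper bound from \cref{thm:fim_bound}. Since $0 \leq \fim(\theta_{i} \mid \bm x) \leq \Vert \partial_{i} \bm h(\bm x)\Vert_{2}^{2} \cdot \lambda_{\max}(\fim(\bm h \mid \bm x))$ holds for every fixed $\bm x$, squaring this chain of nonnegative quantities preserves the inequality, yielding $\fim(\theta_{i} \mid \bm x)^{2} \leq \Vert \partial_{i} \bm h(\bm x)\Vert_{2}^{4} \cdot \lambda_{\max}^{2}(\fim(\bm h \mid \bm x))$ pointwise in $\bm x$. Taking $\expect_{q(\bm x)}$ of both sides, which preserves the inequality by monotonicity of the expectation, delivers exactly the claimed bound.

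The argument is short because all the analytical content is already encapsulated in \cref{thm:fim_bound}; the only genuine step here is the reduction from variance to second moment, valid because the discarded term $(\expect_{q(\bm x)}[\fim(\theta_{i} \mid \bm x)])^{2}$ is nonnegative. I do not anticipate any real obstacle: the nonnegativity of $\fim(\theta_{i} \mid \bm x)$ is immediate from its definition as an expected squared score, and the squaring step is justified since both sides of the pointwise bound are nonnegative. The one caveat worth flagging is that the bound is deliberately loose --- dropping $(\expect_{q(\bm x)}[\fim(\theta_{i} \mid \bm x)])^{2}$ trades tightness for a computable right-hand side --- but this is acceptable, since the purpose of the lemma is precisely to render the otherwise intractable first variance term of \cref{thm:efim_var_joint} into an estimable quantity.
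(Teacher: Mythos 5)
Your proof is correct and follows essentially the same route as the paper's own: bound the variance by the raw second moment (discarding the nonnegative squared mean) and then apply the pointwise upper bound of \cref{thm:fim_bound}, squared, inside the expectation $\expect_{q(\bm x)}$. If anything, your write-up is more careful than the paper's, which (evidently through typos) writes the last two steps of this chain as equalities rather than the inequalities they should be.
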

This upper bound is very similar to the 4th central moment term \( \tilde{\lambda}_{\max}(\kurt^{p}(\bm{t} \mid \bm{x})) \) when considering the variance upper bound \( \edvb(\theta_{i} \mid \bm x) \) in \cref{thm:efim_conditional_bounds,cor:variation_lambda_bounds}.
In general, the eigenvalue terms of \cref{lem:joint_first_term} and \cref{thm:efim_conditional_bounds} are distinct, \ie, $ \lambda^{2}_{\max}(\fim(\bm h \mid \bm x)) \neq \tilde{\lambda}_{\max}(\kurt^{p}(\bm t \mid \bm x)) $.
This is especially true for the classification and regression problems explored in this paper (see \cref{tab:statistics}).
However, the maximum eigenvalues can be related for exponential families with bounded sufficient statistic via \cref{prop:bounded_suff_stat}, making both bounds depend only on $\lambda_{\max}(\fim(\bm h \mid \bm x))$.

The total number of samples of $(\bm{x}_k,\bm{y}_k)$ is $N_x \cdot N$.
In terms of sample complexity of the joint variance,
using \cref{thm:efim_var_joint,lem:joint_first_term}, the bound's rate is given by $ \mathcal{O}(1/N_x + 1/(N_x \cdot N)) $.

 \section{Case Studies}
\label{sec:learning_setting}

To make our theoretic results more concrete, we consider regression and classification settings,
which correspond to specifying the exponential family in \cref{eq:exp} to an isotropic Gaussian distribution and a
categorical distribution, respectively. We include an empirical analysis of NNs trained on MNIST.
Notably, our analysis considers general multi-dimensional NN output.
This extends the case studies of \citet{varfim} which was limited to 1D
distributions due to the limitations of their bounds (and their associated
computational costs of dealing with a 4D tensor of the full covariance).

\paragraph{Regression: Isotropic Gaussian Distribution}
To characterize regression, we consider Gaussian distributions.
As per \cref{eq:exp}, we have \( \bm t(\bm y) = \bm y \in \Re^T \) and base measure \( \pi(\bm y) \propto \exp( -\frac{1}{2}{\bm y}^\T {\bm y} ) \).
This corresponds to the case where \( \bm{h}(\bm{x}) \) is trained via the squared loss.
In this case, $\fim(\bm h \mid \bm x) = \bm{I}$ and
$ \kurt_{abcd}^p(\bm t \mid \bm x) = \fim_{ab}(\bm h \mid \bm x) \cdot \fim_{cd}(\bm h \mid \bm x)
	+ \,\fim_{ac}(\bm h \mid \bm x) \cdot \fim_{bd}(\bm h \mid \bm x)
	+ \fim_{ad}(\bm h \mid \bm x) \cdot \fim_{bc}(\bm h \mid \bm x)
$.
The derivatives of the log-partition function \( F(\bm h) \) yields
these central moments~\citep[Lemma 5]{varfim}.
To apply \cref{thm:efim_conditional_bounds}, we examine the extreme eigenvalues
of $\fim(\bm h \mid \bm x)$ and $\kurt^p(\bm t \mid \bm x) - \fim(\bm h \mid \bm x) \otimes \fim(\bm h \mid \bm x)$.
\begin{proposition}
	\label{thm:spectrum_regression}
	Suppose that \cref{eq:exp} is an isotropic Gaussian distribution. Then:
\begin{gather*}
		\lambda_{\min}(\fim(\bm{h} \mid \bm{x})) = \lambda_{\max}(\fim(\bm{h} \mid \bm{x})) = 1;\\
		\tilde{\lambda}_{\min}(\kurt^p(\bm t \mid \bm x) - \fim(\bm h \mid \bm x) \otimes \fim(\bm h \mid \bm x)) =
		\tilde{\lambda}_{\max}(\kurt^p(\bm t \mid \bm x) - \fim(\bm h \mid \bm x) \otimes \fim(\bm h \mid \bm x)) = {2}.
	\end{gather*}
\end{proposition}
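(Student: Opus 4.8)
The plan is to treat the two claims separately, since the eigenvalue statement for $\fim(\bm{h}\mid\bm{x})$ and the tensor Z-eigenvalue statement for $\mathcal{M}$ require different arguments. The first claim is immediate: for an isotropic Gaussian the displayed central moments give $\fim(\bm{h}\mid\bm{x}) = \bm{I}$, the $T\times T$ identity, whose every eigenvalue equals $1$; hence $\lambda_{\min}(\fim(\bm{h}\mid\bm{x})) = \lambda_{\max}(\fim(\bm{h}\mid\bm{x})) = 1$.

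For the tensor Z-eigenvalues, the key is that $\mathcal{M} = \kurt^p(\bm{t}\mid\bm{x}) - \fim(\bm{h}\mid\bm{x}) \otimes \fim(\bm{h}\mid\bm{x})$ collapses to a particularly simple form. First I would substitute $\fim_{ab}(\bm{h}\mid\bm{x}) = \delta_{ab}$ into the Gaussian kurtosis formula stated above (an instance of Isserlis'/Wick's theorem), obtaining $\kurt_{abcd}^p(\bm{t}\mid\bm{x}) = \delta_{ab}\delta_{cd} + \delta_{ac}\delta_{bd} + \delta_{ad}\delta_{bc}$. With the convention $(\fim \otimes \fim)_{abcd} = \fim_{ab}\fim_{cd} = \delta_{ab}\delta_{cd}$, the $\fim\otimes\fim$ term cancels exactly the first Wick pairing, leaving
\[
    \mathcal{M}_{abcd} = \delta_{ac}\delta_{bd} + \delta_{ad}\delta_{bc}.
\]

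Next I would evaluate the quartic form defining the Z-eigenvalues in \cref{eq:var_tensor_eig}. Contracting $\mathcal{M}$ with four copies of any unit vector $\bm{u}$ and using $\sum_a (\bm{u}^a)^2 = \Vert\bm{u}\Vert_2^2 = 1$, each of the two $\delta$-pairings factorizes into $\left(\sum_a (\bm{u}^a)^2\right)\left(\sum_b (\bm{u}^b)^2\right) = \Vert\bm{u}\Vert_2^4 = 1$, so that ${\bm u}^{a}{\bm u}^{b}{\bm u}^{c}{\bm u}^{d}\mathcal{M}_{abcd} = 2$ independently of the direction of $\bm{u}$. Since the quartic form is therefore constant on the unit sphere, both its infimum and supremum equal $2$, giving $\tilde{\lambda}_{\min}(\mathcal{M}) = \tilde{\lambda}_{\max}(\mathcal{M}) = 2$.

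There is no genuine obstacle here: the computation is short once the Wick cancellation is spotted. The only point requiring care is the index bookkeeping in the $\fim\otimes\fim$ term---ensuring the tensor-product convention pairs indices $(a,b)$ and $(c,d)$ so that it aligns with and cancels the first term of the kurtosis formula rather than one of the other two pairings. The cleanness of the final answer (a direction-independent quartic form) is what collapses the min and max Z-eigenvalues to the same value, so I would emphasize that this coincidence is special to the isotropic Gaussian and its Wick structure.
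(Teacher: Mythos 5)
Your proof is correct and follows essentially the same route as the paper's: both reduce $\mathcal{M}$ to $\delta_{ac}\delta_{bd}+\delta_{ad}\delta_{bc}$ (the paper writes this as $\fim_{ac}\fim_{bd}+\fim_{ad}\fim_{bc}$ after deriving $\fim(\bm h\mid\bm x)=\bm I$ and the Wick-type kurtosis formula from the log-partition function), and then evaluate the variational quartic form of \cref{eq:var_tensor_eig}, which is constant equal to $2$ on the unit sphere, forcing $\tilde{\lambda}_{\min}=\tilde{\lambda}_{\max}=2$. The only cosmetic difference is that you take the displayed central moments as given and cite Isserlis/Wick, whereas the paper re-derives them via derivatives of $F$ and \citet[Lemma 5]{varfim}.
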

Hence, for regression the eigenvalues of sufficient statistics quantities in our bounds are equal. As such, in this case, the bound
for \( \fim(\theta_i \mid \bm x) \), \( \edva(\theta_i \mid \bm x) \), and \(
\edvb(\theta_i \mid \bm x) \) in \cref{thm:efim_conditional_bounds} are all
equalities.

As \( \fim(\theta_i \mid \bm x) \) can be written exactly in terms of the
gradients of \( \bm{h} \), in practice one does not need to utilize a
random estimator when computing the conditional FIM,
which simplifies to a Gauss-Newton matrix for the squared loss~\citep{martens}.
When computing the FIM over a random sample \( \bm {x} \),
a variance still appears due to \cref{thm:efim_var_joint}.
By \cref{lem:joint_first_term} and \cref{thm:spectrum_regression},
the variance over a joint distribution is bounded by a function of the
derivative \( \partial_i \bm{h}(\bm{x}) \) over the marginal input distributions:
$
	\edv(\theta_{i}) \leq \expect_{q(\bm{x})}[\Vert \partial_{i} \bm h(\bm x) \Vert_{2}^{4}] \leq \max_{\bm{x} \in \supp(q)} \Vert \partial_{i} \bm h(\bm x) \Vert_{2}^{4}
$.
In other words, the overall variance to approximate the joint FIM is bounded by the gradients of
the NN outputs.

\paragraph{Classification: Categorical Distribution}
For multi-class classification, we instantiate our exponential family with
a categorical distribution (with \(\pi(y) = 1\)).
This corresponds to training a classifier NN with the log-loss.
Let \( \mathcal{Y} = [C] \) for \( T=C \) classes defining the possible labels.
Let $ \bm{t}(y) = (\llbracket y = 1 \rrbracket, \ldots \llbracket y = C \rrbracket) $, where \( \llbracket r \rrbracket = 1 \) when the predicate \( r \) is true and \( \llbracket r \rrbracket = 0 \) otherwise.
Noting our results do not depend on minimal sufficiency, this $\bm{t}(y)$ is sufficient but \emph{not minimal} sufficient.
In this setting, the NN outputs \( \bm{h} \) correspond to the logits of the label probabilities.
The resulting probabilities \( p(y \mid \bm{x}) \) are the softmax values of \( \bm h \)
denoted by \( \sigma(\bm{x}) \defeq \mathrm{SoftMax}(\bm{h}(\bm{x})) \in [0, 1]^{T} \).

Under this setting, we have
$\fim(\bm{h} \mid \bm{x}) = \Diag(\sigma(\bm x)) - \sigma(\bm x) \sigma(\bm x)^{\T}$ (where $\Diag(\sigma(\bm x))$ is the diagonal matrix with its diagonal entries set to
$\sigma(\bm x)$),
whose eigenvalues do not follow a convenient pattern as \( C \)
increases~\citep{multinomialspec}.
Likewise, the maximum eigenvalue of \( \kurt^p(\bm{t} \mid \bm{x}) \) is not available in simple closed form.
As such, we provide upper bounds for the maximum eigenvalues of \( \fim(\bm{h} \mid \bm{x}) \) and \( \kurt^p(\bm t \mid \bm x) - \fim(\bm h \mid \bm x) \otimes \fim(\bm h \mid \bm x) \) using \cref{cor:variation_lambda_bounds,prop:bounded_suff_stat}.

\begin{theorem}
	\label{thm:spectrum_classification}
	Suppose that \cref{eq:exp} is a categorical distribution. With \( \sigma_{\max}(\bm{x}) \defeq \max_{k} \sigma_{k}(\bm{x}) \):
\begin{align*}
		\lambda_{\max}(\fim(\bm{h} \mid \bm{x})) \leq m(\bm x); \quad  \textrm{and} \quad
		\tilde{\lambda}_{\max}(\kurt^p(\bm t \mid \bm x) - \fim(\bm h \mid \bm x) \otimes \fim(\bm h \mid \bm x)) \leq 2 \cdot m(\bm x),
	\end{align*}
	where \( m(\bm x) \defeq \min\left(\sigma_{\max}(\bm x), 1- \Vert \sigma(\bm x) \Vert_2^2 \right) \).
\end{theorem}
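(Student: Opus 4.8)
The plan is to prove the two inequalities separately: first the matrix bound on \( \lambda_{\max}(\fim(\bm h \mid \bm x)) \), and then the tensor bound on \( \tilde{\lambda}_{\max}(\mathcal{M}) \) with \( \mathcal{M} = \kurt^{p}(\bm t \mid \bm x) - \fim(\bm h \mid \bm x) \otimes \fim(\bm h \mid \bm x) \) (the Z-eigenvalue of the \( 4 \)-tensor in the sense of \cref{eq:var_tensor_eig}), which I reduce to the matrix bound by invoking the already-established \cref{cor:variation_lambda_bounds,prop:bounded_suff_stat}.

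For the first inequality I would use the closed form \( \fim(\bm h \mid \bm x) = \diag(\sigma(\bm x)) - \sigma(\bm x)\sigma(\bm x)^{\T} \), which is PSD since it is a covariance matrix. The minimum \( m(\bm x) \) arises from two elementary facts. Since \( \sigma(\bm x)\sigma(\bm x)^{\T} \) is PSD, the Loewner ordering \( \fim(\bm h \mid \bm x) \preceq \diag(\sigma(\bm x)) \) holds, so \( \lambda_{\max}(\fim(\bm h \mid \bm x)) \leq \lambda_{\max}(\diag(\sigma(\bm x))) = \sigma_{\max}(\bm x) \). Separately, a PSD matrix has largest eigenvalue at most its trace, and \( \trace(\diag(\sigma(\bm x)) - \sigma(\bm x)\sigma(\bm x)^{\T}) = 1 - \Vert \sigma(\bm x) \Vert_{2}^{2} \) using \( \sum_{k} \sigma_{k}(\bm x) = 1 \); this yields the second branch. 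Taking the smaller of the two gives \( \lambda_{\max}(\fim(\bm h \mid \bm x)) \leq m(\bm x) \).

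For the second inequality I would chain the two prior results. By \cref{eq:zeigen_upper}, \( \tilde{\lambda}_{\max}(\mathcal{M}) \leq \tilde{\lambda}_{\max}(\kurt^{p}(\bm t \mid \bm x)) - \lambda_{\min}^{2}(\fim(\bm h \mid \bm x)) \); since \( \fim(\bm h \mid \bm x) \) is PSD the subtracted term is nonnegative, so it suffices to bound \( \tilde{\lambda}_{\max}(\kurt^{p}(\bm t \mid \bm x)) \). To apply \cref{prop:bounded_suff_stat} I need a uniform bound \( B \) on \( \Vert \bm t(y) - \bm\eta(\bm x) \Vert_{2}^{2} \). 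For the one-hot statistic \( \bm t(y) = \bm e_{y} \) and \( \bm\eta(\bm x) = \sigma(\bm x) \), so \( \Vert \bm e_{y} - \sigma(\bm x) \Vert_{2}^{2} = 1 - 2\sigma_{y}(\bm x) + \Vert \sigma(\bm x) \Vert_{2}^{2} \leq 1 + \Vert \sigma(\bm x) \Vert_{2}^{2} \leq 2 \), where the last step uses \( \sigma_{y}(\bm x) \geq 0 \) and \( \Vert \sigma(\bm x) \Vert_{2}^{2} \leq \sum_{k} \sigma_{k}(\bm x) = 1 \). Hence \( B = 2 \), and \cref{prop:bounded_suff_stat} combined with the first inequality gives \( \tilde{\lambda}_{\max}(\kurt^{p}(\bm t \mid \bm x)) \leq 2\,\lambda_{\max}(\fim(\bm h \mid \bm x)) \leq 2\,m(\bm x) \), completing the bound.

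I expect no deep obstacle here; the work is essentially bookkeeping. The one point requiring care is establishing the sharp uniform bound \( B = 2 \) and verifying that it propagates cleanly: the factor of \( 2 \) in the target arises precisely as the product of this bound with the matrix eigenvalue bound \( m(\bm x) \), so that the whole \( \min \)-structure of \( m(\bm x) \) is inherited through \cref{prop:bounded_suff_stat} without loss. A secondary subtlety is that the tensor quantity in the statement is a Z-eigenvalue of a non-supersymmetric tensor, so one must reason with the variational form in \cref{eq:var_tensor_eig} throughout rather than with any matrix-eigenvalue intuition.
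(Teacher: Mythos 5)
Your proposal is correct and follows essentially the same route as the paper: the first inequality via the operator bound \( \fim(\bm h \mid \bm x) \preceq \diag(\sigma(\bm x)) \) together with the trace bound \( \trace(\fim(\bm h \mid \bm x)) = 1 - \Vert \sigma(\bm x) \Vert_2^2 \), and the second by chaining \cref{eq:zeigen_upper} with \cref{prop:bounded_suff_stat} using \( B = 2 \). The only cosmetic difference is that the paper packages the \( B = 2 \) step as a standalone corollary (bounding \( \Vert \bm t(y) - \sigma(\bm x) \Vert_2 \leq \sqrt{2} \) coordinate-wise) whereas you derive it by the inline expansion \( 1 - 2\sigma_y(\bm x) + \Vert \sigma(\bm x) \Vert_2^2 \leq 2 \); both are equivalent.
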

This upper bounds provides a tension. When the first term of \( m(\bm x) \) is maximized, the second is minimized, and vice-versa.
In particular, the dominating term depends on the uncertainty of the NN's output. When the NN's output is near random, \eg at initialization, the first term will dominate with \( \sigma_{\max}(\bm {x}) \approx 1 / C \).
However, as the NN becomes more certain with its prediction, the second term will start dominating: a more deterministic output \( p(y \mid \bm x) \rightarrow 1 \) implies that \( \lambda_{\max}(\fim(\bm{h} \mid \bm{x})) \rightarrow 0 \).

\begin{figure*}[tb!]
    \centering
    \includegraphics[width=1.0\textwidth, trim={7pt 7pt 7pt 7pt}, clip]{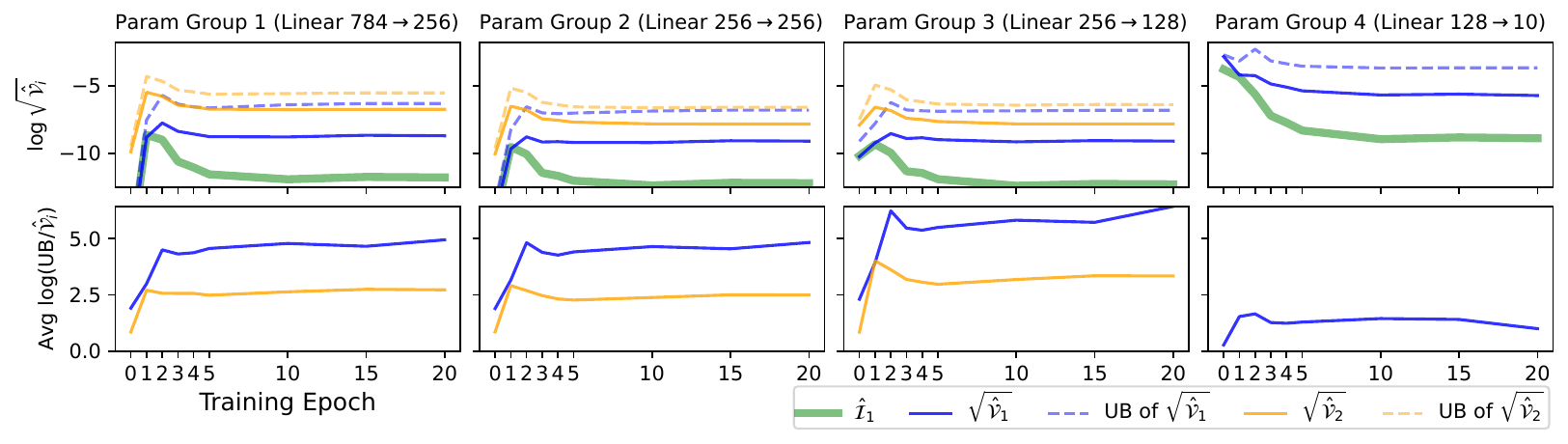}\caption{MNIST for a 4-layer MLP with sigmoid
        activations. Top: The estimated Fisher information (FI), variances, and
        variance bounds across 4 parameter groups and 20 training
        epochs. The FI (green line) is estimated
        using $\efima$ ($\efimb$ is almost identical and not shown for
        clarity). The \std (square root of variance) is shown
        for variances and their bounds.
Bottom: the log-ratio of \cref{thm:efim_conditional_bounds}'s upper bounds (UBs) 
        and the true variances. The closer to 0, the better the UB.
        In the right most column, the variance of $\efimb$ vanishes: $ \edvb(\theta_i \mid \bm {x}) = 0
        \le \edva(\theta_i \mid \bm {x}) $. Thus related curves of $\efimb$ are not shown.
}
    \label{fig:epochs}
\end{figure*}
 \paragraph{Empirical Verification: Classification}
We examine the MNIST classification task~\citep{deng2012mnist} (CC BY-SA 3.0)
using multilayer perceptrons (MLP) with four densely connected layers, sigmoid activations, and a dropout layer.
For classification, we consider a categorical distribution with \( C = 10 \) class labels.
For a random \( \bm {x} \) from the test set, we compute both estimators \( \efima( \theta_i \mid \bm x) \) and \( \efimb( \theta_i \mid \bm x) \) using \( N = 5,000 \) samples.
We record the variances of each estimator and compute their bounds based on \cref{thm:efim_conditional_bounds}.
For all 20 training epochs,
the Fisher information (FI) and their variances of individual parameters are aggregated via
arithmetic averages over four parameter groups (corresponding to the four layers).

In \cref{fig:epochs}, we present the
variance scale of the estimators \( \efima(\theta_i \mid \bm x) \) and \( \efimb(\theta_i \mid \bm x) \) in log-space; and the tightness of the bounds in \cref{thm:efim_conditional_bounds} by consider the log-ratio
$\log \frac{\textrm{UB}}{\edvi(\theta_i \mid \bm x)}$,
where $\textrm{UB}$ is the upper bounds in \cref{thm:efim_conditional_bounds}.
In this experiment, the UB is much tighter than the lower bound (LB), which is
omitted in the figures for clarity. More experimental results are given in
\cref{sec:additional_experimental_results}.

We varied the NN's architecture and activation function. Across different
settings, the proposed UB and LB are always valid. In \cref{fig:epochs}, one can
observe that the diagonal FIM and the associated variances have a small magnitude.
For example, in the first
layer, \( \edva(\theta_i \mid \bm x) \) and \( \edvb(\theta_i \mid \bm x) \)
are roughly $e^{-10}\approx5\times10^{-5}$.
The log-ratio $\log \frac{\textrm{UB}}{\edva(\theta_i \mid \bm x)}\approx4$ means that the $\textrm{UB}$ is roughly 50 times larger than $\edva(\theta_i \mid
	\bm x)$. Comparatively, $\edvb(\theta_i \mid \bm x)$ has a tighter $\textrm{UB}$
which is approximately 10 times larger than itself.
The UB serves as a useful hint on the \emph{order of
	magnitude} of the variances. In \cref{sec:trace_variance_full}, we
present tighter bounds which are more expensive to compute.

In the first three layers of the MLP, \( \edva(\theta_i \mid \bm x) \) presents a
smaller value than \( \edvb(\theta_i \mid \bm x) \), meaning that $\efima$
can more accurately estimate the diagonal FIM. Interestingly, this is not true
for the last layer: \( \edvb(\theta_i \mid \bm x) \) becomes zero while \(
\efima \) presents the largest variance across all parameter groups.
Due to this, one should always
prefer $\efimb$ over $\efima$ for the last layer. In the last two layers,
$\efimb$ is in simple closed form and, hence, does not need automatic differentiation to calculate (see
\cref{remark:lastlayer,remark:secondlastlayer}).
The shape of the variance curves are sensitive to the choices of
activation functions \( \phi \) and inputs $\bm{x}$. In general, the variance in the first few
epochs presents more dynamics than the rest of the training process.
If one uses log-sigmoid activations $\phi(t)=-\log(1+\exp(-t))$ (which is equivalent to \( \phi(t) = - \textrm{SoftPlus}(-t)\), as per \cref{remark:lastlayer}), the variances of $\efima$ and $\efimb$
only appear in the randomly initialized NN and quickly vanish once training
starts, as shown in \cref{sec:additional_experimental_results}. In this case, the learner more easily approaches a nearly linear region
of the loss landscape where local optima lie.
In practice, one should estimate and examine the scale of variances --- which should not be neglected as per \cref{fig:epochs} --- before choosing a preferred diagonal FIM estimator.

 \section{Relationship with the ``Empirical Fisher''}\label{sec:ef}
In some scenarios, even the estimators of the diagonal FIM \( \efima(\bm \theta) \) and \( \efimb(\bm \theta) \) can be prohibitively expensive.
Part of the cost comes from requiring label samples \( \bm y_{k} \) for each \( \bm x_{k} \), as per \cref{eq:efima_and_efimb}. For example, when the FIM is used in an iterative optimization procedure, $\bm{y}_k$'s need to be re-sampled at each learning step \wrt the current \( \bm{h} \)
alongside their backpropagation (accounting for sampling).

As such, alternative `FIM-like' objects have been explored which replace the samples from \( p(\bm y \mid \bm x) \) with samples from an underlying true (but unknown) data distribution \( q(\bm y \mid \bm x) \)~\cite{roux2007topmoumoute,martens2010deep}. We define the data's joint distribution as \( q(\bm x, \bm y) \defeq q(\bm x) q(\bm y \mid \bm x) \).
Analogous to the FIM, the \emph{data Fisher information matrix} (DFIM) can be defined as the PSD tensor \( \empfim(\bm \theta) \defeq \expect_{q(\bm x)}[\empfim(\bm \theta \mid \bm x)]\), with
\begin{align}
	\empfim(\bm \theta \mid \bm x)
	 & = \expect_{q(\bm{\ydata} \mid \bm {x})}\left[
		\frac{\partial \log p(\bm \ydata \mid \bm x)}{\partial \bm \theta}\frac{\partial \log p(\bm \ydata \mid \bm x)}{\partial \bm \theta^{\T}}
		\right]
	= \left(
	\frac{\partial\bm{h}}{\partial\bm\theta}
	\right)^{\T}
	\empfim(\bm h \mid \bm x)
	\left(
	\frac{\partial\bm{h}}{\partial\bm\theta}
	\right), \label{eq:empirical_fisher}
\end{align}
where \( \empfim(\bm h \mid \bm x) = \expect_{q(\bm \ydata \mid \bm x)}\left[ (\bm{t}(\bm{\ydata}) - \bm\eta(\bm x))(\bm{t}(\bm{\ydata}) - \bm\eta(\bm x))^{\T} \right] \) denotes the 2nd (non-central) moment of \( (\bm{t}(\bm{\ydata}) - \bm\eta(\bm x)) \) \wrt \( q(\bm \ydata \mid \bm x) \),
and  ${\partial\bm{h}}/{\partial\bm\theta}$
is the Jacobian of the map $\bm\theta\to\bm{h}$.
In the special case that $q(\bm{y}\mid\bm{x})=p(\bm{y} \mid \bm{x}; \bm\theta)$, then
$\empfim(\bm\theta\mid\bm{x})$ becomes exactly $\fim(\bm\theta\mid\bm{x})$.

The DFIM $\empfim(\bm\theta \mid \bm{x})$ in \cref{eq:empirical_fisher} is a more general definition.
Compared to the FIM $\fim(\bm\theta \mid \bm{x})$, it yields a different PSD
tensor on the \( \bm\theta \) parameter space (the neuromanifold) depending on a
distribution $q(\bm{x}, \bm{y})$, which is neither necessarily on the same
neuromanifold
nor necessarily parametric at all.
The asymmetry in the true data distribution and the empirical one results in different geometric structures~\citep{critchley1993preferred}.
By definition, we have
$
	\empfim(\bm\theta\mid\bm{x})
	\succeq
	\left({\partial\kl}/{\partial\bm\theta}\right)
	\left({\partial\kl}/{\partial\bm\theta}\right)^\T
$,
where $\kl(\bm\theta)\defeq\int q(\hat{\bm{y}} \mid \bm{x}) \log \tfrac{q(\hat{\bm{y}} \mid \bm{x})}{p(\hat{\bm{y}} \mid \bm{x}; \bm{\theta})} \dydata$ is the Kullback-Leibler (KL) divergence, or the loss in a parameter learning scenario.
The DFIM can be regarded as a surrogate function of the squared gradient of the KL divergence.
It is a symmetric covariant tensor and satisfies the same rule \wrt reparameterization as the FIM.
Consider the reparameterization $\bm\theta \to \bm\zeta$, the DFIM becomes
\(
\empfim(\bm\zeta \mid \bm x)
=
\left({\partial\bm\theta}/{\partial\bm\zeta}\right)^\T \empfim(\bm\theta \mid \bm x) \left( {\partial\bm\theta}/{\partial\bm\zeta}\right)\).

Notice that \( \hat{\bm \eta}(\bm x) \defeq  \expect_{q(\bm \ydata \mid \bm x)}[ \bm{t}(\bm \ydata) ] \neq \bm \eta(\bm x) \) in general. As such, there will be a miss-match when utilizing  \( \empfim(\bm h \mid \bm x) \) as a substitute for \( \fim(\bm h \mid \bm x) \).
However, as learning progresses and \( p(\bm \ydata \mid \bm x) \) becomes more similar to the data's true labeling posterior \( q(\bm \ydata \mid \bm x) \), the DFIM will become closer to the FIM.

If \( q(\bm x, \bm y) = \frac{1}{N} \sum_{k=1}^{N} \delta(\bm x - \bm x_{k})
\cdot \delta(\bm y - \bm y_{k})\) is defined by the observed samples,
DFIM gives the widely used ``\emph{Empirical Fisher}''~\citep{martens},
whose diagonal entries are
\begin{equation*}
\empefim(\theta_{i})
	= \frac{1}{N} \sum_{k=1}^{N}
	\left( \partial\bm{h}_i^a(\bm x_{k})
	\cdot
	(\bm{t}_a(\bm{\ydata}_k) - \bm\eta_a(\bm x_{k})) \right)^{2}, \end{equation*}
where \( (\bm x_{1}, \bm \ydata_{1}), \ldots, (\bm x_{N}, \bm \ydata_{N}) \) are \iid sampled from \( q(\bm{x}, \bm{\ydata}) \). Similar to \( \efima(\theta_{i} \mid \bm{x}) \), an estimator with a fixed input \( \bm x \) can be considered, denoted as \( \empefim(\theta_{i} \mid \bm x) \).

Given the computational benefits of using the data directly --- bypassing a separate sampling routine --- many popular optimization methods employ the empirical Fisher or its approximation. For instance, the Adam optimizer~\cite{adam}
uses the empirical Fisher to approximate the diagonal FIM.
However, switching from sampling \( \bm y_{k} \) to \( \bm \ydata_{k} \) is anything but superficial~\citep[Chapter 11]{martens} --- \( \empefim(\bm \theta) \) is \emph{not} an unbiased estimator of \( \fim(\bm \theta) \) as
$\empfim(\bm h \mid \bm x)$ is different from $\fim(\bm h \mid \bm x)$.

The biased nature of the empirical Fisher affects the other moments as well. In particular, we do not have the same equivalence of covariance and the metric being pulled back by \( \bm \theta \rightarrow \bm h \) \cite{pull}.
\begin{lemma}\label{lem:empfim_covar}
	Given the conditional data distribution \( q(\bm \ydata \mid \bm x)\), the covariance of \( \bm t \) given \( \bm x \) is given by
	\begin{equation}
		\cov^{q}(\bm t \mid \bm x) = \empfim(\bm h \mid \bm x) - \Delta\Eta(\bm x),
\end{equation}
	where $ \Delta\Eta(\bm x) = (\bm \eta(\bm x) - \hat{\bm \eta}(\bm x))(\bm \eta(\bm x) - \hat{\bm \eta}(\bm x))^{\T} $.
\end{lemma}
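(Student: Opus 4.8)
The plan is to recognize Lemma~\ref{lem:empfim_covar} as an instance of the parallel-axis (bias--variance) decomposition. The key observation is that $\empfim(\bm h \mid \bm x)$ is the second moment of $\bm t(\bm\ydata)$ about the \emph{model} mean $\bm\eta(\bm x)$ under $q(\bm\ydata \mid \bm x)$, whereas $\cov^{q}(\bm t \mid \bm x)$ is the second moment of $\bm t(\bm\ydata)$ about the \emph{data} mean $\hat{\bm\eta}(\bm x) \defeq \expect_{q(\bm\ydata\mid\bm x)}[\bm t(\bm\ydata)]$. Since these two centers differ in general, the two quadratic forms differ exactly by the outer product of the offset $\bm\eta(\bm x) - \hat{\bm\eta}(\bm x)$, which is precisely $\Delta\Eta(\bm x)$.

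First I would start from the definition $\empfim(\bm h \mid \bm x) = \expect_{q(\bm\ydata\mid\bm x)}[(\bm t(\bm\ydata) - \bm\eta(\bm x))(\bm t(\bm\ydata) - \bm\eta(\bm x))^{\T}]$ and insert the data mean into each factor by writing $\bm t(\bm\ydata) - \bm\eta(\bm x) = (\bm t(\bm\ydata) - \hat{\bm\eta}(\bm x)) + (\hat{\bm\eta}(\bm x) - \bm\eta(\bm x))$. Expanding the outer product yields four terms: the covariance term $(\bm t - \hat{\bm\eta})(\bm t - \hat{\bm\eta})^{\T}$, the constant term $(\hat{\bm\eta} - \bm\eta)(\hat{\bm\eta} - \bm\eta)^{\T}$, and two cross terms of the form $(\bm t - \hat{\bm\eta})(\hat{\bm\eta} - \bm\eta)^{\T}$ and its transpose.

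Next I would take $\expect_{q(\bm\ydata\mid\bm x)}$ termwise. By the definition of $\hat{\bm\eta}(\bm x)$ as the mean of $\bm t$ under $q(\bm\ydata\mid\bm x)$, we have $\expect_{q}[\bm t(\bm\ydata) - \hat{\bm\eta}(\bm x)] = \bm 0$, so both cross terms vanish once the constant factor $(\hat{\bm\eta} - \bm\eta)$ is pulled out of the expectation. The surviving terms are exactly $\cov^{q}(\bm t \mid \bm x)$ and $(\hat{\bm\eta}(\bm x) - \bm\eta(\bm x))(\hat{\bm\eta}(\bm x) - \bm\eta(\bm x))^{\T}$. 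Observing that the latter outer product is invariant under a sign flip of its factors and hence equals $\Delta\Eta(\bm x)$, rearranging gives $\cov^{q}(\bm t \mid \bm x) = \empfim(\bm h \mid \bm x) - \Delta\Eta(\bm x)$, as claimed.

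There is no genuine obstacle here; the only point requiring care is to keep the two means rigorously distinct throughout. Specifically, the expectation is taken under $q(\bm\ydata\mid\bm x)$, so $\bm\eta(\bm x)$ (the \emph{model} mean) must be treated as a fixed vector that does not depend on $\bm\ydata$ and hence factors out of all expectations, while the centering identity $\expect_q[\bm t - \hat{\bm\eta}] = \bm 0$ relies on the \emph{data} mean rather than the model mean. Keeping these two means separate is the entire content of the lemma and the source of the correction term $\Delta\Eta(\bm x)$.
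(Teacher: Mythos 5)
Your proof is correct and is essentially the same argument as the paper's: both are the elementary parallel-axis computation comparing the second moment of \( \bm t(\bm\ydata) \) centered at the model mean \( \bm\eta(\bm x) \) versus the data mean \( \hat{\bm\eta}(\bm x) \). The only cosmetic difference is that you insert \( \hat{\bm\eta}(\bm x) \) into the quadratic form and let the cross terms vanish under \( \expect_{q(\bm\ydata\mid\bm x)} \), whereas the paper expands both \( \cov^{q}(\bm t \mid \bm x) \) and \( \empfim(\bm h \mid \bm x) \) through the raw moment \( \expect_{q(\bm\ydata\mid\bm x)}[\bm t(\bm\ydata)\bm t^{\T}(\bm\ydata)] \) and collects terms --- the underlying algebra is identical.
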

As a result, although the variance of the estimator $ \empefim(\theta_{i} \mid \bm x) $ takes a similar form to \( \edva(\theta_i \mid \bm x) \) (\ie, \cref{thm:fima_element_bound}), its sufficient statistic terms do not exclusively consist of central moments. Noting the miss-match in $\hat{\bm \eta}(\bm x) \neq \bm \eta(\bm x)$, \cref{lem:empfim_covar} reveals an additional term which shifts $\empfim(\bm h \mid \bm x)$ away from the 2nd central moment of $\bm t(\bm{\ydata})$ (\wrt $q(\bm \ydata \mid \bm x)$). Instead, these sufficient statistic terms correspond to non-central moments of $ \bm t(\bm \ydata) - \bm \eta(\bm x) $.
Some corresponding empirical Fisher / DFIM bounds are characterized in \cref{sec:empirical_cont}.

 \section{Conclusion}\label{sec:con}

We have analyzed two different estimators \( \efima(\bm \theta) \) and \(
\efimb(\bm \theta) \) for the diagonal entries of the FIM. The variances of
these estimators are determined by both the non-linearly of the neural
network and the moments of the exponential family.
We have identified distinct scenarios on which estimator is preferable.
For example, ReLU networks can only apply $\efima(\bm \theta)$ due to a lack of smoothness. As another
example, $\efimb(\bm \theta)$ has zero variance in the last layer and thus is
always preferable than $\efima(\bm \theta)$. Similarly, in the second last
layer, $\efimb(\bm \theta)$ has a simple closed form and potentially preferable
for neurons in their linear regions (see \cref{remark:secondlastlayer}).
In general, one has to apply \cref{thm:efim_conditional_bounds} based on their
specific neural network and settings and choose the estimator with the smaller
variance.
Our results suggest that, from a variance perspective, uniformly
utilizing one of the FIM estimators $\efimj(\bm \theta)$ is often suboptimal in NNs.
Our work has further extended from analyzing the conditional FIM estimators $\efimj(\bm \theta \mid \bm x)$ to the joint FIM estimators $\efimj(\bm \theta)$; and we have examined the relationship between the investigated estimators and the empirical Fisher.
Future directions include extending the analysis of the variance of FIM estimators to block
diagonals~(\eg~\citet{martens2015optimizing,tnt})
and adapting current NN optimizers (\eg~\citet{adam}) to incorporate the variance of FIM estimators.
 
\section*{Acknowledgements}

The authors thank Frank Nielsen, James C. Spall, and the anonymous
reviewers for their insightful feedback and many constructive comments.
 
\bibliographystyle{plainnat}

\appendix
\newpage
\onecolumn

\counterwithin{theorem}{section}
\setcounter{figure}{0}
\setcounter{table}{0}

\renewcommand\thesection{\Alph{section}}
\renewcommand\thesubsection{\thesection.\Roman{subsection}}
\renewcommand\thesubsubsection{\thesection.\Roman{subsection}.\arabic{subsubsection}}

\renewcommand{\thetable}{\Roman{table}}
\renewcommand{\thefigure}{\Roman{figure}}

\newcommand{\tocrow}[1]{\noindent $\hookrightarrow$ \cref{#1}: \nameref{#1} \hrulefill Pg \pageref{#1}\\
}

\makeatletter
\newcommand\setcurrentname[1]{\def\@currentlabelname{#1}}
\makeatother

\begin{center}
\Huge{Supplementary Material}
\end{center}
\begin{abstract}
This is the Supplementary Material to Paper "\papertitle". To
differentiate with the numberings in the main file, the numbering of Theorems is letter-based (A, B, ...).
\end{abstract}

\section*{Table of Contents}

\noindent \textbf{Additional Results}\\
\tocrow{sec:teaser}
\tocrow{sec:pf_efim_conditional}
\tocrow{sec:offdiagonal}
\tocrow{sec:trace_variance_full}
\tocrow{sec:cat_2_moment}
\tocrow{sec:additional_experimental_results}
\tocrow{sec:empirical_cont}

\noindent \textbf{Proof}\\
\tocrow{sec:pf_efimji}
\tocrow{sec:pf_fim_bound}
\tocrow{sec:pf_fima_element_bound}
\tocrow{sec:pf_fimb_element_bound}
\tocrow{sec:pf_variation_lambda_bounds}
\tocrow{sec:pf_bounded_suff_stat}
\tocrow{sec:pf_full_var_scale_bound}
\tocrow{sec:pf_efim_var_joint}
\tocrow{sec:pf_joint_first_term}
\tocrow{sec:pf_spectrum_regression}
\tocrow{sec:pf_spectrum_classification}
\tocrow{sec:pf_empfim_covar}
\tocrow{sec:pf_empefim_var}
\tocrow{sec:pf_empefim_var_joint}
\newpage

\section{Natural Gradient Toy Data Example}
\setcurrentname{Natural Gradient Toy Data Example}
\label{sec:teaser}

The following section describes the data and models of \cref{fig:teaser}.
In general, the toy data and models constructed consists of taking 1D output
setting presented by \cref{sec:learning_setting}, where the NN $h_{\bm \theta}(\bm x)$
is a linear function.

\subsection{Data}

The 2D input data $\bm x \in \Re^2$ is sampled from a simple isotropic centered
Gaussian $\mathcal{N}(\bm 0, \bm I)$.
A linear response variable $a \in \Re$ is defined by the following:
\begin{equation*}
    a = \bm \theta_{\rm true}^{\T} \bm x; \quad \textrm{where $\bm \theta_{\rm true} = (1, 1)$.}
\end{equation*}
The outputs of $y$ for the cases of regression and classification are
differentiated by how $a$ is used in sampling:
\begin{align*}
    y_{\textrm{regression}} & \sim \mathcal{N}(\mu = 1, \sigma = 1) \\
    y_{\textrm{classification}} & \sim \mathrm{Bern}(p = \sigma(a)),
\end{align*}
where $\sigma(z) = (1 + \exp(-z))$ is the logistic function.

\subsection{Model}

The model $h_{\bm \theta}(\bm x) = \bm \theta^{\T} \bm x$ consists of a linear
function; and the exponential family \cref{eq:exp} is chosen to be a 1D
isotropic Gaussian and binary multinomial distribution (Bernoulli) for
regression and classification, respectively. This corresponds to
\cref{sec:learning_setting} for 1D outputs. Notice that the model exactly
matches the data generating function.

\subsection{Training}

Natural gradient descent (NGD) is taken using both $\efima(\bm \theta)$ and
$\efimb(\bm \theta)$. The estimated FIM utilize only a single $y \mid \bm x$
sample for each input $\bm x$.
We use a learning rate of $\eta = 0.01$ over $256$
epochs. A training set of $256$ data points are sampled. At each iteration of
NGD, we sample $4$ random points from the training set for the update.
The test loss is evaluated on a test set of $4096$ data points sampled.

\subsection{Variance Plot of Example}

Larger version of \cref{fig:teaser} with additional variance sum plotted over
time is given by \cref{fig:teaser_var}.
Note that variance sum is including off diagonals. Further note that the
variance is calculated over joint sample in $(\bm x, y)$.

\begin{figure*}[b!]
    \centering
\includegraphics[width=1.0\textwidth, trim={7pt 7pt 7pt 7pt}, clip]{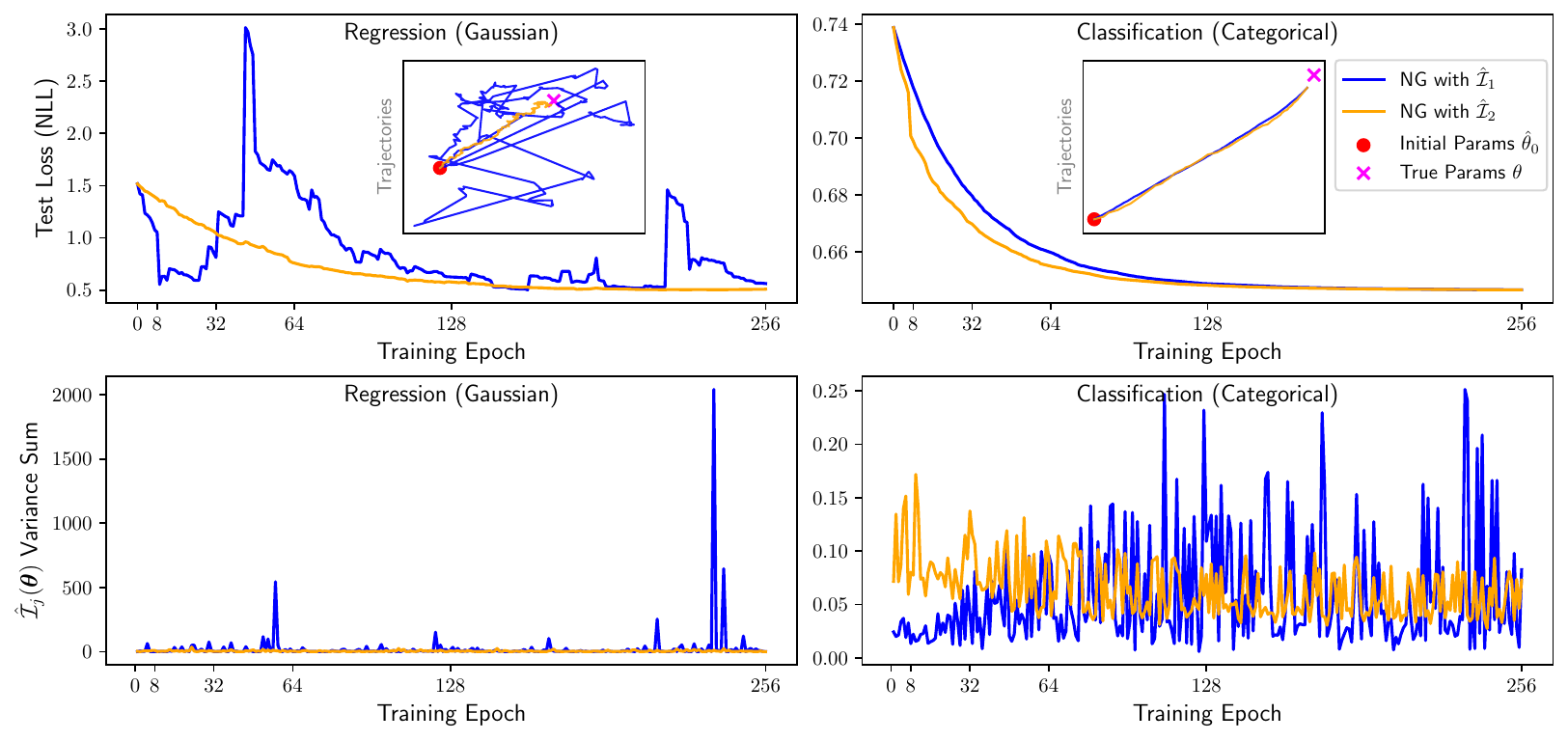}\caption{Extended version of \cref{fig:teaser} with the sum of variance of FIM estimators over epochs.}\label{fig:teaser_var}\end{figure*}

\subsection{Other Seeds}

We further present other random seed of the teaser plot in \cref{fig:teaser_alt_1_1,fig:teaser_alt_2_2,fig:teaser_alt_3_3}.

\begin{figure*}[b!]
    \centering
    \includegraphics[width=1.0\textwidth, trim={7pt 7pt 7pt 7pt}, clip]{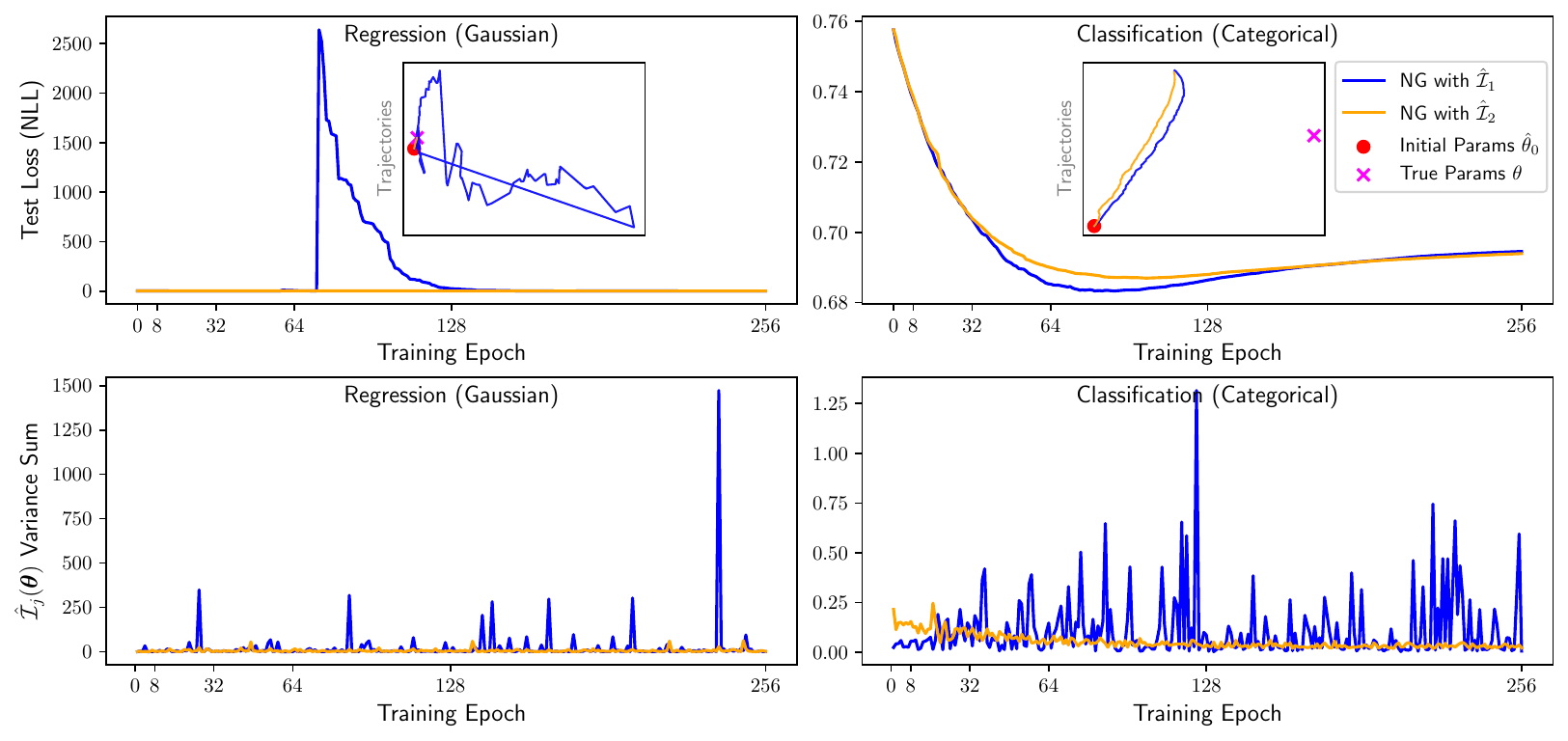}\caption{\cref{fig:teaser_var} over different randomizations (a).}\label{fig:teaser_alt_1_1}\end{figure*}

\begin{figure*}[b!]
    \centering
    \includegraphics[width=1.0\textwidth, trim={7pt 7pt 7pt 7pt}, clip]{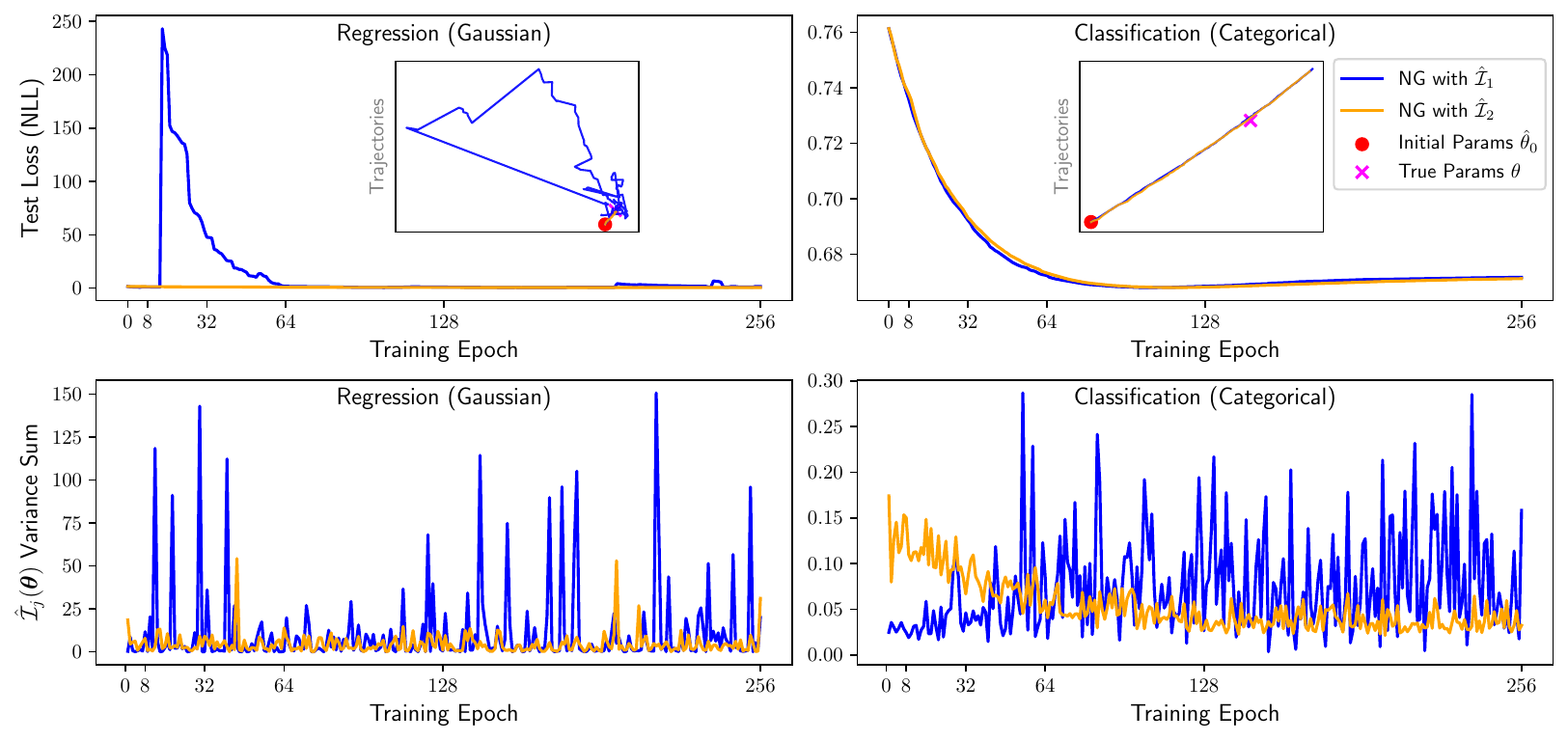}\caption{\cref{fig:teaser_var} over different randomizations (b).}\label{fig:teaser_alt_2_2}\end{figure*}

\begin{figure*}[b!]
    \centering
    \includegraphics[width=1.0\textwidth, trim={7pt 7pt 7pt 7pt}, clip]{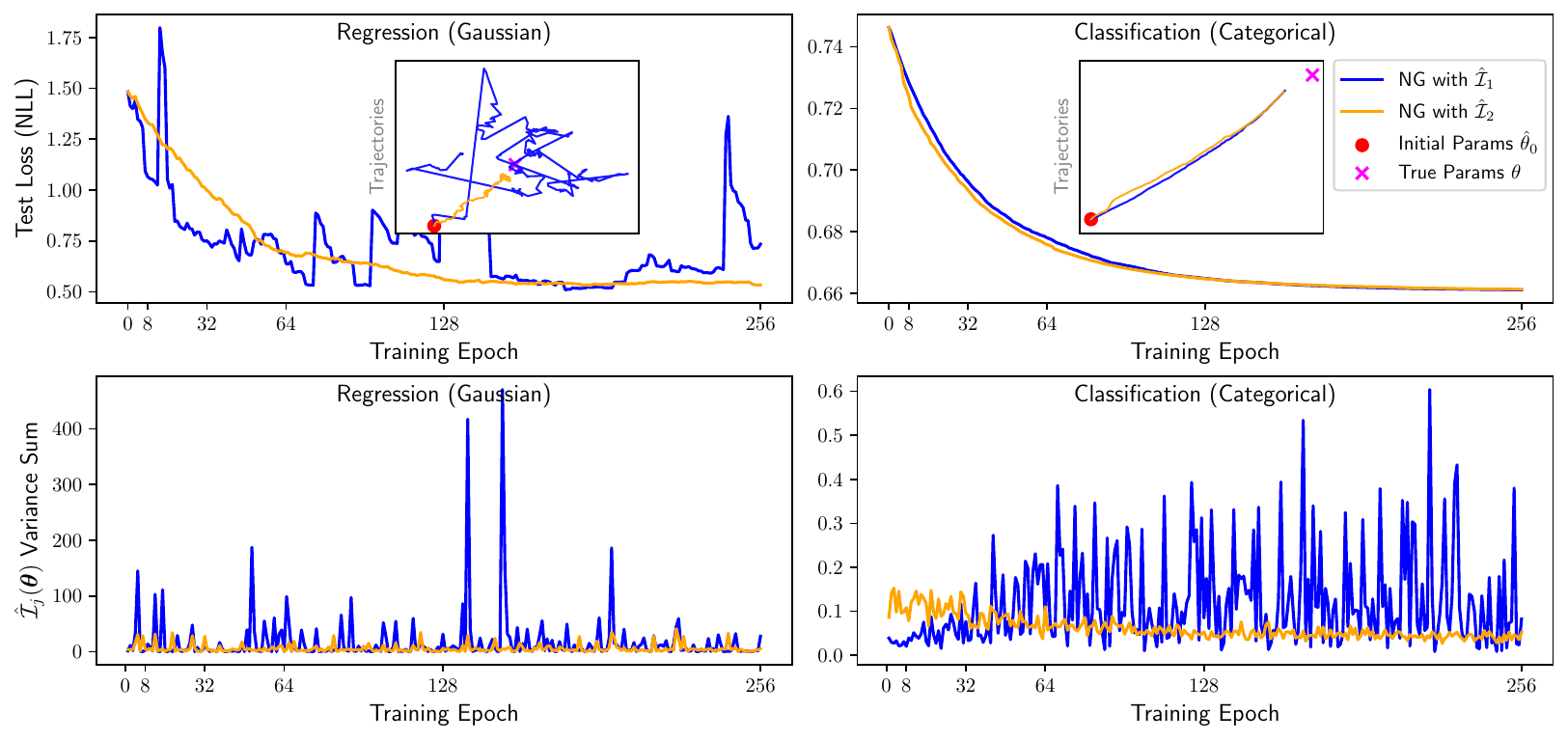}\caption{\cref{fig:teaser_var} over different randomizations (c).}\label{fig:teaser_alt_3_3}\end{figure*}
 \section{The Conditional Variances in Closed Form}
\setcurrentname{The Conditional Variances in Closed Form}
\label{sec:pf_efim_conditional}

We consider the diagonal entries of the conditional FIM \( \fim(\theta_i \mid \bm{x}) \) and the conditional variances \( \edvj(\theta_{i} \mid \bm x) \) of its estimators in closed form.
\begin{proof}[Proof of \Cref{cor:efim_conditional}]
	The proof directly follows from \citet[Equation 6]{varfim}, \citet[Theorem 4]{varfim}, and \citet[Theorem 6]{varfim}.
	In what follows, we provide a proof of the Lemma utilizing the notation of this paper for completeness.
	We prove the statement one equation at a time.

	For \cref{eq:diag_fim}, we consider the following computation.
\begin{align*}
		 & \fim(\theta_i \mid \bm{x})                                                                   \\
		 & = \expect_{p(\bm y \mid \bm x; \bm \theta)}\left[
			\frac{\partial \log p(\bm y \mid \bm x; \bm \theta)}{\partial \bm \theta}\frac{\partial \log p(\bm y \mid \bm x; \bm \theta)}{\partial \bm \theta^{\T}}
		\right]                                                                                         \\
		 & = \expect_{p(\bm y \mid \bm x; \bm \theta)}\left[
			\frac{\partial \left( \bm{t}^\T(\bm{y})\bm{h}_{\bm \theta}(\bm x) - F(\bm{h}_{\bm \theta}(\bm x)) \right)}{\partial \bm \theta}\frac{\partial \left( \bm{t}^\T(\bm{y})\bm{h}_{\bm \theta}(\bm x) - F(\bm{h}_{\bm \theta}(\bm x)) \right)}{\partial \bm \theta^{\T}}
		\right]                                                                                         \\
		 & = \expect_{p(\bm y \mid \bm x; \bm \theta)}\left[
			\left( \frac{\partial \bm{h}_{\bm \theta}(\bm x)}{\partial \bm \theta} \right)^{\T}
			\hspace{-5pt}
			\left( \bm{t}(\bm{y}) - \frac{\partial F(\bm{h})}{\partial \bm{h}} \bigg\vert_{{\bm h} = \bm{h}_{\bm \theta}(\bm x)} \right)
			\hspace{-2pt}
			\left( \bm{t}(\bm{y}) - \frac{\partial F(\bm{h})}{\partial \bm{h}} \bigg\vert_{{\bm h} = \bm{h}_{\bm \theta}(\bm x)} \right)^{\T}
			\hspace{-5pt}
			\left( \frac{\partial \bm{h}_{\bm \theta}(\bm x)}{\partial \bm \theta^{\T}} \right)
		\right]                                                                                         \\
		 & = \expect_{p(\bm y \mid \bm x; \bm \theta)}\left[
		\left( \frac{\partial \bm{h}_{\bm \theta}(\bm x)}{\partial \bm \theta} \right)^{\T} \left( \bm{t}(\bm{y}) - {\bm \eta}(\bm x) \right)
		\left( \bm{t}(\bm{y}) - {\bm \eta}(\bm x) \right)^{\T} \left( \frac{\partial \bm{h}_{\bm \theta}(\bm x)}{\partial \bm \theta^{\T}} \right)
		\right]                                                                                         \\
		 & = \left( \frac{\partial \bm{h}_{\bm \theta}(\bm x)}{\partial \bm \theta} \right)^{\T} \left(
		\expect_{p(\bm y \mid \bm x; \bm \theta)}\left[
		\left( \bm{t}(\bm{y}) - {\bm \eta}(\bm x) \right)
		\left( \bm{t}(\bm{y}) - {\bm \eta}(\bm x) \right)^{\T}
		\right]
		\right) \left( \frac{\partial \bm{h}_{\bm \theta}(\bm x)}{\partial \bm \theta^{\T}} \right)     \\
		 & = \left( \frac{\partial \bm{h}_{\bm \theta}(\bm x)}{\partial \bm \theta} \right)^{\T}
		\fim(\bm h \mid \bm x)
		\left( \frac{\partial \bm{h}_{\bm \theta}(\bm x)}{\partial \bm \theta^{\T}} \right).
	\end{align*}
	Using Einstein notation and restricting the partial derivative to a component of $\bm \theta$ yields the desired result.

	For \cref{eq:efima_cond_var}, we shorthand $\bm \delta(\bm y) = \bm t(\bm y) - \bm \eta(\bm x)$.
Note that the $\efima(\theta_i \mid \bm{x})$ estimator can be written as follows:
\begin{align*}
		\efima(\theta_i \mid \bm{x})
		 & = \frac{1}{N} \sum_{k=1}^N
		\left(
		\frac{\partial F(\bm h(\bm x))}{\partial \theta_{i}}
		-
		\frac{\partial\bm{h}^a(\bm x)}{\partial\theta_i}
		\cdot
		\bm{t}_a(\bm{y}_k)
		\right)^2                     \\
		 & = \frac{1}{N} \sum_{k=1}^N
		\left(
		\frac{\partial\bm{h}^a(\bm x)}{\partial\theta_i}
		\cdot
		\bm \eta_a(\bm x)
		-
		\frac{\partial\bm{h}^a(\bm x)}{\partial\theta_i}
		\cdot
		\bm{t}_a(\bm{y}_k)
		\right)^2                     \\
		 & = \frac{1}{N} \sum_{k=1}^N
		\partial\bm{h}^a(\bm x)
		\partial\bm{h}^b(\bm x)
		{\bm \delta}_a(\bm y_k)
		{\bm \delta}_b(\bm y_k).
	\end{align*}
	Thus, we have
\begin{align*}
		 & \edva(\theta_i \mid \bm x)                         \\
		 & = \var\left( \efima( \theta_i \mid \bm{x}) \right) \\
		 & = \var\left(
		\frac{1}{N} \sum_{k=1}^N
		\partial\bm{h}^a(\bm x)
		\partial\bm{h}^b(\bm x)
		{\bm \delta}_a(\bm y_k)
		{\bm \delta}_b(\bm y_k)
		\right)                                               \\
		 & = \frac{1}{N} \cdot \var\left(
		\partial\bm{h}^a(\bm x)
		\partial\bm{h}^b(\bm x)
		{\bm \delta}_a(\bm y_k)
		{\bm \delta}_b(\bm y_k)
		\right)                                               \\
		 & = \frac{1}{N} \cdot \left(
		\expect_{p(\bm y \mid \bm x; \bm \theta)}\left[
				\left(
				\partial\bm{h}^a(\bm x)
				\partial\bm{h}^b(\bm x)
				{\bm \delta}_a(\bm y)
				{\bm \delta}_b(\bm y)
				\right) ^2 \right]
		-
		\expect_{p(\bm y \mid \bm x; \bm \theta)}\left[
				\partial\bm{h}^a(\bm x)
				\partial\bm{h}^b(\bm x)
				{\bm \delta}_a(\bm y)
				{\bm \delta}_b(\bm y)
				\right]^2
		\right).
	\end{align*}
Let us compute each of these terms.
	\begin{align*}
		 & \expect_{p(\bm y \mid \bm x; \bm \theta)}\left[
			\left(
			\partial\bm{h}^a(\bm x)
			\partial\bm{h}^b(\bm x)
			{\bm \delta}_a(\bm y)
			{\bm \delta}_b(\bm y)
		\right) ^2 \right]                                 \\
		 & =
		\expect_{p(\bm y \mid \bm x; \bm \theta)}\left[
			\partial\bm{h}^a(\bm x)
			\partial\bm{h}^b(\bm x)
			\partial\bm{h}^c(\bm x)
			\partial\bm{h}^d(\bm x)
			{\bm \delta}_a(\bm y)
			{\bm \delta}_b(\bm y)
			{\bm \delta}_c(\bm y)
			{\bm \delta}_d(\bm y)
		\right]                                            \\
		 & =
		\partial\bm{h}^a(\bm x)
		\partial\bm{h}^b(\bm x)
		\partial\bm{h}^c(\bm x)
		\partial\bm{h}^d(\bm x)
		\expect_{p(\bm y \mid \bm x; \bm \theta)}\left[
			{\bm \delta}_a(\bm y)
			{\bm \delta}_b(\bm y)
			{\bm \delta}_c(\bm y)
			{\bm \delta}_d(\bm y)
		\right]                                            \\
		 & =
		\partial\bm{h}^a(\bm x)
		\partial\bm{h}^b(\bm x)
		\partial\bm{h}^c(\bm x)
		\partial\bm{h}^d(\bm x)
		\kurt_{a b c d}^{p}(\bm t \mid \bm x).
	\end{align*}
	And,
\begin{align*}
		 & \left(\expect_{p(\bm y \mid \bm x; \bm \theta)}\left[
				\partial\bm{h}^a(\bm x)
				\partial\bm{h}^b(\bm x)
				{\bm \delta}_a(\bm y)
				{\bm \delta}_b(\bm y)
		\right]\right)^2                                         \\
		 & =
		\left(
		\partial\bm{h}^a(\bm x)
		\partial\bm{h}^b(\bm x)
		\expect_{p(\bm y \mid \bm x; \bm \theta)}\left[
			{\bm \delta}_a(\bm y)
			{\bm \delta}_b(\bm y)
		\right]\right)^2                                         \\
		 & =
		\left(
		\partial\bm{h}^a(\bm x)
		\partial\bm{h}^b(\bm x)
		\fim_{a b}(\bm h \mid \bm x)
		\right)^2                                                \\
		 & =
		\partial\bm{h}^a(\bm x)
		\partial\bm{h}^b(\bm x)
		\partial\bm{h}^c(\bm x)
		\partial\bm{h}^d(\bm x)
		\fim_{a b}(\bm h \mid \bm x)
		\fim_{c d}(\bm h \mid \bm x)                             \\
		 & =
		\partial\bm{h}^a(\bm x)
		\partial\bm{h}^b(\bm x)
		\partial\bm{h}^c(\bm x)
		\partial\bm{h}^d(\bm x)
		\left(
		\fim(\bm h \mid \bm x)
		\otimes
		\fim(\bm h \mid \bm x)
		\right)_{a b c d}
	\end{align*}
	Simplifying all term yields the result as required.

	Finally, for \cref{eq:efimb_cond_var} we consider the following simplification of the estimator.
\begin{align*}
		\efimb(\theta_i \mid \bm x)
		 & =
		\frac{1}{N} \sum_{k=1}^N \left(
		\frac{\partial^2 F(\bm{h}(\bm{x}))}{\partial^2\theta_i}
		-
		\frac{\partial^2\bm{h}^a(\bm{x})}{\partial^2\theta_i} \cdot \bm{t}_a(\bm{y}_k)
		\right)                                                                                               \\
		 & =
		\frac{1}{N} \sum_{k=1}^N \left(
		\frac{\partial}{\partial\theta_i} \left( \frac{\partial \bm{h}^a(\bm{x})}{\partial\theta_i} \cdot \bm \eta_a(\bm x) \right)
		-
		\frac{\partial^2\bm{h}^a(\bm{x})}{\partial^2\theta_i} \cdot \bm{t}_a(\bm{y}_k)
		\right)                                                                                               \\
		 & =
		\frac{1}{N} \sum_{k=1}^n \left(
		\frac{\partial \bm{h}^a(\bm{x})}{\partial\theta_i} \cdot \frac{\partial \bm \eta_a(\bm x)}{\partial \theta_i}
		+
		\frac{\partial^2 \bm{h}^a(\bm{x})}{\partial^2\theta_i} \cdot \bm \eta_a(\bm x)
		-
		\frac{\partial^2\bm{h}^a(\bm{x})}{\partial^2\theta_i} \cdot \bm{t}_a(\bm{y}_k)
		\right)                                                                                               \\
		 & =
		\frac{1}{N} \sum_{k=1}^n \left(
		\partial_i \bm{h}^a(\bm{x}) \cdot \frac{\partial \bm \eta_a(\bm x)}{\partial \theta_i}
		-
		\partial_i^2 \bm{h}^a(\bm{x}) \cdot \bm \delta_a(\bm y_k)
		\right)                                                                                               \\
		 & =
		\frac{1}{N} \sum_{k=1}^n \left(
		\partial_i \bm{h}^a(\bm{x}) \cdot \partial_i \bm{h}^b(\bm{x}) \cdot \fim_{a b}(\bm h \mid \bm x)
		-
		\partial_i^2 \bm{h}^a(\bm{x}) \cdot \bm \delta_a(\bm y_k)
		\right)                                                                                               \\
		 & = \partial_i \bm{h}^a(\bm{x}) \cdot \partial_i \bm{h}^b(\bm{x}) \cdot \fim_{a b}(\bm h \mid \bm x)
		-
		\frac{1}{N} \sum_{k=1}^n \left(
		\partial_i^2 \bm{h}^a(\bm{x}) \cdot \bm \delta_a(\bm y_k)
		\right),
	\end{align*}
	where the last line follows from \citet[Lemma 2]{varfim} (a result of $p(\bm y \mid \bm x; \bm \theta)$ following an exponential family, see \cite{aIGI}).

	Notice that the first quantity is a constant \wrt the randomness of $\bm y_k$. As such, we can simplify the variance calculation as follows.
\begin{align*}
		 & \edvb(\theta_i \mid \bm x) = \var\left(
		\efimb(\theta_i \mid \bm x)
		\right)                                                                                        \\
		 & = \var\left(
		\partial_i \bm{h}^a(\bm{x}) \cdot \partial_i \bm{h}^b(\bm{x}) \cdot \fim_{a b}(\bm h \mid \bm x)
		-
		\frac{1}{N} \sum_{k=1}^n \left(
		\partial_i^2 \bm{h}^a(\bm{x}) \cdot \bm \delta_a(\bm y_k)
		\right)
		\right)                                                                                        \\
		 & = \var\left(
		\frac{1}{N} \sum_{k=1}^n \left(
		\partial_i^2 \bm{h}^a(\bm{x}) \cdot \bm \delta_a(\bm y_k)
		\right)
		\right)                                                                                        \\
		 & = \frac{1}{N} \var\left(
		\partial_i^2 \bm{h}^a(\bm{x}) \cdot \bm \delta_a(\bm y)
		\right)                                                                                        \\
		 & = \frac{1}{N} \cdot \left(
		\expect_{p(\bm y \mid \bm x; \bm \theta)}\left[
				\left( \partial_i^2 \bm{h}^a(\bm{x}) \cdot \bm \delta_a(\bm y) \right)^2
				\right]
		-
		\expect_{p(\bm y \mid \bm x; \bm \theta)}\left[
				\partial_i^2 \bm{h}^a(\bm{x}) \cdot \bm \delta_a(\bm y)
				\right]^2
		\right)                                                                                        \\
		 & = \frac{1}{N} \cdot \left(
		\expect_{p(\bm y \mid \bm x; \bm \theta)}\left[
			\partial_i^2 \bm{h}^a(\bm{x}) \cdot \partial_i^2 \bm{h}^b(\bm{x}) \cdot \bm \delta_a(\bm y)  \cdot \bm \delta_b(\bm y)
			\right]
		-
		\partial_i^2 \bm{h}^a(\bm{x}) \cdot
		\expect_{p(\bm y \mid \bm x; \bm \theta)}\left[
			\bm \delta_a(\bm y)
			\right]^2
		\right)                                                                                        \\
		 & = \frac{1}{N} \cdot \left(
		\partial_i^2 \bm{h}^a(\bm{x}) \cdot \partial_i^2 \bm{h}^b(\bm{x}) \cdot
		\expect_{p(\bm y \mid \bm x; \bm \theta)}\left[
			\bm \delta_a(\bm y)  \cdot \bm \delta_b(\bm y)
			\right]
		\right)                                                                                        \\
		 & = \frac{1}{N} \cdot \partial_i^2 \bm{h}^a(\bm{x}) \cdot \partial_i^2 \bm{h}^b(\bm{x}) \cdot
		\fim_{a b}(\bm h \mid \bm x),
	\end{align*}
	where the second last line follows from the fact that $\bm \eta(\bm x) = \expect_{p(\bm y \mid \bm x; \bm \theta)}[\bm t (\bm y)]$ and thus $\expect_{p(\bm y \mid \bm x; \bm \theta)}[\bm \delta(\bm y)] = 0$.
	This yields the desired result.
\end{proof}

\Cref{cor:efim_conditional} shows that, for the former, \( \edva(\theta_{i} \mid \bm x) \) only depends on 1st order derivatives; while \( \edvb(\theta_{i} \mid \bm x) \) only depends on the 2nd order derivatives. For the latter, \( \edva(\theta_{i} \mid \bm x) \) depends on both the 2nd and 4th central moments of \( \bm t(\bm y) \); while \( \edvb(\theta_{i} \mid \bm x) \) only depends on the 2nd central moments.

Given $\fim_{ab}(\bm h \mid \bm x)$ and $\partial_{i}\bm{h}^a(\bm x)$, the computational complexity of all diagonal entries $\fim(\theta_i \mid \bm{x})$ is $\bigoh(T^2\dim(\bm\theta))$.
If $\kurt^{p}_{abcd}(\bm t \mid \bm x)$ and $\partial^{2}_{i} {\bm h}^{a}(\bm x)$ are given,
then the computational complexity of the variances in \cref{eq:efima_cond_var,eq:efimb_cond_var} is respectively $\bigoh(T^4\dim(\bm\theta))$ and $\bigoh(T^2\dim(\bm\theta))$.
Each requires to evaluate a $T\times\dim(\bm\theta)$ matrix, either $\partial_{i}\bm{h}^a(\bm x)$
or $\partial^{2}_{i} {\bm h}^{a}(\bm x)$ --- which can be expensive to calculate for the latter.
This is why we need efficient estimators and / or bounds for the tensors on the LHS of \cref{eq:diag_fim,eq:efima_cond_var,eq:efimb_cond_var}.

 \section{Off-Diagonal Variance}
\setcurrentname{Off-Diagonal Variance}
\label{sec:offdiagonal}

We consider an off-diagonal version of the bound given by \cref{thm:efim_conditional_bounds}. Notice that in terms of the dependence on neural network weights, the only change is splitting the ``responsibility'' of the $i$'th and $j$'th parameter norms.

\begin{theorem}
\label{thm:offdiag_efim_conditional_bounds}
    \( \forall\bm x \in \Re^{I} \),
    \begingroup
    \allowdisplaybreaks
    \begin{align}
\var\left( \efima(\bm \theta \mid \bm x)_{ij} \right)
        &
        \leq \frac{1}{N}
        \cdot \Vert \partial_{i}{\bm h}(\bm x) \Vert^{2}_{2}
        \cdot \Vert \partial_{j}{\bm h}(\bm x) \Vert^{2}_{2}
        \cdot
        \tilde{\gamma}_{\max}\left( \mathcal{M} \right),
\label{thm:offdiag_fima_element_bound} \\[9pt]
\var\left( \efimb(\bm \theta \mid \bm x)_{ij} \right)
        &
        \leq
        \frac{1}{N}
            \cdot
            \Vert \partial^{2}_{ij} {\bm h}(\bm x) \Vert^2_{2}
            \cdot
            \gamma_{\max}(\fim(\bm{h} \mid \bm x)),
            \label{thm:offdiag_fimb_element_bound}
    \end{align}
    \endgroup
    where
\begin{align*}
\tilde{\gamma}_{\max}\left( \mathcal{M} \right) &= \sup_{\bm u: \Vert \bm u \Vert_2 = 1, \bm v: \Vert \bm v \Vert_2 = 1} \bm u^{a} \bm v^{b} \bm u^{c} \bm v^{d} \mathcal{M}_{abcd} \\
\gamma_{\max}(M) &= \sup_{\bm u: \Vert \bm u \Vert_2 = 1, \bm v: \Vert \bm v \Vert_2 = 1} \bm u^{a} \bm v^{b} {M}_{ab}.
    \end{align*}
\end{theorem}
\begin{proof}
    The proof follows similarly to \cref{sec:pf_fima_element_bound,sec:pf_fimb_element_bound}, where the primary difference is just swapping the regular eigenvalue-like quantities with the $\gamma$ variational forms.
\end{proof}

It should be noted that the corresponding lower bounds become trivial as the additional degree of freedom of having an $\inf$ over both $\bm u$ and $\bm v$ causes the corresponding $\gamma_{\min}$ definition to have negative quantities.
Although it is unclear what the ``tensor-like'' variational quantity $\tilde{\gamma}_{\max}\left( \mathcal{M} \right)$ will be, for a matrix, we have the following equivalence.
\begin{lemma}
    $\gamma_{\max}(A) = s_{\max}(A)$, where $s_{\max}(A)$ is the maximum singular value of $A$.
\end{lemma}
\begin{proof}
    The proof follows from optimizing over $\bm u $ and $\bm v$ separately:
\begin{align*}
        \gamma_{\max}(A)
        &= \sup_{\bm u: \Vert \bm u \Vert_2 = 1} \sup_{\bm v: \Vert \bm v \Vert_2 = 1} \bm u^{a} \bm v^{b} {A}_{ab} \\
        &= \sup_{\bm u: \Vert \bm u \Vert_2 = 1} \sup_{\bm v: \Vert \bm v \Vert_2 = 1} \bm u^\T A \bm v \\
        &= \sup_{\bm v: \Vert \bm v \Vert_2 = 1} \frac{(A \bm v)^\T A \bm v}{\Vert A \bm v \Vert_2} \\
        &= \sup_{\bm v: \Vert \bm v \Vert_2 = 1} \sqrt{ \bm v^T (A^\T A) \bm v}.
    \end{align*}
    This is equivalent to the square root of the maximal eigenvalue of $A^\T A$, which is exactly the maximum singular value.
\end{proof}

Hence for the $\efimb$ we have the following.
\begin{corollary}
    \( \forall\bm x \in \Re^{I} \),
    \begin{align}
        \var\left( \efimb(\bm \theta \mid \bm x)_{ij} \right)
        &
        \leq
        \frac{1}{N}
            \cdot
            \Vert \partial^{2}_{ij} {\bm h}(\bm x) \Vert^2_{2}
            \cdot
            s_{\max}(\fim(\bm{h} \mid \bm x)).
            \label{thm:offdiag_fimb_element_bound_s}
    \end{align}
\end{corollary}
 \section{Bounding the Trace Variance by Full Spectrum}
\setcurrentname{Bounding the Trace Variance by Full Spectrum}
\label{sec:trace_variance_full}

\begin{theorem}\label{thm:full_var_trace_bound}
    For any \( \bm x \in \Re^{I} \),
    \begin{align}
        &\sum_{t=1}^T s^2_t(\partial \bm{h}(\bm{x})) \cdot \lambda_{T-t+1}\left( \fim(\bm{h} \mid \bm{x}) \right) \leq \trace\left(\fim(\bm{\theta} \mid \bm{x})\right) \nonumber \\
        &\quad \quad
        \leq
        \sum_{t=1}^T s^2_t(\partial \bm{h}(\bm{x})) \cdot \lambda_{t} \left( \fim(\bm{h} \mid \bm{x}) \right), \label{thm:fim_trace_bound} \\[9pt]
        &
        \frac{1}{N} \cdot \sum_{t=1}^T s^2_t\left( \vJac(\bm{h} \mid \bm{x} ) \right) \cdot \lambda_{T-t+1}\left( \overline{\mathcal{M}} \right)
        \leq \edva(\bm{\theta} \mid \bm x) \nonumber \\
        & \quad \quad \frac{1}{N} \cdot \sum_{t=1}^T s^2_t\left( \vJac(\bm{h} \mid \bm{x} ) \right) \cdot \lambda_t\left( \overline{\mathcal{M}} \right),
        \label{thm:fima_trace_bound} \\[9pt]
        &\frac{1}{N} \cdot \sum_{t=1}^T s^2_t(\dhess(\bm h \mid \bm x)) \cdot \lambda_{T-t+1}\left( \fim(\bm{h} \mid \bm{x}) \right)
        \leq
        \edvb(\bm \theta \mid \bm x) \nonumber \\
        &\quad \quad
        \leq
        \frac{1}{N} \cdot \sum_{t=1}^T s^2_t(\dhess(\bm h \mid \bm x)) \cdot \lambda_{t} \left( \fim(\bm{h} \mid \bm{x}) \right), \label{thm:fimb_trace_bound}
    \end{align}
    where \( s^2_i(A) = \lambda_i(A^\T A) \) denotes the \( i \)-th singular values, \(\overline{\mathcal{M}} \) is the ``reshaped'' matrix of \( \mathcal{M} \) defined in \cref{thm:efim_conditional_bounds} --- \ie there exists \( j, k\) such that \( \overline{\mathcal{M}}_{jk} = \mathcal{M}_{abcd}\) for all \( a,b,c,d\),
    { $$ \dhess(\bm h \mid \bm x) = (\diag(\hess(\bm h_{1} \mid \bm x)), \ldots, \diag(\hess(\bm h_{T} \mid \bm x))), $$ }
    and { $$ \vJac(\bm h \mid \bm x) = (\vect(\partial_1 \bm{h}(\bm {x})\partial_1 \bm{h}(\bm {x})^\T), \ldots, \vect(\partial_T \bm{h}(\bm {x})\partial_T \bm{h}(\bm {x})^\T)). $$ }
\end{theorem}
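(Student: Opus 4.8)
The plan is to reduce each of the three trace quantities to the trace of a product of two symmetric matrices — a Gram matrix assembled from the network derivatives and a moment matrix of $\bm t(\bm y)$ — and then to invoke the trace inequality for products of symmetric matrices. First I would observe that, since the trace of a covariance matrix is the sum of its diagonal entries, the trace variances decouple into sums of the element-wise quantities of \cref{cor:efim_conditional}, irrespective of any cross-correlation between coordinates: $\trace(\fim(\bm\theta\mid\bm x))=\sum_i\fim(\theta_i\mid\bm x)$, $\edva(\bm\theta\mid\bm x)=\sum_i\edva(\theta_i\mid\bm x)$, and $\edvb(\bm\theta\mid\bm x)=\sum_i\edvb(\theta_i\mid\bm x)$.

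Substituting the closed forms \cref{eq:diag_fim,eq:efima_cond_var,eq:efimb_cond_var} and carrying $\sum_i$ inside the index contractions, the first-order derivatives $\partial_i\bm{h}^a$ collect into the Gram matrix $\partial\bm h(\bm x)\,\partial\bm h(\bm x)^\T$ (with $(a,b)$ entry $\sum_i\partial_i\bm{h}^a\partial_i\bm{h}^b$), and the diagonal Hessians $\partial^2_{ii}\bm{h}^a$ collect into $\dhess(\bm h\mid\bm x)^\T\dhess(\bm h\mid\bm x)$. This gives $\trace(\fim(\bm\theta\mid\bm x))=\trace\!\big(\partial\bm h\,\partial\bm h^\T\,\fim(\bm h\mid\bm x)\big)$ and $\edvb(\bm\theta\mid\bm x)=\tfrac1N\trace\!\big(\dhess^\T\dhess\,\fim(\bm h\mid\bm x)\big)$, each a product of a PSD Gram matrix with $\fim(\bm h\mid\bm x)$. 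With the paper's convention $s_t(A)=\lambda_t(A^\T A)$, the eigenvalues of these Gram matrices are exactly $s_t(\partial\bm h(\bm x))$ and $s_t(\dhess(\bm h\mid\bm x))$, so these two cases will fall out immediately once the trace inequality is in place.

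The $\edva$ case is where I expect the main obstacle, and it needs care on two fronts. Contracting the four copies of $\partial_i\bm h$ against $\mathcal M_{abcd}=\kurt^p_{abcd}-\fim_{ab}\fim_{cd}$ naturally produces the Gram matrix $\vJac(\bm h\mid\bm x)\,\vJac(\bm h\mid\bm x)^\T$, whose $((a,b),(c,d))$ entry is $\sum_i\partial_i\bm{h}^a\partial_i\bm{h}^b\partial_i\bm{h}^c\partial_i\bm{h}^d$, together with the matricization $\overline{\mathcal M}$. To legitimately rewrite the contraction as $\tfrac1N\trace(\vJac\,\vJac^\T\,\overline{\mathcal M})$, I would first verify that $\overline{\mathcal M}$ is a genuine \emph{symmetric} matrix, which follows from the partial symmetry $\mathcal M_{abcd}=\mathcal M_{cdab}$ (the fourth central moment $\kurt^p$ is fully symmetric and $\fim_{ab}\fim_{cd}=\fim_{cd}\fim_{ab}$); this lets me use $\sum_{jk}P_{jk}Q_{jk}=\trace(PQ)$ for symmetric $P,Q$. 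The more essential difficulty is that $\overline{\mathcal M}$ is \emph{not} PSD — unlike $\fim(\bm h\mid\bm x)$ it is indefinite — so the argument must not rely on any positivity of the moment factor. Here the Gram matrix and $\overline{\mathcal M}$ are of size $T^2$, so the inequality is applied over their $T^2$ eigenvalues.

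The final step is to apply the trace inequality for products of symmetric matrices \cite{marshall11,tracebounds}: for real symmetric $A,B$ of size $n$ with eigenvalues ordered $\lambda_1\ge\cdots\ge\lambda_n$, $\sum_t\lambda_t(A)\lambda_{n-t+1}(B)\le\trace(AB)\le\sum_t\lambda_t(A)\lambda_t(B)$. Crucially this holds for arbitrary symmetric matrices, not merely PSD ones — diagonalize both, write $\trace(AB)=\sum_{t,u}\lambda_t(A)\lambda_u(B)\,p_{tu}$ with $p_{tu}$ the squared eigenvector overlaps forming a doubly stochastic matrix, and optimize the resulting linear form over the Birkhoff polytope by the rearrangement inequality — so its validity is unaffected by the indefiniteness of $\overline{\mathcal M}$. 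Applying it with $A$ the appropriate Gram matrix and $B\in\{\fim(\bm h\mid\bm x),\,\overline{\mathcal M}\}$, and identifying each $\lambda_t$ of the Gram matrix with the singular value $s_t(\cdot)$, yields \cref{thm:fim_trace_bound,thm:fima_trace_bound,thm:fimb_trace_bound}; the $1/N$ prefactors pass through unchanged.
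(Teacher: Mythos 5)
Your proposal is correct and takes essentially the same route as the paper: rewrite each quantity as the trace of a product of a derivative Gram matrix ($\partial\bm h\,\partial\bm h^\T$, $\vJac^\T\vJac$, or $\dhess^\T\dhess$) with the corresponding moment matrix ($\fim(\bm h\mid\bm x)$ or $\overline{\mathcal M}$), then apply the Ruhe/Marshall--Olkin trace inequality for symmetric matrices and identify the Gram eigenvalues with the squared singular values. You additionally make explicit some details the paper glosses over --- the symmetry of $\overline{\mathcal M}$ needed to apply the inequality, the fact that the inequality requires no positive semi-definiteness (relevant since $\overline{\mathcal M}$ is indefinite), and that the $\edva$ case runs over $T^2$ rather than $T$ eigenvalues --- but the core argument is identical.
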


\begin{proof}
    The proof follows from a generalized Ruhe's trace inequality \citep{marshall11}:
\begin{theorem}
        \label{thm:ruhe}
        For \( A, B \in \Re^{n \times n} \) Hermitian matrices, we have that
        \begin{equation*}
        \sum_{i=1}^n \lambda_i(A) \cdot \lambda_{n-i+1}(B)
        \leq
        \trace\left( AB \right)
        \leq
        \sum_{i=1}^n \lambda_i(A) \cdot \lambda_{i}(B).
        \end{equation*}
    \end{theorem}

    We prove the result for each equations.

    For readability, we let \( J^{ia} = \partial_i\bm{h}^a(\bm{x}) \).

    \textbf{For \cref{thm:fim_trace_bound}:}

    One can notice that the trace of the FIM can exactly be expressed as the trace of two matrices.
\begin{align*}
        \trace \left(\fim(\bm \theta \mid \bm{x})\right)
        &= \sum_{i=1}^{\dim(\bm \theta)} \partial_i\bm{h}^a(\bm{x}) \partial_i\bm{h}^b(\bm{x}) \fim_{ab}(\bm h \mid \bm x) \\
        &= \fim_{ab}(\bm h \mid \bm x) \sum_{i=1}^{\dim(\bm \theta)} J^{ia} J^{ib}  \\
        &= \fim_{ab}(\bm h \mid \bm x) \sum_{i=1}^{\dim(\bm \theta)} (J^{\T})^{ai} J^{ib}  \\
        &= \fim_{ab}(\bm h \mid \bm x) (J^{\T}J)^{ab}  \\
        &= \trace \left( (J^{\T}J) \fim(\bm h \mid \bm x) \right).
    \end{align*}
    Thus, noting that the eigenvalue of the ``squared'' matrix is the matrix's singular value \( \lambda_t(J^{\T}J) = s^2_t(J) \), with \cref{thm:ruhe}, we have that:
\begin{align*}
        \sum_{t=1}^T s^2_t(\partial \bm{h}(\bm{x})) \cdot \lambda_{T-t+1}\left( \fim(\bm{h} \mid \bm{x}) \right)
        \leq
        \trace\left(\fim(\bm{\theta} \mid \bm{x})\right)
        \leq
        \sum_{t=1}^T s^2_t(\partial \bm{h}(\bm{x})) \cdot \lambda_{t} \left( \fim(\bm{h} \mid \bm{x}) \right).
    \end{align*}

    \textbf{For \cref{thm:fima_trace_bound}:}

    Noting that \( \mathcal{M}_{abcd} = \kurt^{p}_{abcd}(\bm t \mid \bm x) - \fim_{ab}(\bm h \mid \bm x) \cdot \fim_{cd}(\bm h \mid \bm x) \).
    Furthermore, we have that $$ { \vJac(\bm h \mid \bm x) = (\vect(\partial_1 \bm{h}(\bm {x})\partial_1 \bm{h}(\bm {x})^\T), \ldots, \vect(\partial_T \bm{h}(\bm {x})\partial_T \bm{h}(\bm {x})^\T)). } $$
    Let us define the following 3D tensor with \( \mathcal{J}^{iab} = \partial_i \bm h^a (\bm{x}) \partial_i \bm h^b (\bm{x}) = (\partial_i \bm h(\bm {x}) \partial^\T_i \bm h(\bm {x}))^{ab}\).

    \begin{align*}
        \edva(\bm{\theta} \mid \bm x)
        &=
        \frac{1}{N} \sum_{i=1}^{\dim(\bm \theta)}
        \partial_{i}\bm{h}^a(\bm x)
        \partial_{i}\bm{h}^b(\bm x)
        \partial_{i}\bm{h}^c(\bm x)
        \partial_{i}\bm{h}^d(\bm x)
        \mathcal{M}_{abcd} \\
        &=
        \frac{1}{N}
        \mathcal{M}_{abcd}
        \sum_{i=1}^{\dim(\bm \theta)}
        \mathcal{J}^{iab}
        \mathcal{J}^{icd} \\
        &=
        \frac{1}{N}
        \sum_{a, b = 1}^{T}
        \sum_{c, d = 1}^{T}
        \mathcal{M}_{abcd}
        \sum_{i=1}^{\dim(\bm \theta)}
        \mathcal{J}^{iab}
        \mathcal{J}^{icd} \\
        &=
        \frac{1}{N}
        \sum_{j = 1}^{T^2}
        \sum_{k = 1}^{T^2}
        \overline{\mathcal{M}}_{jk}
        \sum_{i=1}^{\dim(\bm \theta)}
        \vJac^{ij}(\bm h \mid \bm x)
        \vJac^{ik}(\bm h \mid \bm x) \\
        &=
        \frac{1}{N}
        \sum_{j = 1}^{T^2}
        \sum_{k = 1}^{T^2}
        \overline{\mathcal{M}}_{jk}
        (
        \vJac^\T(\bm h \mid \bm x)
        \vJac(\bm h \mid \bm x)
        )^{jk}
        \\
        &=
        \frac{1}{N}
        \trace \left(
        \overline{\mathcal{M}}
        (
        \vJac^\T(\bm h \mid \bm x)
        \vJac(\bm h \mid \bm x)
        )
        \right).
    \end{align*}

    Thus, again simplifying the eigenvalue of the ``squared'' matrix, with \cref{thm:ruhe}, we have that:
\begin{align*}
        \frac{1}{N}
        \sum_{t=1}^T s^2_t(\vJac(\bm h \mid \bm x)) \cdot \lambda_{T-t+1}\left( \overline{\mathcal{M}} \right)
        \leq
        \trace\left(\efima(\bm{\theta} \mid \bm{x})\right)
        \leq
        \frac{1}{N}
        \sum_{t=1}^T s^2_t(\vJac(\bm h \mid \bm x)) \cdot \lambda_{t} \left( \overline{\mathcal{M}} \right).
    \end{align*}

    \textbf{For \cref{thm:fimb_trace_bound}:}

    Similar to \cref{thm:fim_trace_bound}, we only need to rearrange the summation.
    Notice that { $$ \dhess(\bm h \mid \bm x) = (\diag(\hess(\bm h_{1} \mid \bm x)), \ldots, \diag(\hess(\bm h_{T} \mid \bm x))), $$ } thus \( \dhess^{ia}(\bm h \mid \bm x) = \partial^2_{i}(\bm h_{a} \mid \bm x) \).

\begin{align*}
        \edvb(\bm{\theta} \mid \bm x) &=
        \frac{1}{N}
        \sum_{i=1}^{\dim(\bm \theta)}
            \partial^{2}_{i} {\bm h}^{a}(\bm x)
            \partial^{2}_{i} {\bm h}^{b}(\bm x)
        \fim_{ab}(\bm h \mid \bm x) \\
        &=
        \frac{1}{N}
        \fim_{ab}(\bm h \mid \bm x)
        \sum_{i=1}^{\dim(\bm \theta)}
            \partial^{2}_{i} {\bm h}^{a}(\bm x)
            \partial^{2}_{i} {\bm h}^{b}(\bm x)
         \\
        &=
        \frac{1}{N}
        \fim_{ab}(\bm h \mid \bm x)
        \sum_{i=1}^{\dim(\bm \theta)}
        \dhess^{ia}(\bm h \mid \bm x)
        \dhess^{ib}(\bm h \mid \bm x)
         \\
        &=
        \frac{1}{N}
        \fim_{ab}(\bm h \mid \bm x)
        \sum_{i=1}^{\dim(\bm \theta)}
        (\dhess^{\T})^{ai}(\bm h \mid \bm x)
        \dhess^{ib}(\bm h \mid \bm x)
         \\
        &=
        \frac{1}{N}
        \fim_{ab}(\bm h \mid \bm x)
        (
        \dhess^{\T}(\bm h \mid \bm x)
        \dhess(\bm h \mid \bm x)
        )^{ab} \\
        &=
        \frac{1}{N}
        \trace \left(
        \fim(\bm h \mid \bm x)
        (
        \dhess^{\T}(\bm h \mid \bm x)
        \dhess(\bm h \mid \bm x)
        )
        \right).
    \end{align*}

    Thus, again simplifying the eigenvalue of the ``squared'' matrix, with \cref{thm:ruhe}, we have that:
\begin{align*}
        \frac{1}{N}
        \sum_{t=1}^T s^2_t(\dhess(\bm h \mid \bm x)) \cdot \lambda_{T-t+1}\left( \fim(\bm{h} \mid \bm{x}) \right)
        \leq
        \trace\left(\efimb(\bm{\theta} \mid \bm{x})\right)
        \leq
        \frac{1}{N}
        \sum_{t=1}^T s^2_t(\dhess(\bm h \mid \bm x)) \cdot \lambda_{t} \left( \fim(\bm{h} \mid \bm{x}) \right).
    \end{align*}

\end{proof}
 \section{Second Central Moment of Categorical Distribution}
\setcurrentname{Second Central Moment of Categorical Distribution}
\label{sec:cat_2_moment}

\begin{proof}
    We first notice that the exponential family density is given by,
\begin{align*}
        p(y \mid \bm x) = \exp(\bm{h}_y(\bm{x}) - F(\bm h(\bm x)))
    \end{align*}
    and thus also have
    \begin{align*}
        F(\bm h(\bm x)) = \log \sum_{t=1}^{T} \exp(\bm{h}_t(\bm{x}))
    \end{align*}

    The first order derivative follows as,
\begin{align*}
        \frac{\partial F(\bm h)}{\partial \bm h_i} \bigg \vert_{\bm h = \bm h(\bm x)}
        &= \frac{\exp(\bm{h}_i(\bm{x}))}{\sum_{t=1}^{T} \exp(\bm{h}_t(\bm{x}))} = \sigma_i(\bm h(\bm x))
    \end{align*}

    As such, the second order derivatives also follow,
\begin{align*}
        \frac{\partial^2 F(\bm h)}{\partial \bm h_i\partial \bm h_j} \bigg \vert_{\bm h = \bm h(\bm x)}
        &= \frac{\exp(\bm{h}_i(\bm{x})) \delta_{ij} \cdot \sum_{t=1}^{T} \exp(\bm{h}_t(\bm{x})) - \exp(\bm{h}_i(\bm{x})) \exp(\bm{h}_j(\bm{x}))}{\left(\sum_{t=1}^{T} \exp(\bm{h}_t(\bm{x}))\right)^2} \\
        &= \sigma_i(\bm h(\bm x)) \cdot \delta_{ij} - \sigma_i(\bm h(\bm x)) \sigma_j(\bm h(\bm x)).
    \end{align*}

    As such, we have that
\begin{equation*}
        \fim(\bm{h} \mid \bm{x}) = \Diag(\sigma(\bm x)) - \sigma(\bm x) \sigma(\bm x)^{\T}.
    \end{equation*}
\end{proof}
 \section{Empirical Results Continued}
\setcurrentname{Empirical Results Continued}
\label{sec:additional_experimental_results}

In the following section we present additional details and results for the experimental verification we conduct in \cref{sec:learning_setting}.

\subsection{Additional Details}

We note that to calculate the diagonal Hessians required for the bounds and empirical FIM calculations, we utilize the \verb+BackPACK+~\citep{dangel2019backpack} for \verb+PyTorch+.
Additionally, to calculate the sufficient statistics moment's spectrum, we explicitly solve the minimum and maximum eigenvalues via their optimization problems.
For 2D tensors / matrices, we utilize \verb+numpy.linalg.eig+. For 4D tensors, we utilize PyTorch Minimize~\citep{torchmin}, a wrapper for \verb+SciPy+'s optimize function.

\subsection{Additional Plots}

We present \cref{fig:epochs_alternative_1,fig:epochs_alternative_2,fig:epochs_alternative_3,fig:epochs_alternative_4} which are the exact same experiment run in \cref{sec:learning_setting}, but with different initial NN weights and random inputs.

\Cref{fig:epochs_alternative_5,fig:epochs_alternative_6,fig:epochs_alternative_7,fig:epochs_alternative_8,fig:epochs_alternative_9} show the experimental results on a 5-layer MLP and log-sigmoid activation function.
In most of the cases, the FIM and its associated variances quickly go to zero in the first few epochs.

\begin{figure*}[ht]
    \centering
    \includegraphics[width=1.0\textwidth, trim={7pt 7pt 7pt 7pt}, clip]{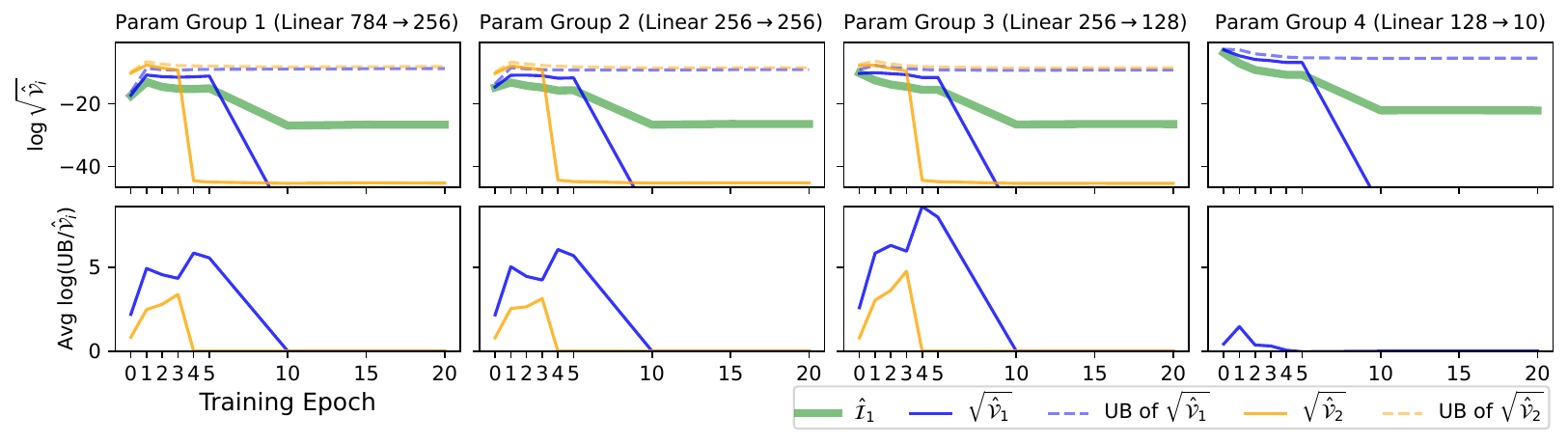}
    \caption{The Fisher information, its variances and bounds of the variances
        \wrt a MLP trained with different initialization
        and a different input \( \bm x \) (a)}
    \label{fig:epochs_alternative_1}
\end{figure*}

\begin{figure*}[ht]
    \centering
    \includegraphics[width=1.0\textwidth, trim={7pt 7pt 7pt 7pt}, clip]{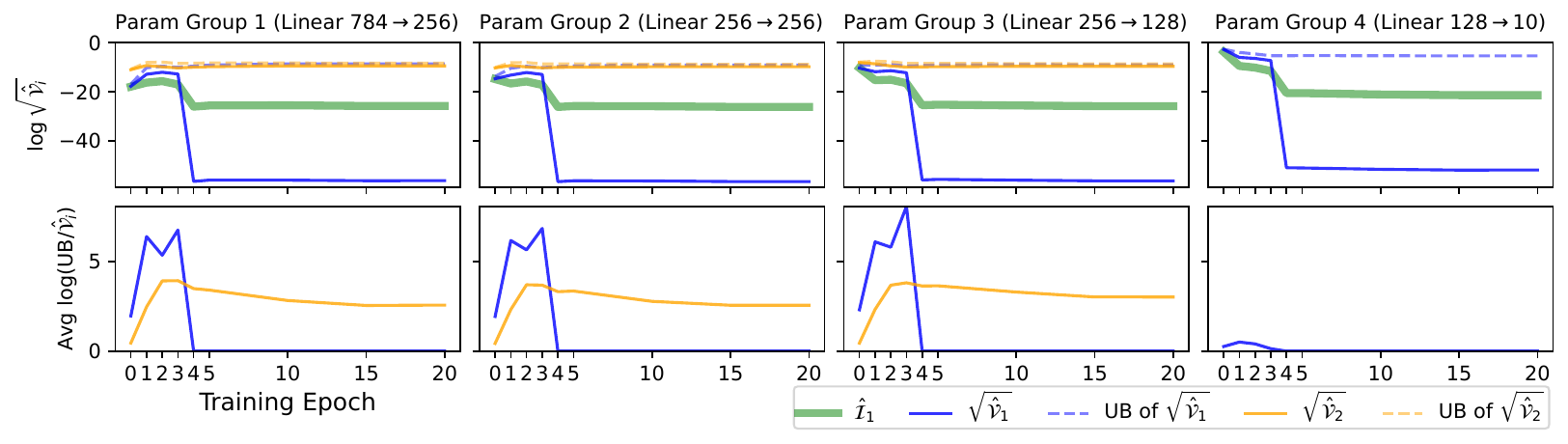}
    \caption{The Fisher information, its variances and bounds of the variances
        \wrt a MLP trained with different initialization
        and a different input \( \bm x \) (b)}
    \label{fig:epochs_alternative_2}
\end{figure*}

\begin{figure*}[ht]
    \centering
    \includegraphics[width=1.0\textwidth, trim={7pt 7pt 7pt 7pt}, clip]{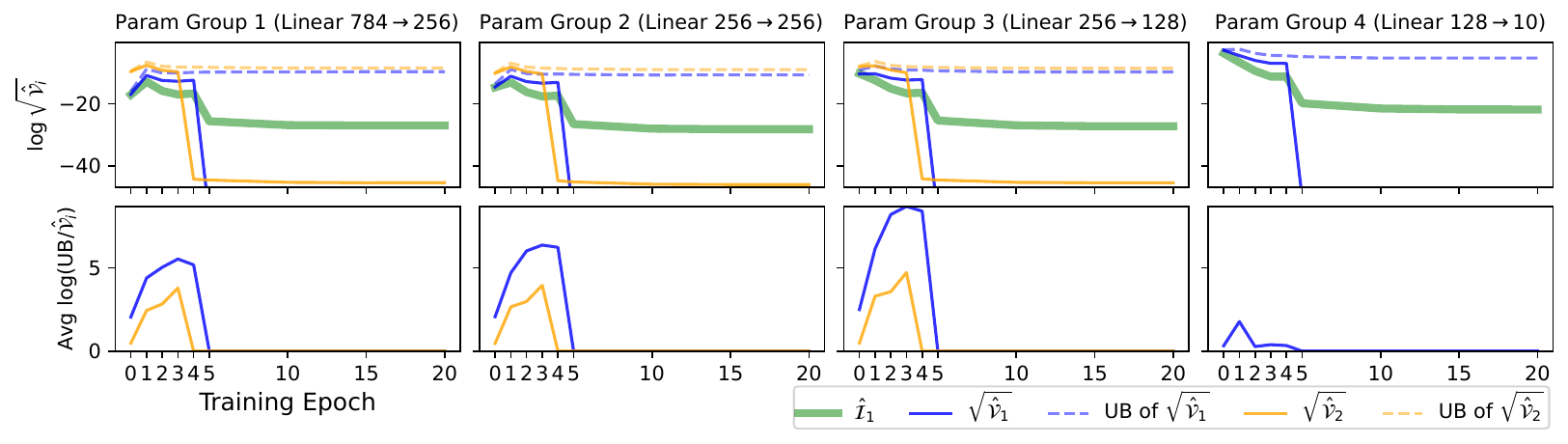}
    \caption{The Fisher information, its variances and bounds of the variances
        \wrt a MLP trained with different initialization and a different input \( \bm x \) (c)}
    \label{fig:epochs_alternative_3}
\end{figure*}

\begin{figure*}[ht]
    \centering
    \includegraphics[width=1.0\textwidth, trim={7pt 7pt 7pt 7pt}, clip]{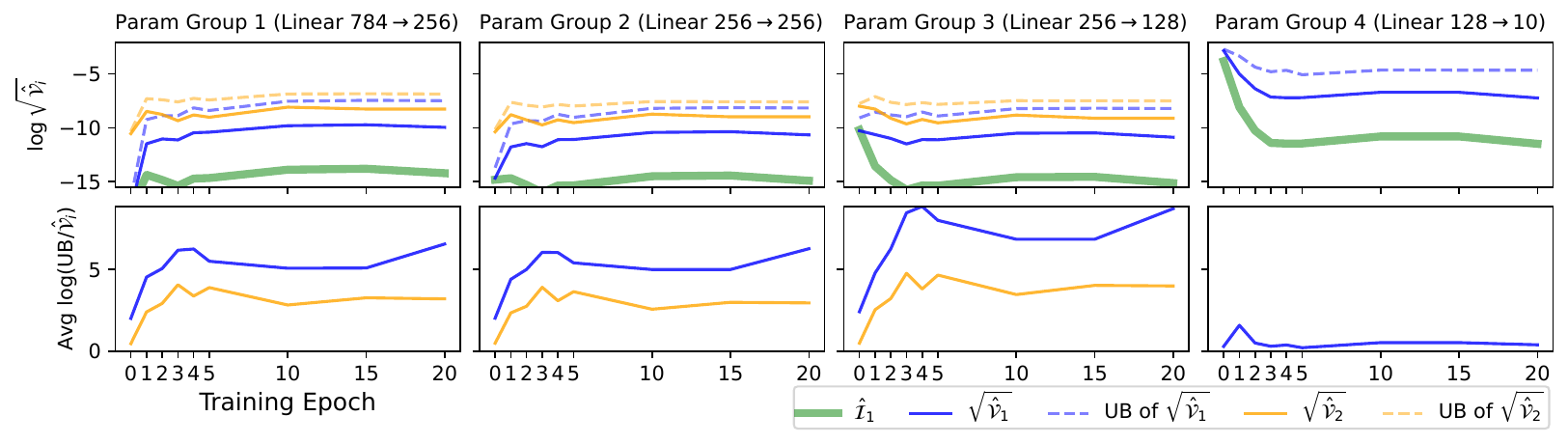}
    \caption{The Fisher information, its variances and bounds of the variances
        \wrt a MLP trained with different initialization and a different input \( \bm x \) (d)}
    \label{fig:epochs_alternative_4}
\end{figure*}

\begin{figure*}[ht]
    \centering
    \includegraphics[width=1.0\textwidth, trim={7pt 7pt 7pt 7pt}, clip]{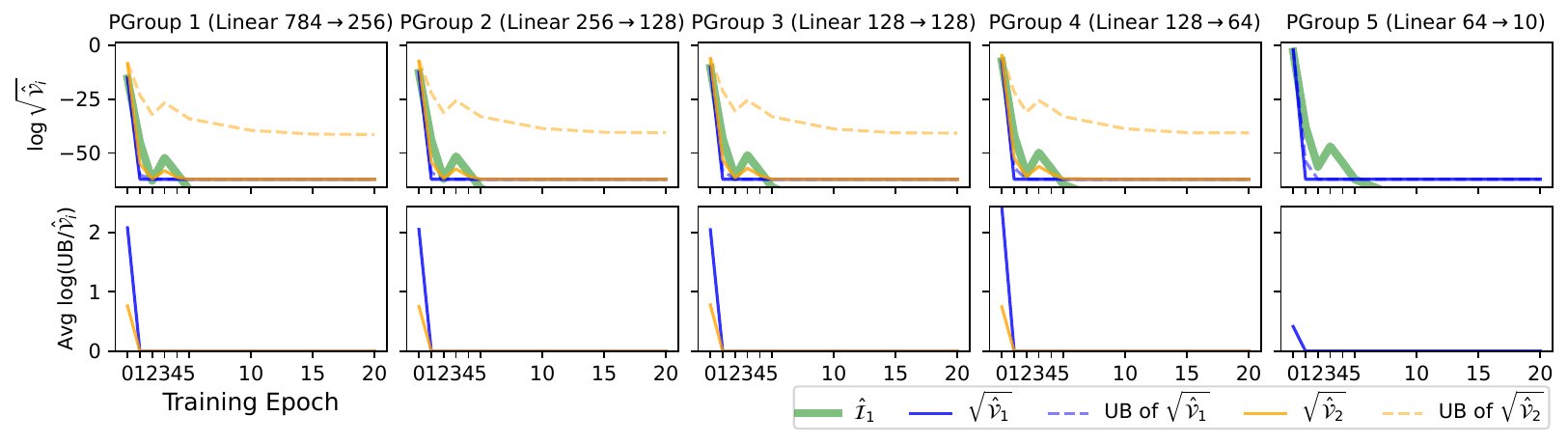}
    \caption{The Fisher information, its variances and bounds of the variances
        \wrt a 5-layer MLP with log-sigmoid activation.}
    \label{fig:epochs_alternative_5}
\includegraphics[width=1.0\textwidth, trim={7pt 7pt 7pt 7pt}, clip]{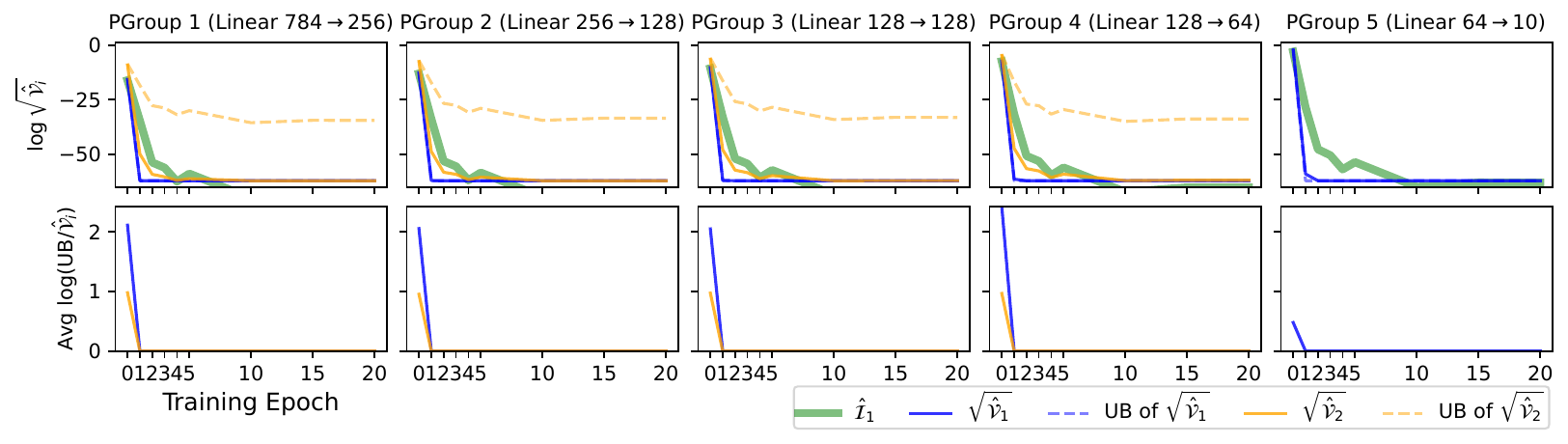}
    \caption{The Fisher information, its variances and bounds of the variances
        \wrt a 5-layer MLP with log-sigmoid activation.}
    \label{fig:epochs_alternative_6}
\includegraphics[width=1.0\textwidth, trim={7pt 7pt 7pt 7pt}, clip]{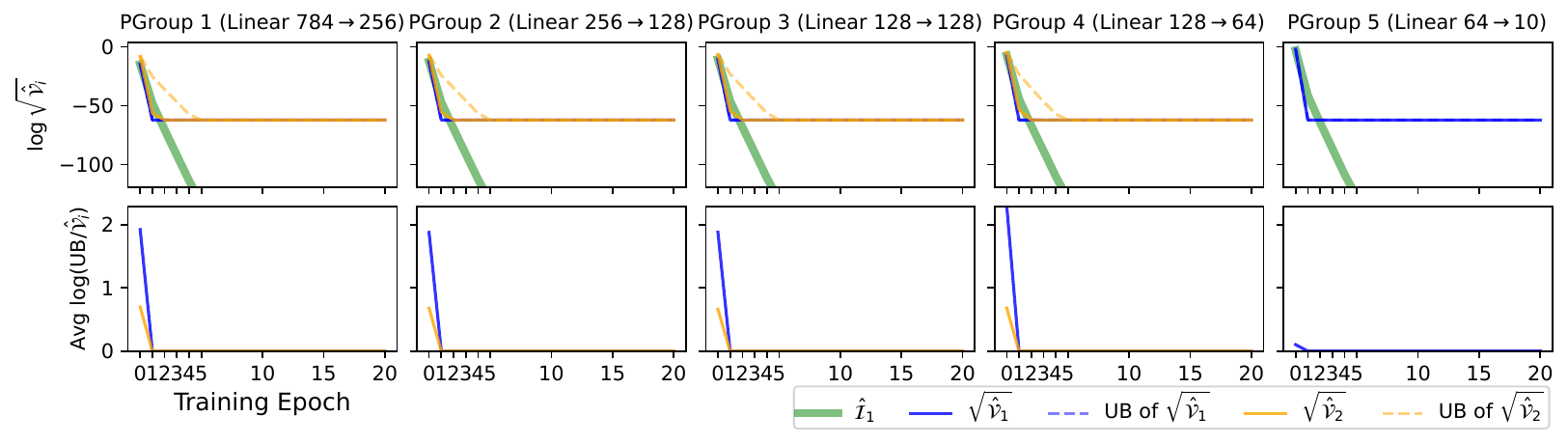}
    \caption{The Fisher information, its variances and bounds of the variances
        \wrt a 5-layer MLP with log-sigmoid activation.}
    \label{fig:epochs_alternative_7}
\end{figure*}

\begin{figure*}[ht]
    \centering
    \includegraphics[width=1.0\textwidth, trim={7pt 7pt 7pt 7pt}, clip]{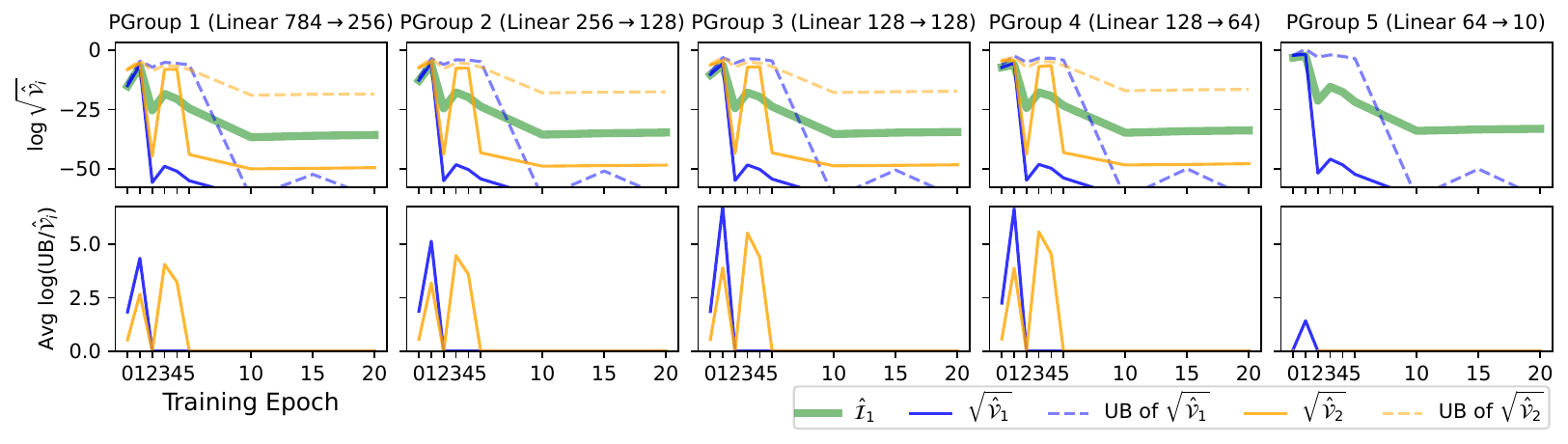}
    \caption{The Fisher information, its variances and bounds of the variances
        \wrt a 5-layer MLP with log-sigmoid activation.}
    \label{fig:epochs_alternative_8}
\end{figure*}
\begin{figure*}[ht]
    \centering
    \includegraphics[width=1.0\textwidth, trim={7pt 7pt 7pt 7pt}, clip]{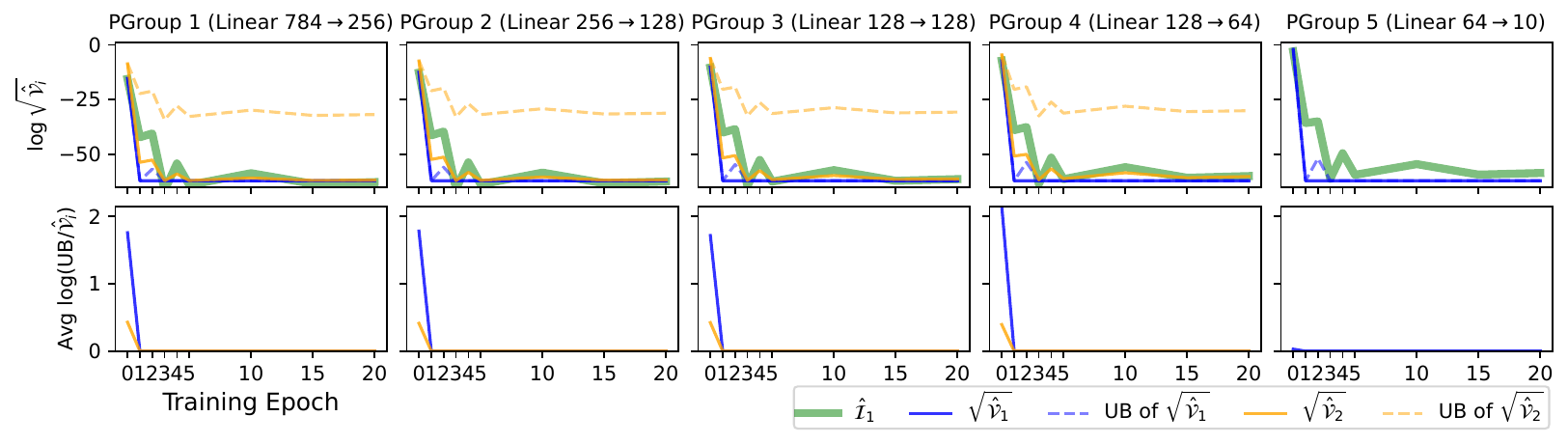}
    \caption{The Fisher information, its variances and bounds of the variances
        \wrt a 5-layer MLP with log-sigmoid activation.}
    \label{fig:epochs_alternative_9}
\end{figure*}
 \section{``Empirical Fisher'' Continued}
\setcurrentname{``Empirical Fisher'' Continued}
\label{sec:empirical_cont}

Noting \cref{lem:empfim_covar}'s characterization of the covariance, we are able to characterize the variance of the diagonal elements of \( \empefim(\bm \theta \mid \bm x) \), denoted as \( \empedv(\theta_{i} \mid \bm x) \defeq \var(\empefim(\theta_i \mid \bm{x})) \).

\begin{corollary} \label{cor:empefim_var}
    For any \( \bm x \in \Re^{I} \),
    \begingroup\makeatletter\def\f@size{9}\check@mathfonts \begin{align*}
        \empedv(\theta_{i} \mid \bm x)
        = &\frac{1}{N} 
            \partial_{i} {\bm h}^{a}(\bm x)
            \partial_{i} {\bm h}^{b}(\bm x)
            \partial_{i} {\bm h}^{c}(\bm x)
            \partial_{i} {\bm h}^{d}(\bm x)
            \left( \empkurt_{abcd}(\bm t \mid \bm x) - \empfim_{ab}(\bm h \mid \bm x) \otimes \empfim_{cd}(\bm h \mid \bm x) \right) \\
        = &\frac{1}{N} 
            \partial_{i} {\bm h}^{a}(\bm x)
            \partial_{i} {\bm h}^{b}(\bm x)
            \partial_{i} {\bm h}^{c}(\bm x)
            \partial_{i} {\bm h}^{d}(\bm x)
            \empkurt_{abcd}(\bm t \mid \bm x) - \frac{1}{N} \left(
            \partial_{i} {\bm h}^{\T}(\bm x)
            \left( 
            \cov^{q}(\bm t \mid \bm x) + 
            \Delta\Eta(\bm x)
\right)
            \partial_{i} {\bm h}(\bm x)
        \right)^{2},
    \end{align*}
    \endgroup
    where \( \empkurt(\bm h \mid \bm x) \) the 4th (non-central) moment of \( (\bm{t}(\bm{\ydata}) - \bm\eta(\bm x)) \)  \wrt \( q(\bm \ydata \mid \bm x) \).
\end{corollary}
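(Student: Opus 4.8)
The plan is to reuse the derivation behind the central-moment formula for $\edva(\theta_{i}\mid\bm x)$ in \cref{cor:efim_conditional}, but with every expectation taken over the data law $q(\bm\ydata\mid\bm x)$ instead of the model $p$, and with careful tracking of the fact that $\bm t(\bm\ydata)-\bm\eta(\bm x)$ is no longer mean-zero. I would begin by abbreviating the scalar summand $Z_{k}\defeq\partial_{i}\bm h^{a}(\bm x)\,(\bm t_{a}(\bm\ydata_{k})-\bm\eta_{a}(\bm x))$, so that $\empefim(\theta_{i}\mid\bm x)=\tfrac{1}{N}\sum_{k=1}^{N}Z_{k}^{2}$. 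Since the $\bm\ydata_{k}$ are \iid from $q(\bm\ydata\mid\bm x)$, the $Z_{k}^{2}$ are \iid, and the variance-of-an-average identity collapses the sum to a single term:
\[
\empedv(\theta_{i}\mid\bm x)=\frac{1}{N^{2}}\sum_{k}\var(Z_{k}^{2})=\frac{1}{N}\bigl(\expect_{q}[Z^{4}]-(\expect_{q}[Z^{2}])^{2}\bigr).
\]

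Next I would expand the two moments by multilinearity of the tensor contraction. For the fourth moment,
\[
\expect_{q}[Z^{4}]=\partial_{i}\bm h^{a}\partial_{i}\bm h^{b}\partial_{i}\bm h^{c}\partial_{i}\bm h^{d}\,\expect_{q}\!\bigl[(\bm t_{a}-\bm\eta_{a})(\bm t_{b}-\bm\eta_{b})(\bm t_{c}-\bm\eta_{c})(\bm t_{d}-\bm\eta_{d})\bigr],
\]
and the bracketed factor is, by definition, the fourth non-central moment $\empkurt_{abcd}(\bm t\mid\bm x)$. For the second moment, $\expect_{q}[Z^{2}]=\partial_{i}\bm h^{a}\partial_{i}\bm h^{b}\,\empfim_{ab}(\bm h\mid\bm x)$, because the second non-central moment of $\bm t(\bm\ydata)-\bm\eta(\bm x)$ under $q$ is precisely $\empfim(\bm h\mid\bm x)$ as defined in \cref{eq:empirical_fisher}. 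Squaring this produces the $\empfim_{ab}\otimes\empfim_{cd}$ contraction, and substituting both expressions into the displayed variance yields the first claimed identity. For the second identity I would only massage the $(\expect_{q}[Z^{2}])^{2}$ term: rewriting the quadratic form as $\partial_{i}\bm h^{\T}\,\empfim(\bm h\mid\bm x)\,\partial_{i}\bm h$ and substituting the rearrangement $\empfim(\bm h\mid\bm x)=\cov^{q}(\bm t\mid\bm x)+\Delta\Eta(\bm x)$ of \cref{lem:empfim_covar} gives exactly the $\bigl(\partial_{i}\bm h^{\T}(\cov^{q}(\bm t\mid\bm x)+\Delta\Eta(\bm x))\partial_{i}\bm h\bigr)^{2}$ term.

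The main obstacle is bookkeeping rather than a hard estimate: because the empirical Fisher samples from $q$ while still centering with $\bm\eta(\bm x)$ (the mean of $\bm t$ under $p$, with $\hat{\bm\eta}(\bm x)\neq\bm\eta(\bm x)$ in general), the increments $\bm t(\bm\ydata)-\bm\eta(\bm x)$ are not centered, so the expectations must be read as the \emph{non-central} moments $\empkurt$ and $\empfim$ rather than the central moments $\kurt^{p}$ and $\fim(\bm h\mid\bm x)$ appearing in \cref{cor:efim_conditional}. One must therefore resist invoking any cumulant or Stein-type simplification valid only under the model $p$, and keep $\empfim$ and $\cov^{q}$ carefully distinct throughout; the substitution from \cref{lem:empfim_covar} is precisely what isolates the extra $\Delta\Eta(\bm x)$ correction that distinguishes this result from its FIM counterpart.
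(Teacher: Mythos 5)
Your proposal is correct and follows essentially the same route as the paper's own proof: both reduce the variance of the \iid average to $\tfrac{1}{N}\bigl(\expect_{q}[Z^{4}]-(\expect_{q}[Z^{2}])^{2}\bigr)$, identify these expectations with the non-central moments $\empkurt$ and $\empfim$ via the same tensor contraction, and obtain the second identity by substituting $\empfim(\bm h \mid \bm x) = \cov^{q}(\bm t \mid \bm x) + \Delta\Eta(\bm x)$ from \cref{lem:empfim_covar}. If anything, your bookkeeping is slightly more careful than the paper's write-up, which conflates $p$ and $q$ in some expectation subscripts and writes the fourth power as a square.
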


As a result of the similarity of the functional forms of the empirical Fisher \( \empefim(\bm \theta) \) and the FIM estimator \( \efima(\bm \theta) \), it is not surprising that \cref{cor:empefim_var} is similar to the variance of \( \efima(\theta_i \mid \bm x) \). Indeed, applying \cref{lem:empfim_covar} will give the exact same functional form with the 2nd central moments 
of $\bm{t}(\bm{y})$
\wrt \( p(\bm y \mid \bm x) \) exchanged with 2nd non-central moments 
of
\( (\bm{t}(\bm{\ydata}) - \bm\eta(\bm x)) \) 
\wrt \( q(\bm \ydata \mid \bm x)\). 
$\empedv(\theta_{i} \mid \bm x)$ is therefore determined by 
the 2nd and the 4th 
moment of 
\( (\bm{t}(\bm{\ydata}) - \bm\eta(\bm x)) \) up to the parameter transformation $\bm\theta\to\bm{h}$.
Subsequently, the bounds presented for \( \edva(\theta_{i} \mid \bm x) \) (\cref{thm:fima_element_bound,cor:full_var_scale_bound}) can be similarly adapted for \( \empedv(\theta_{i} \mid \bm x) \).

The extension of \( \empedv(\theta_{i} \mid \bm x) \) to \( \empedv(\theta_{i}) \) can also be proven in a similar manner to \cref{thm:efim_var_joint}. 

\begin{corollary} \label{cor:empefim_var_joint}
    Given $N_x$ samples of $\bm{x} \sim q(\bm {x}) $ and $N$ samples of $\bm{y}_{\mid \bm{x}} \sim q(\bm{y} \mid \bm{x}) $ for each $ \bm{x} $ sampled,
    \begin{equation}
        \empedv(\theta_{i})
        = \frac{1}{N_{x}} \cdot
        \var\left(
            \empfim(\theta_{i} \mid \bm x)
        \right)
        +
        \frac{1}{N_{x}} \cdot \expect_{q(\bm{x})}\left[\empedv(\theta_{i} \mid \bm x)\right].
    \end{equation}
    where \( \var\left(  \empfim(\theta_{i} \mid \bm x) \right) \) is the variance of $\empfim(\theta_{i} \mid \bm x) $ \wrt $ q(\bm x) $.
\end{corollary}

If \( q(\bm x, \bm y) = \frac{1}{N} \sum_{k=1}^{N} \delta(\bm x - \bm x_{k})
\cdot \delta(\bm y - \hat{\bm y}_{k})\) for a set of observations \( \{ (\bm
x_{k}, \hat{\bm y}_{k}) \}_{k=1}^{N} \),
then one can directly evaluate the DFIM without sampling and achieve zero
variance, \ie, \( \empefim(\bm \theta) = \empfim(\bm \theta) \). In this
scenario, there is a clear trade-off between the estimators of the FIM in
\cref{eq:efima_and_efimb} and the DFIM. The estimators of the FIM are unbiased,
but have a variance; while the DFIM has zero variance, but is a biased
approximation of the FIM.

\section{Derivation of \texorpdfstring{\cref{eq:efima_and_efimb}}{Eq. (3)} Using Log-Partition Function Derivatives}
\setcurrentname{Derivation of \texorpdfstring{\cref{eq:efima_and_efimb}}{} Using Log-Partition Function Derivatives}
\label{sec:pf_efimji}

In what follows, we derive the alternative equations for $\efima(\theta_i)$ and $\efimb(\theta_i)$ presented in \cref{sec:var}.
That is, we seek to derive the following equations:
\begin{align}
    \efima( \theta_i )
    &=
    \frac{1}{N} \sum_{k=1}^N
    \left(
        \frac{\partial F(\bm h(\bm x_{k}))}{\partial \theta_{i}}
        -
        \frac{\partial\bm{h}^a(\bm x_{k})}{\partial\theta_i}
        \cdot
        \bm{t}_a(\bm{y}_k)
    \right)^2; \label{eq:efimai}\\
    \efimb(\theta_i)
    &=
    \frac{1}{N} \sum_{k=1}^N \left(
        \frac{\partial^2 F(\bm{h}(\bm{x}_{k}))}{\partial^2\theta_i}
        -
        \frac{\partial^2\bm{h}^a(\bm{x}_{k})}{\partial^2\theta_i} \cdot \bm{t}_a(\bm{y}_k)
    \right).\label{eq:efimbi}
\end{align}
We calculate the equations separately.

\subsection{\texorpdfstring{\cref{eq:efimai}}{Eq. (26)}}

\begin{proof}
For \cref{eq:efimai}, we note that
\begin{align*}
    \frac{\partial \log p(\bm{y}_k\,\vert\,\bm{x}_{k})}{\partial\theta_{i}}
    &=
    \frac{\partial}{\partial\theta_{i}} \left( \bm t^{\T}(\bm y_{k}) \bm h(\bm x_{k}) - F(\bm h(\bm x_{k})) \right) \\
    &=
    \bm t_{a}(\bm y_{k}) \frac{\partial \bm h^{a}(\bm x_{k})}{\partial\theta_{i}} - F^{\prime}_{a}(\bm h(\bm x_{k})) \frac{\partial \bm h^{a}(\bm x_{k})}{\partial\theta_{i}} \\
    &=
    \bm t_{a}(\bm y_{k}) \frac{\partial \bm h^{a}(\bm x_{k})}{\partial\theta_{i}} - \bm \eta_{a}(\bm x_{k}) \frac{\partial \bm h^{a}(\bm x_{k})}{\partial\theta_{i}} \\
    &=
    \left(\bm t_{a}(\bm y_{k}) - \bm \eta_{a}(\bm x_{k}) \right) \cdot \frac{\partial \bm h^{a}(\bm x_{k})}{\partial\theta_{i}},
\end{align*}
where we note that \( F^{\prime}_{a}(\bm h(\bm x_{k})) = \bm \eta_{a}(\bm x_{k}) \) which follows from the connection to expected parameters and partition functions of exponential families, see \eg \cite{varfim}.

Then \cref{eq:efimai} follows immediately.
\end{proof}

\subsection{\texorpdfstring{\cref{eq:efimbi}}{Eq. (27)}}

\begin{proof}
For \cref{eq:efimbi}, we also calculate the derivative:
\begin{align*}
    \frac{\partial \log p(\bm{y}_k\,\vert\,\bm{x}_{k})}{\partial\theta_{i}}
    &=
    \frac{\partial}{\partial\theta_{i}} \left( \bm t^{\T}(\bm y_{k}) \bm h(\bm x_{k}) - F(\bm h(\bm x_{k})) \right) \\
    &=
    \bm t_{a}(\bm y_{k}) \cdot \frac{\partial \bm h^{a}(\bm x_{k})}{\partial\theta_{i}} - \frac{\partial F(\bm h(\bm x_{k}))}{\partial\theta_{i}}.
\end{align*}
Then
\begin{align*}
    \frac{\partial^{2} \log p(\bm{y}_k\,\vert\,\bm{x}_{k})}{\partial^{2}\theta_{i}}
    &=
    \frac{\partial}{\partial \theta_{i}} \left( \bm t_{a}(\bm y_{k}) \cdot \frac{\partial \bm h^{a}(\bm x_{k})}{\partial\theta_{i}} - \frac{\partial F(\bm h(\bm x_{k}))}{\partial\theta_{i}} \right) \\
    &=
    \bm t_{a}(\bm y_{k}) \cdot \frac{\partial^{2} \bm h^{a}(\bm x_{k})}{\partial^{2}\theta_{i}} - \frac{\partial^{2} F(\bm h(\bm x_{k}))}{\partial^{2}\theta_{i}}.
\end{align*}

Then \cref{eq:efimbi} follows immediately.
\end{proof}

\begin{remark}
    Although \cref{eq:efimbi} is useful in practice, \ie, it states an equation which can be calculated via automatic differentiation, in the appendix and proofs we use an alternative equation. In particular, we use
\begin{align*}
        \efimb(\theta_i) =
        \frac{1}{N} \sum_{k=1}^N &\left(
            (\bm \eta_{a}(\bm x_{k}) - \bm t_{a}(\bm y_{k})) \cdot \frac{\partial^{2} \bm h^{a}(\bm x_{k})}{\partial^{2}\theta_{i}} \right.\\
            & \left. \quad \quad +
            \frac{\partial \bm h^{a}(\bm x_{k})}{\partial\theta_{i}}
            \cdot
            \fim_{ab}(\bm h \mid \bm x_{k})
            \cdot
            \frac{\partial \bm h^{b}(\bm x_{k})}{\partial\theta_{i}}
        \right),
    \end{align*}
    which follows from taking the derivative of \( {\partial_{i} \log p(\bm{y}_k\,\vert\,\bm{x}_{k})} \) in the proof of \cref{eq:efimai} (above).
\end{remark}

 \section{Proof of \texorpdfstring{\cref{thm:fim_bound}}{Eq. (7)}}
\setcurrentname{Proof of \texorpdfstring{\cref{thm:fim_bound}}{}}
\label{sec:pf_fim_bound}

We first begin by proving the follow lemma to bound an \( \Re^{n \times n} \) matrix.
\begin{lemma}\label{lem:mat_eig_bound}
    Let \( A \in \Re^{n \times n} \) and \( \bm v \in \Re^{n} \), then
\begin{equation*}
        \Vert \bm v \Vert_{2}^{2} \cdot \lambda_{\min}(A) 
        \leq 
        \bm v^{a} \bm v^{b} A_{ab}
        \leq
        \Vert \bm v \Vert_{2}^{2} \cdot \lambda_{\max}(A).
    \end{equation*}
\end{lemma}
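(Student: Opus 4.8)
The plan is to reduce the claim to the Rayleigh--Ritz characterization of the extreme eigenvalues of a symmetric matrix. First I would observe that the quadratic form in question, written in Einstein notation as \( \bm{v}^a \bm{v}^b A_{ab} = \bm{v}^\T A \bm{v} \), is a scalar and therefore equals its own transpose, so that \( \bm{v}^\T A \bm{v} = \bm{v}^\T A^\T \bm{v} = \bm{v}^\T \tfrac{1}{2}(A + A^\T)\, \bm{v} \). Hence the value depends only on the symmetric part of \( A \); in every invocation of interest (e.g. \( A = \fim(\bm h \mid \bm x) \)) the matrix is already symmetric PSD, so without loss of generality I take \( A \) symmetric with real spectrum \( \lambda_{\min}(A) \le \cdots \le \lambda_{\max}(A) \).

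Given symmetry, I would invoke the spectral theorem to write \( A = \sum_{k} \lambda_k \bm{u}_k \bm{u}_k^\T \) for an orthonormal eigenbasis \( \{ \bm{u}_k \} \). Expanding \( \bm{v} = \sum_k c_k \bm{u}_k \) with coefficients \( c_k = \bm{u}_k^\T \bm{v} \), orthonormality yields \( \Vert \bm{v} \Vert_2^2 = \sum_k c_k^2 \) and \( \bm{v}^\T A \bm{v} = \sum_k \lambda_k c_k^2 \). The middle quantity is thus a nonnegatively weighted combination of the eigenvalues, and bounding each \( \lambda_k \) between \( \lambda_{\min}(A) \) and \( \lambda_{\max}(A) \) gives \( \lambda_{\min}(A) \sum_k c_k^2 \le \sum_k \lambda_k c_k^2 \le \lambda_{\max}(A) \sum_k c_k^2 \), which is precisely the claimed two-sided bound once \( \sum_k c_k^2 \) is replaced by \( \Vert \bm{v} \Vert_2^2 \).

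There is essentially no hard step here, as the statement is the elementary Rayleigh quotient bound; the only point needing a word of care is the reduction to the symmetric case, since for a general nonsymmetric \( A \) the symbols \( \lambda_{\min}(A), \lambda_{\max}(A) \) need not be real. I would resolve this by the transpose observation above, noting that throughout the paper \( A \) is a genuine symmetric PSD moment matrix, so the issue never arises. As an equivalent one-line alternative I would mention reading the bound directly from the Courant--Fischer characterizations \( \lambda_{\max}(A) = \sup_{\Vert \bm{w} \Vert_2 = 1} \bm{w}^\T A \bm{w} \) and \( \lambda_{\min}(A) = \inf_{\Vert \bm{w} \Vert_2 = 1} \bm{w}^\T A \bm{w} \), applied to the unit vector \( \bm{w} = \bm{v} / \Vert \bm{v} \Vert_2 \), with the degenerate case \( \bm{v} = \bm{0} \) holding trivially.
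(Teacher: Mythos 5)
Your proof is correct, and it strictly contains the paper's argument: the paper's proof is exactly your "one-line alternative" --- normalize \( \bm w = \bm v / \Vert \bm v \Vert_2 \) and read the bound off the Courant--Fischer characterizations \( \lambda_{\min}(A) = \inf_{\Vert \bm u \Vert_2 = 1} \bm u^\T A \bm u \) and \( \lambda_{\max}(A) = \sup_{\Vert \bm u \Vert_2 = 1} \bm u^\T A \bm u \) --- whereas your primary route goes through the spectral theorem, expanding \( \bm v \) in an orthonormal eigenbasis so that \( \bm v^\T A \bm v = \sum_k \lambda_k c_k^2 \) is squeezed between \( \lambda_{\min}(A) \sum_k c_k^2 \) and \( \lambda_{\max}(A) \sum_k c_k^2 \). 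The two mechanisms are essentially equivalent (Courant--Fischer is itself proved via the spectral decomposition), but your write-up patches two small gaps the paper leaves implicit: the lemma is stated for arbitrary \( A \in \Re^{n \times n} \), for which \( \lambda_{\min}, \lambda_{\max} \) need not even be real and Courant--Fischer does not apply, and you resolve this by noting \( \bm v^\T A \bm v = \bm v^\T \tfrac{1}{2}(A + A^\T) \bm v \) (with all invocations in the paper, such as \( A = \fim(\bm h \mid \bm x) \), being symmetric PSD anyway); and you handle the degenerate case \( \bm v = \bm 0 \), where the paper's division by \( \Vert \bm v \Vert_2 \) is undefined though the inequality holds trivially. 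So your proof is the same in substance but slightly more careful in its hypotheses.
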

\begin{proof}
    The proof follows immediately from the Courant-Fischer min-max theorem~\cite{taormt}. That is,
\begin{align*}
        \lambda_{\min}(A) &= \inf_{\bm u : \Vert \bm u \Vert = 1} \bm u^{a} \bm u^{b} A_{ab}; \\
        \lambda_{\max}(A) &= \sup_{\bm u : \Vert \bm u \Vert = 1} \bm u^{a} \bm u^{b} A_{ab}.
    \end{align*}
    
    Thus it follows that:
\begin{align*}
        \bm v^{a} \bm v^{b} A_{ab}
        &= \Vert v \Vert_{2}^{2} \cdot (\bm v/ \Vert v \Vert_{2})^{a} (\bm v/ \Vert v \Vert_{2})^{b} A_{ab} \\
        &\leq \Vert v \Vert_{2}^{2} \cdot \lambda_{\max}(A).
    \end{align*}
    The lower bound follows identically.

    We note that this can be similarly proven via trace bounds, \eg, \cite{tracebounds}.
\end{proof}

Now we can prove \cref{thm:fim_bound}.
\begin{proof}
    The proof follows from \cref{cor:efim_conditional}, \cref{eq:diag_fim}, and directly applying \cref{lem:mat_eig_bound}.
\end{proof}
 \section{Proof of \texorpdfstring{\cref{thm:fima_element_bound}}{Eq. (8)}}
\setcurrentname{Proof of \texorpdfstring{\cref{thm:fima_element_bound}}{}}
\label{sec:pf_fima_element_bound}

Let us first define the maximum and minimum Z-eigenvalues of a 4-dimensional tensor \( \kurt \).
\begin{align}
    \tilde{\lambda}_{\min}(\kurt) &= \inf_{\bm u: \Vert \bm u \Vert_{2} = 1}  {\bm u}^{a}{\bm u}^{b}{\bm u}^{c}{\bm u}^{d} \kurt_{abcd}; \\
    \tilde{\lambda}_{\max}(\kurt) &= \sup_{\bm u: \Vert \bm u \Vert_{2} = 1}  {\bm u}^{a}{\bm u}^{b}{\bm u}^{c}{\bm u}^{d} \kurt_{abcd}.
\end{align}

Now We first prove the following lemma regarding the Z-eigenvalues.
\begin{lemma} \label{lem:z-eign}
    Suppose \( \kurt \) is 4-dimensional tensor. Then we have
    \begin{equation}
        \Vert \bm v \Vert_{2}^{4} \cdot \tilde{\lambda}_{\min}(\kurt) \leq {\bm v}^{a}{\bm v}^{b}{\bm v}^{c}{\bm v}^{d} \kurt_{abcd} \leq \Vert \bm v \Vert_{2}^{4} \cdot \tilde{\lambda}_{\max}(\kurt)
    \end{equation}
\end{lemma}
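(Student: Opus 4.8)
The plan is to exploit the degree-4 homogeneity of the quartic form, following exactly the normalization strategy used for the quadratic form in \cref{lem:mat_eig_bound}. Define $Q(\bm v) \defeq {\bm v}^{a}{\bm v}^{b}{\bm v}^{c}{\bm v}^{d}\kurt_{abcd}$. Since each of the four factors of $\bm v$ enters linearly, $Q$ is homogeneous of degree four: $Q(c\,\bm v) = c^{4}\,Q(\bm v)$ for every scalar $c$. This is the tensor analogue of the form $\bm v^{a}\bm v^{b}A_{ab}$ from the matrix lemma, and the whole proof is a rescaling argument built on this property together with the variational definitions of $\tilde{\lambda}_{\min}(\kurt)$ and $\tilde{\lambda}_{\max}(\kurt)$ over the unit sphere.

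First I would handle the trivial case $\bm v = \bm 0$ separately: there the middle expression and both bounds are all zero, so the inequalities hold with equality. For $\bm v \neq \bm 0$, set $\bm u \defeq \bm v / \Vert \bm v \Vert_{2}$, which is a unit vector. By homogeneity, $Q(\bm v) = \Vert \bm v \Vert_{2}^{4}\,Q(\bm u)$. Since $\bm u$ lies on the sphere $\{\bm u : \Vert \bm u \Vert_{2} = 1\}$, the definitions of the Z-eigenvalues as the infimum and supremum of $Q$ over that sphere give $\tilde{\lambda}_{\min}(\kurt) \leq Q(\bm u) \leq \tilde{\lambda}_{\max}(\kurt)$ directly. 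Multiplying this chain through by the nonnegative scalar $\Vert \bm v \Vert_{2}^{4}$ preserves the ordering and produces the stated bound $\Vert \bm v \Vert_{2}^{4}\,\tilde{\lambda}_{\min}(\kurt) \leq Q(\bm v) \leq \Vert \bm v \Vert_{2}^{4}\,\tilde{\lambda}_{\max}(\kurt)$.

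There is essentially no obstacle here: the argument is a pure homogeneity/rescaling computation, structurally identical to the matrix case. The one point worth flagging is that attainment of the infimum and supremum is not needed — the definitions as $\inf$ and $\sup$ over the unit sphere already bound $Q(\bm u)$ from below and above for \emph{every} unit $\bm u$, which is all we use. (Continuity of $Q$ plus compactness of the sphere would guarantee attainment if one wanted it, but the bound does not require it.) The only fact doing real work is the nonnegativity of $\Vert \bm v \Vert_{2}^{4}$, which is what permits multiplying the inequality without reversing its direction.
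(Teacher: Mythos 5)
Your proof is correct and is essentially the same argument as the paper's: normalize $\bm v$ to the unit sphere, invoke the variational ($\inf$/$\sup$) definitions of $\tilde{\lambda}_{\min}$ and $\tilde{\lambda}_{\max}$, and scale back by $\Vert \bm v \Vert_{2}^{4}$, exactly mirroring \cref{lem:mat_eig_bound}. Your explicit handling of the $\bm v = \bm 0$ case is a minor point of extra care that the paper's normalization step glosses over, but otherwise the two proofs coincide.
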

\begin{proof}
    The proof follows similarly to \cref{lem:mat_eig_bound}.
    We simple use the following calculation:
\begin{align*}
        & {\bm v}^{a}{\bm v}^{b}{\bm v}^{c}{\bm v}^{d} \kurt_{abcd} \\
        &= \Vert \bm v \Vert_{2}^{4} \cdot ({\bm v} / \Vert \bm v \Vert_{2})^{a}({\bm v} / \Vert \bm v \Vert_{2})^{b}({\bm v} / \Vert \bm v \Vert_{2})^{c}({\bm v} / \Vert \bm v \Vert_{2})^{d} \kurt_{abcd}  \\
        &\leq \Vert \bm v \Vert_{2}^{4} \cdot \sup_{\bm u: \Vert \bm u \Vert_{2} = 1}  {\bm u}^{a}{\bm u}^{b}{\bm u}^{c}{\bm u}^{d} \kurt_{abcd} \\
        &= \Vert \bm v \Vert_{2}^{4} \cdot\tilde{\lambda}_{\max}(\kurt).
    \end{align*}
    The minimum case is proven identically (with the opposite inequality).
\end{proof}

Now we can prove the bounds of \cref{thm:fima_element_bound}

\begin{proof}
    From \cref{cor:efim_conditional}, we have that 
    \begin{align*}
        \edva(\theta_{i} \mid \bm x)
        &= \frac{1}{N} 
            \partial_{i} {\bm h}^{a}(\bm x)
            \partial_{i} {\bm h}^{b}(\bm x)
            \partial_{i} {\bm h}^{c}(\bm x)
            \partial_{i} {\bm h}^{d}(\bm x)
            \left[ \kurt_{abcd} - \fim_{ab}({\bm h} \mid \bm x) \cdot \fim_{cd}({\bm h} \mid \bm x) \right],
    \end{align*}
    where we shorthand \( \kurt_{abcd} = \kurt_{abcd}^{p}(\bm t \mid \bm x)\)

    We bound two terms.
\begin{align*}
        &\Vert \partial_{i} {\bm h}(\bm x) \Vert_{2}^{4}
        \cdot \tilde{\lambda}_{\min}(\kurt)
        \leq \partial_{i} {\bm h}^{a}(\bm x)
            \partial_{i} {\bm h}^{b}(\bm x)
            \partial_{i} {\bm h}^{c}(\bm x)
            \partial_{i} {\bm h}^{d}(\bm x)
            \kurt_{abcd}  \\
            & \quad \quad \leq 
        \Vert \partial_{i} {\bm h}(\bm x) \Vert_{2}^{4}
        \cdot \tilde{\lambda}_{\max}(\kurt),
    \end{align*}
    which follows directly from \cref{lem:z-eign}

    We now bound the second term in a similar way, taking \( v^{a} \defeq \partial_{i} {\bm h}^{a}(\bm x) \) and noting that
\begin{equation*}
        v^{a}v^{b}v^{c}v^{d} \fim_{ab}({\bm h} \mid \bm x) \fim_{cd}({\bm h} \mid \bm x) = (v^{a}v^{b}\fim_{ab}({\bm h} \mid \bm x))^{2}
    \end{equation*}
    which directly gives us,
\begin{align*}
        &\left( \Vert v \Vert_{2}^{2} \cdot \lambda_{\min}(\fim({\bm h} \mid \bm x)) \right)^{2}
        \leq
        v^{a}v^{b}v^{c}v^{d} \fim_{ab}({\bm h} \mid \bm x) \fim_{cd}({\bm h} \mid \bm x) \\
        &\quad \quad \leq
        \left( \Vert v \Vert_{2}^{2} \cdot \lambda_{\max}(\fim({\bm h} \mid \bm x)) \right)^{2}.
\end{align*}
    which follows from \cref{lem:mat_eig_bound}.

    Thus, together these bounds prove \cref{thm:fima_element_bound}.
\end{proof}
 \section{Proof of \texorpdfstring{\cref{thm:fimb_element_bound}}{Eq. (9)}}
\setcurrentname{Proof of \texorpdfstring{\cref{thm:fimb_element_bound}}{}}
\label{sec:pf_fimb_element_bound}

From \cref{cor:efim_conditional} we have that,
\begin{equation*}
    \edvb^{i} = \frac{1}{N} 
            \partial^{2}_{i} {\bm h}^{a}(\bm x)
            \partial^{2}_{i} {\bm h}^{b}(\bm x)
            \fim_{ab}(\bm{h}_{L}).
\end{equation*}
Thus we get
\begin{align*}
   \Vert \partial^{2}_{i} {\bm h}(\bm x) \Vert^{2}_{2} \cdot \lambda_{\min}\left( \fim(\bm{h}) \right) \leq \partial^{2}_{i} {\bm h}^{b}(\bm x) \cdot \partial^{2}_{i} {\bm h}^{b}(\bm x) \cdot \fim_{ab}(\bm{h}) \leq \Vert \partial^{2}_{i} {\bm h}(\bm x) \Vert^{2}_{2} \cdot \lambda_{\max}\left( \fim(\bm{h}) \right),
\end{align*}
which follows from \cref{lem:mat_eig_bound}.
This immediately gives the bound as required.
 \section{Proof of \texorpdfstring{\cref{cor:variation_lambda_bounds}}{Corollary 4.2}}
\setcurrentname{Proof of \texorpdfstring{\cref{cor:variation_lambda_bounds}}{}}
\label{sec:pf_variation_lambda_bounds}

\begin{proof}
The corollary holds from distributing the \( \inf \) or \( \sup \) and examining how the variational definition of the generalized `eigenvalue` simplifies under tensor products.

Indeed, for the minimum case,
    \begin{align*}
        &\tilde{\lambda}_{\min}\left( \kurt^{p}(\bm t \mid \bm x) - \fim({\bm h} \mid \bm x) \otimes \fim({\bm h} \mid \bm x)\right) \\
        &=
        \inf_{\bm u: \Vert \bm u \Vert_{2} = 1}  {\bm u}^{a}{\bm u}^{b}{\bm u}^{c}{\bm u}^{d} \left( \kurt^{p}_{abcd}(\bm t \mid \bm x) - \fim_{ab}({\bm h} \mid \bm x) \cdot \fim_{cd}({\bm h} \mid \bm x) \right) \\
        &\geq
        \left(\inf_{\bm u: \Vert \bm u \Vert_{2} = 1}  {\bm u}^{a}{\bm u}^{b}{\bm u}^{c}{\bm u}^{d} \kurt^{p}_{abcd}(\bm t \mid \bm x) \right) + \left( \inf_{\bm u: \Vert \bm u \Vert_{2} = 1}  {\bm u}^{a}{\bm u}^{b}{\bm u}^{c}{\bm u}^{d} (- \fim_{ab}({\bm h} \mid \bm x) \cdot \fim_{cd}({\bm h} \mid \bm x))\right) \\
        &=
        \left(\inf_{\bm u: \Vert \bm u \Vert_{2} = 1}  {\bm u}^{a}{\bm u}^{b}{\bm u}^{c}{\bm u}^{d} \kurt^{p}_{abcd}(\bm t \mid \bm x) \right) - \left( \sup_{\bm u: \Vert \bm u \Vert_{2} = 1}  {\bm u}^{a}{\bm u}^{b}{\bm u}^{c}{\bm u}^{d} (\fim_{ab}({\bm h} \mid \bm x) \cdot \fim_{cd}({\bm h} \mid \bm x))\right) \\
        &=
        \left(\inf_{\bm u: \Vert \bm u \Vert_{2} = 1}  {\bm u}^{a}{\bm u}^{b}{\bm u}^{c}{\bm u}^{d} \kurt^{p}_{abcd}(\bm t \mid \bm x) \right) - \left( \sup_{\bm u: \Vert \bm u \Vert_{2} = 1}  \left({\bm u}^{a}{\bm u}^{b}\fim_{ab}({\bm h} \mid \bm x) \right)^{2}\right) \\
        &\geq
        \left(\inf_{\bm u: \Vert \bm u \Vert_{2} = 1}  {\bm u}^{a}{\bm u}^{b}{\bm u}^{c}{\bm u}^{d} \kurt^{p}_{abcd}(\bm t \mid \bm x) \right) - \left( \sup_{\bm u: \Vert \bm u \Vert_{2} = 1}  {\bm u}^{a}{\bm u}^{b}\fim_{ab}({\bm h} \mid \bm x) \right)^{2},
    \end{align*}
    where the last line holds from the fact that \( \fim_{ab}({\bm h} \mid \bm x) \) is PSD (thus the inner Einstein summation is always positive).

    Taking definitions of the types of eigenvalues, gives the statement.

    We note that the `max' case follows identically.

    Additionally, for the lower bound, we can show the non-triviallity of the non-negativity of the minimum eigenvalue.

We note that \( \kurt^{p}_{abcd}(\bm t \mid \bm x) = \expect_{p}[\bm v_{a}\bm v_{b}\bm v_{c}\bm v_{d}]\), where \( \bm v = \bm t(\bm y) - \eta(\bm x)\).

    Thus we have that 
    \begin{align*}
        &\tilde{\lambda}_{\min}\left( \kurt^{p}(\bm t \mid \bm x) - \fim({\bm h} \mid \bm x) \otimes \fim({\bm h} \mid \bm x)\right) \\
        &=
        \inf_{\bm u: \Vert \bm u \Vert_{2} = 1}  {\bm u}^{a}{\bm u}^{b}{\bm u}^{c}{\bm u}^{d} \left( \kurt^{p}_{abcd}(\bm t \mid \bm x) - \fim_{ab}({\bm h} \mid \bm x) \cdot \fim_{cd}({\bm h} \mid \bm x) \right) \\
        &=
        \inf_{\bm u: \Vert \bm u \Vert_{2} = 1}  \expect_{p}\left[ {\bm u}^{a}{\bm u}^{b}{\bm u}^{c}{\bm u}^{d} \left( \bm v_{a}\bm v_{b}\bm v_{c}\bm v_{d} - \fim_{ab}({\bm h} \mid \bm x) \cdot \fim_{cd}({\bm h} \mid \bm x) \right) \right] \\
        &=
        \inf_{\bm u: \Vert \bm u \Vert_{2} = 1}  \expect_{p}\left[ \left( {\bm u}^{a}{\bm u}^{b} \left( \bm v_{a}\bm v_{b} - \fim_{ab}({\bm h} \mid \bm x) \right) \right)^{2} \right] \geq 0.
    \end{align*}

    Equality holds from simply looking at the definition of \( \kurt^{p}(\bm t \mid \bm x) \) and \( \fim(\bm h \mid \bm x)\) (as moments).
 \end{proof}
 \section{Proof of \texorpdfstring{\cref{prop:bounded_suff_stat}}{Proposition 4.3}}
\setcurrentname{Proof of \texorpdfstring{\cref{prop:bounded_suff_stat}}{}}
\label{sec:pf_bounded_suff_stat}

\begin{proof}
	Letting \( \bm{v} = \bm{t}(y) - \bm{\eta}(y) \), we note that the maximum eigenvalue is given by,
	\begin{align*}
		\tilde{\lambda}_{\max}\left( \kurt^{p}(\bm t \mid \bm x) \right)
		 & = \sup_{\bm u: \Vert \bm u \Vert_{2} = 1}  {\bm u}^{a}{\bm u}^{b}{\bm u}^{c}{\bm u}^{d} \kurt^{p}_{abcd}(\bm t \mid \bm x)                                    \\
		 & = \sup_{\bm u: \Vert \bm u \Vert_{2} = 1}  \expect_{p}\left[{\bm u}^{a}{\bm u}^{b}{\bm u}^{c}{\bm u}^{d} {\bm v}_{a}{\bm v}_{b}{\bm v}_{c}{\bm v}_{d} \right] \\
		 & = \sup_{\bm u: \Vert \bm u \Vert_{2} = 1}  \expect_{p}\left[ ({\bm u}^\T {\bm v})^4 \right]                                                                   \\
		 & = \sup_{\bm u: \Vert \bm u \Vert_{2} = 1}  \expect_{p}\left[ ({\bm u}^\T {\bm v})^2 ({\bm u}^\T {\bm v})^2\right]                                             \\
		 & \leq \sup_{\bm u: \Vert \bm u \Vert_{2} = 1}  \expect_{p}\left[ (\Vert {\bm u} \Vert_2 \cdot \Vert {\bm v} \Vert_2)^2 ({\bm u}^\T {\bm v})^2\right]           \\
		 & \leq B \cdot \sup_{\bm u: \Vert \bm u \Vert_{2} = 1}  \expect_{p}\left[ ({\bm u}^\T {\bm v})^2\right]                                                         \\
		 & = B \cdot \lambda_{\max} \left( \fim(\bm h \mid \bm x) \right).
	\end{align*}
\end{proof}

 \section{Proof of \texorpdfstring{\cref{cor:full_var_scale_bound}}{Corollary 4.6}}
\setcurrentname{Proof of \texorpdfstring{\cref{cor:full_var_scale_bound}}{}}
\label{sec:pf_full_var_scale_bound}

\begin{proof}
    We split up the proof into the two arguments of the various \( \min \)-function.

    \textbf{For the right term:}

    Suppose that we have a bound such that \( \edvj(\theta_{i} \mid \bm x) \leq \alpha \beta_{i} \). Then,
\begin{align*}
        \Vert \edvj(\bm \theta \mid \bm x) \Vert_{2}^{2}
        &= \sum_{i=1}^{\dim(\bm \theta)} (\edvj(\theta_{i} \mid \bm x))^2 \\
        &\leq \sum_{i=1}^{\dim(\bm \theta)} (\alpha \beta_{i})^2 \\
        &= \alpha ^{2} \sum_{i=1}^{\dim(\bm \theta)} \beta_{i}^2.
    \end{align*}
    Thus we have,
\begin{equation*}
        \Vert \edvj(\bm \theta \mid \bm x) \Vert_{2} \leq \alpha \sqrt{\sum_{i=1}^{\dim(\bm \theta)} \beta_{i}^2}.
    \end{equation*}

    Taking the appropriate \( \alpha \) and \( \beta \) from \cref{thm:fima_element_bound,thm:fimb_element_bound} proves the case for \cref{eq:frobenius_fim,eq:frobenius_var_efimb}.

    For \cref{eq:frobenius_var_efima}, that is taking
\begin{align*}
        \alpha &= \frac{1}{N} \cdot \left( \tilde{\lambda}_{\max}(\kurt^{p}(\bm t \mid \bm x)) - \lambda^{2}_{\min} \left(\fim({\bm h} \mid \bm x)\right) \right); \\
        \beta_{i} &= \Vert \partial_{i}{\bm h}(\bm x) \Vert^{4}_{2}.
    \end{align*}
    Where we note that
\begin{align*}
        \sqrt{\sum_{i = 1}^{\dim(\bm \theta)} \left(\Vert \partial_{i}{\bm h}(\bm x) \Vert^{4}_{2}\right)^{2}}
        &=
        \sqrt{\sum_{i = 1}^{\dim(\bm \theta)} \left[\left( \sum_{t=1}^{T}\left[ \partial_{i}{ h_{t}}(\bm x) \right]^{2}\right)^{2} \right]^{2}} \\
        &= 
        \sqrt{\sum_{i = 1}^{\dim(\bm \theta)} \left( \sum_{t=1}^{T}\left[ \partial_{i}{ h_{t}}(\bm x) \right]^{2}\right)^{4} } \\
        &\leq 
        \sqrt{\left(\sum_{i = 1}^{\dim(\bm \theta)}  \sum_{t=1}^{T}\left[ \partial_{i}{ h_{t}}(\bm x) \right]^{2}\right)^{4} } \\
        &= 
        \left(\sum_{i = 1}^{\dim(\bm \theta)}  \sum_{t=1}^{T}\left[ \partial_{i}{ h_{t}}(\bm x) \right]^{2}\right)^{2} \\
        &= \Vert \partial \bm h(\bm x) \Vert_{F}^4.
    \end{align*}

    For \cref{eq:frobenius_var_efimb}, that is taking
\begin{align*}
        \alpha &= \frac{1}{N} \cdot \lambda_{\max}(\fim(\bm{h} \mid \bm x)); \\
        \beta_{i} &= \Vert \partial^{2}_{i}{\bm h}(\bm x) \Vert^{2}_{2}.
    \end{align*}
    Where we note that
\begin{align*}
        \sqrt{\sum_{i = 1}^{\dim(\bm \theta)} \left(\Vert \partial^{2}_{i}{\bm h}(\bm x) \Vert^{2}_{2}\right)^{2}}
        &= 
        \sqrt{\sum_{i = 1}^{\dim(\bm \theta)} \left(\sum_{t = 1}^{T}\left[ \partial^{2}_{i}{h_{t}}(\bm x) \right]^{2}\right)^{2}} \\
        &\leq
        \sqrt{\left(\sum_{i = 1}^{\dim(\bm \theta)} \sum_{t = 1}^{T}\left[ \partial^{2}_{i}{h_{t}}(\bm x) \right]^{2}\right)^{2}} \\
        &=
        \sum_{i = 1}^{\dim(\bm \theta)} \sum_{t = 1}^{T}\left[ \partial^{2}_{i}{h_{t}}(\bm x) \right]^{2} \\
        &= \Vert \dhess(\bm h \mid \bm x) \Vert_{F}^{2}.
    \end{align*}

    \textbf{For the left term:}

    We take the largest singular value of the network derivative term. We then further notice that \( s_{\max}(A) \leq \Vert A \Vert_{F} \) from norm ordering (of the matrix 2-norm).

    To further elaborate on the \cref{eq:frobenius_var_efima} case, we further need to simplify the following:
\begin{align*}
        s_{\max}(\vJac)
        &\leq \Vert \vJac(\bm h \mid \bm x) \Vert_{F} \\
        &= \sqrt{\sum_{i=1}^{\dim(\bm \theta)} \Vert \partial_{i} \bm h(\bm x) \partial_{i} \bm h^{\T}(\bm x)\Vert^2_{F}} \\
        &= \sqrt{\sum_{a, b =1}^{T} \sum_{i=1}^{\dim(\bm \theta)} (\partial_{i} \bm h^a(\bm x))^2 (\partial_{i} \bm h^b(\bm x))^2} \\
        &\leq \sqrt{\sum_{a, b =1}^{T} \Vert (\partial \bm h^a(\bm x))^2 \Vert_2 \cdot \Vert (\partial \bm h^b(\bm x))^2 \Vert_2} \\
        &= \sqrt{\left( \sum_{a=1}^{T} \Vert (\partial \bm h^a(\bm x))^2 \Vert_2 \right)^2 } \\
        &= \sum_{a=1}^{T} \Vert (\partial \bm h^a(\bm x))^2 \Vert_2  \\
        &= \sum_{a=1}^{T} \sqrt{ \sum_{i=1}^{\dim(\bm\theta)} (\partial_i \bm h^a(\bm x))^4 } \\
        &\leq \sum_{a=1}^{T} \sum_{i=1}^{\dim(\bm\theta)} \vert (\partial_i \bm h^a(\bm x))^2 \vert \\
        &= \Vert \partial \bm h(\bm x) \Vert_F^2,
    \end{align*}
    where the last inequality follows from the norm ordering \( \Vert \cdot \Vert_2 \leq \Vert \cdot \Vert_1 \).
\end{proof}
 \section{Proof of \texorpdfstring{\cref{thm:efim_var_joint}}{Theorem 4.7}}
\setcurrentname{Proof of \texorpdfstring{\cref{thm:efim_var_joint}}{}}
\label{sec:pf_efim_var_joint}

To prove the Theorem, we will utilize the law of total variances.
We note, that by the premise of the Theorem, we are sampling $N_x$ many samples from $q({\bm x})$ and $N$ many samples from $q({\bm y} \mid {\bm y})$ for each ${\bm y}$ initially sampled.
To make this clear, the samples and sampling will be notated by:
\begin{align*}
    {\bm x}_k & \sim q({\bm x}) \\
    {\bm y}_{l \mid {\bm x}_k} & \sim p({\bm y} \mid {\bm x}_k) \\
\end{align*}

Note that using these samples, our empirical estimators for the FIM (for either estimator) will be of the form:
\begin{align*}
    \efima(\theta_i) = \frac{1}{N_x} \sum_{{\bm x}_k} \left( \frac{1}{N} \sum_{{\bm y}_{l \mid {\bm x}_{k}}} f({\bm x}_k, {\bm y}_{l \mid {\bm x}_{k}}) \right),
\end{align*}
for an appropriately chosen $f$.

This also gives:
\begin{align*}
    \efimj(\theta_i \mid {\bm x}) = \frac{1}{N} \sum_{{\bm y}_{l \mid {\bm x}}} f({\bm x}, {\bm y}_{l \mid {\bm x}}) .
\end{align*}

Now, we simplify the variance as follows:
\begin{align*}
    & \var\left[ \frac{1}{N_x} \sum_{{\bm x}_k} \left( \frac{1}{N} \sum_{{\bm y}_{l \mid {\bm x}_{k}}} f({\bm x}_k, {\bm y}_{l \mid {\bm x}_{k}}) \right) \right] \\
    &= \frac{1}{N_x^2} \sum_{{\bm x}_k} \left( 
        \var\left[ \frac{1}{N} \sum_{{\bm y}_{l \mid {\bm x}_{k}}} f({\bm x}_k, {\bm y}_{l \mid {\bm x}_{k}}) \right] 
    \right) \\
    &= \frac{1}{N_x^2} \sum_{{\bm x}_k}
        \var\left[ 
            \efimj(\theta_i \mid {\bm x}_k)
        \right] \\
    &= \frac{1}{N_x^2} \sum_{{\bm x}_k}
    \left(
        \var_{{\bm x}_k} \left[ 
            \expect_{{\bm y}_{1 \mid {\bm x}_{k}}, \ldots, {\bm y}_{N \mid {\bm x}_{k}}} \left[
                \efimj(\theta_i \mid {\bm x}_k)
            \right]
        \right]
        +
        \expect_{{\bm x}_k} \left[
            \var_{{\bm y}_{1 \mid {\bm x}_{k}}, \ldots, {\bm y}_{N \mid {\bm x}_{k}}} \left[ 
                \efimj(\theta_i \mid {\bm x}_k)
            \right]
        \right]
    \right) \\
    &= \frac{1}{N_x^2} \sum_{{\bm x}_k}
    \left(
        \var_{{\bm x}_k} \left[ 
                \fim(\theta_i \mid {\bm x}_k)
        \right]
        +
        \expect_{{\bm x}_k} \left[
            \edvj(\theta_i \mid {\bm x}_k)
        \right]
    \right) \\
    &= \frac{1}{N_x}
    \left(
        \var_{{\bm x}} \left[ 
                \fim(\theta_i \mid {\bm x})
        \right]
        +
        \expect_{{\bm x}} \left[
            \edvj(\theta_i \mid {\bm x})
        \right]
    \right).
\end{align*}
As required.

\paragraph{For \( \edva(\bm \theta) \)}

\begin{proof}
    \begin{align*}
        \edva(\theta_{i})
        &= \frac{1}{N} \left( \expect_{p(\bm x, \bm y)}\left[ \left(\frac{\partial \log p(\bm y \mid \bm x)}{\partial \theta_{i}} \right)^{2} \right] - \expect_{p(\bm x, \bm y)} \left[ \frac{\partial \log p(\bm y \mid \bm x)}{\partial \theta_{i}} \right]^{2} \right).
    \end{align*}
    Let \( \bm \delta_{a}(\bm x, \bm y) \defeq (\bm t(\bm y) - \bm \eta(\bm x)) \).
    \begin{align*}
        &\expect_{p(\bm x, \bm y)}\left[ \left(\frac{\partial \log p(\bm y \mid \bm x)}{\partial \theta_{i}} \right)^{2} \right] \\
        &=  \expect_{p(\bm x, \bm y)}\left[
        \frac{\partial\bm{h}^a(\bm x)}{\partial\theta_{i}}
        \frac{\partial\bm{h}^b(\bm x)}{\partial\theta_{i}}
        \frac{\partial\bm{h}^c(\bm x)}{\partial\theta_{i}}
        \frac{\partial\bm{h}^d(\bm x)}{\partial\theta_{i}}
        \bm \delta_{a}(\bm x, \bm y)
        \bm \delta_{b}(\bm x, \bm y)
        \bm \delta_{c}(\bm x, \bm y)
        \bm \delta_{d}(\bm x, \bm y)
        \right] \\
        &=  \expect_{q(\bm x)}\left[
        \frac{\partial\bm{h}^a(\bm x)}{\partial\theta_{i}}
        \frac{\partial\bm{h}^b(\bm x)}{\partial\theta_{i}}
        \frac{\partial\bm{h}^c(\bm x)}{\partial\theta_{i}}
        \frac{\partial\bm{h}^d(\bm x)}{\partial\theta_{i}}
        \expect_{p(\bm y \mid \bm x)}\left[
        \bm \delta_{a}(\bm x, \bm y)
        \bm \delta_{b}(\bm x, \bm y)
        \bm \delta_{c}(\bm x, \bm y)
        \bm \delta_{d}(\bm x, \bm y)
        \right] \right] \\
        &=  \expect_{q(\bm x)}\left[
        \frac{\partial\bm{h}^a(\bm x)}{\partial\theta_{i}}
        \frac{\partial\bm{h}^b(\bm x)}{\partial\theta_{i}}
        \frac{\partial\bm{h}^c(\bm x)}{\partial\theta_{i}}
        \frac{\partial\bm{h}^d(\bm x)}{\partial\theta_{i}}
        \kurt^{p}_{abcd}(\bm t \mid \bm x)
        \right]
    \end{align*}

    And:
    \begin{align*}
        &\expect_{p(\bm x, \bm y)} \left[ \frac{\partial \log p(\bm y \mid \bm x)}{\partial \theta_{i}} \right]^{2} \\
        &= \expect_{p(\bm x, \bm y)} \left[
            \frac{\partial\bm{h}^a(\bm x)}{\partial\theta_{i}}
            \frac{\partial\bm{h}^b(\bm x)}{\partial\theta_{i}}
            \bm \delta_{a}(\bm x, \bm y)
            \bm \delta_{b}(\bm x, \bm y)
        \right]^{2} \\
        &= \expect_{q(\bm x)} \left[
            \frac{\partial\bm{h}^a(\bm x)}{\partial\theta_{i}}
            \frac{\partial\bm{h}^b(\bm x)}{\partial\theta_{i}}
            \expect_{p(\bm y \mid \bm x)} \left[
            \bm \delta_{a}(\bm x, \bm y)
            \bm \delta_{b}(\bm x, \bm y)
            \right]
        \right]^{2} \\
        &= \expect_{q(\bm x)} \left[
            \frac{\partial\bm{h}^a(\bm x)}{\partial\theta_{i}}
            \frac{\partial\bm{h}^b(\bm x)}{\partial\theta_{i}}
            \fim_{ab}(\bm h \mid \bm x)
        \right]^{2} \\
        &= \expect_{q(\bm x)} \left[
            \left(
            \frac{\partial\bm{h}^a(\bm x)}{\partial\theta_{i}}
            \frac{\partial\bm{h}^b(\bm x)}{\partial\theta_{i}}
            \fim_{ab}(\bm h \mid \bm x)
            \right)^{2}
        \right]
        -
        \var_{\X}\left( 
            \left(
            \frac{\partial\bm{h}(\bm x)}{\partial\theta_{i}}
            \right)^{\T}
            \fim(\bm h \mid \bm x)
            \frac{\partial\bm{h}(\bm x)}{\partial\theta_{i}}
        \right) \\
        &= \expect_{q(\bm x)} \left[
            \left(
            \frac{\partial\bm{h}^a(\bm x)}{\partial\theta_{i}}
            \frac{\partial\bm{h}^b(\bm x)}{\partial\theta_{i}}
            \fim_{ab}(\bm h \mid \bm x)
            \right)^{2}
        \right]
        -
        \var_{\X}\left( 
            \fim(\theta_{i} \mid \bm x)
        \right).
    \end{align*}

    Together:
\begin{align*}
        \edva(\theta_{i})
        &= \frac{1}{N} 
        \var_{\X}\left( 
            \fim(\theta_{i} \mid \bm x)
        \right) \\
        &+
        \frac{1}{N}
        \expect_{q(\bm x)} \left[
        \frac{\partial\bm{h}^a(\bm x)}{\partial\theta_{i}}
        \frac{\partial\bm{h}^b(\bm x)}{\partial\theta_{i}}
        \frac{\partial\bm{h}^c(\bm x)}{\partial\theta_{i}}
        \frac{\partial\bm{h}^d(\bm x)}{\partial\theta_{i}}
        \left[
        \kurt^{p}_{abcd}(\bm t \mid \bm x)
        -
        \fim_{ab}(\bm h \mid \bm x)
        \cdot
        \fim_{cd}(\bm h \mid \bm x)
        \right]
        \right]
    \end{align*}
\end{proof}

 \paragraph{For \( \edvb(\bm \theta) \)}

\begin{proof}
    \begin{align*}
        &\edvb(\theta_{i}) = \frac{1}{N} \var\left(
        \left( \bm\eta_a(\bm x) - \bm{t}_a(\bm{y}) \right)
        \frac{\partial^2\bm{h}^a(\bm x)}{\partial\theta_{i}\partial\theta_{i}}
        +
        \fim(\theta_{i} \mid \bm x)
        \right) \\
        &= \frac{1}{N} \left[
        \underbrace{
        \var\left(
        \left( \bm\eta_a(\bm x) - \bm{t}_a(\bm{y}) \right)
        \frac{\partial^2\bm{h}^a(\bm x)}{\partial\theta_{i}\partial\theta_{i}}
        \right)
        }_{(a)}
        +
        {
        \var\left(
        \fim(\theta_{i} \mid \bm x)
        \right)
        } \right.\\
        &
        \left.
        \quad +
        2 
        \underbrace{
        \cov\left( 
        \left( \bm\eta_a(\bm x) - \bm{t}_a(\bm{y}) \right)
        \frac{\partial^2\bm{h}^a(\bm x)}{\partial\theta_{i}\partial\theta_{i}}
        ,
        \fim(\theta_{i} \mid \bm x)
        \right)
        }_{(b)}
        \right].
    \end{align*}

    \begin{align*}
        (a)
        &= \var\left(
        \left( \bm\eta_a(\bm x) - \bm{t}_a(\bm{y}) \right)
        \frac{\partial^2\bm{h}^a(\bm x)}{\partial\theta_{i}\partial\theta_{i}}
        \right) \\
        &=  \expect_{p(\bm x, \bm y)}\left[
        \left(
        \left( \bm\eta_a(\bm x) - \bm{t}_a(\bm{y}) \right)
        \frac{\partial^2\bm{h}^a(\bm x)}{\partial\theta_{i}\partial\theta_{i}}
        \right)^{2}
        \right] \\
        &\quad\quad\quad\quad\quad\quad
        - \expect_{p(\bm x, \bm y)}\left[
        \left( \bm\eta_a(\bm x) - \bm{t}_a(\bm{y}) \right)
        \frac{\partial^2\bm{h}^a(\bm x)}{\partial\theta_{i}\partial\theta_{i}}
        \right]^{2} \\
        &=  \expect_{p(\bm x, \bm y)}\left[
        \left(
        \left( \bm\eta_a(\bm x) - \bm{t}_a(\bm{y}) \right)
        \frac{\partial^2\bm{h}^a(\bm x)}{\partial\theta_{i}\partial\theta_{i}}
        \right)^{2}
        \right] \\
        &\quad\quad\quad\quad\quad\quad
        - \expect_{q(\bm x)}\left[
        \frac{\partial^2\bm{h}^a(\bm x)}{\partial\theta_{i}\partial\theta_{i}}
        \expect_{p(\bm y \mid \bm x)} \left[\left( \bm\eta_a(\bm x) - \bm{t}_a(\bm{y}) \right)
        \right]
        \right]^{2} \\
        &= \expect_{p(\bm x, \bm y)}\left[
        \left(
        \left( \bm\eta_a(\bm x) - \bm{t}_a(\bm{y}) \right)
        \frac{\partial^2\bm{h}^a(\bm x)}{\partial\theta_{i}\partial\theta_{i}}
        \right)^{2}
        \right]
        - 0 \\
        &= \expect_{q(\bm x)}\left[
        \left( 
        \frac{\partial^2\bm{h}(\bm x)}{\partial\theta_{i}\partial\theta_{i}}
        \right)^{\T}
        \expect_{p(\bm y \mid \bm x)}\left[
        \left( \bm\eta(\bm x) - \bm{t}(\bm{y}) \right)
        \left( \bm\eta(\bm x) - \bm{t}(\bm{y}) \right)^{\T}
        \right]
        \left( 
        \frac{\partial^2\bm{h}(\bm x)}{\partial\theta_{i}\partial\theta_{i}}
        \right)
        \right] \\
        &= \expect_{q(\bm x)}\left[
        \left( 
        \frac{\partial^2\bm{h}(\bm x)}{\partial\theta_{i}\partial\theta_{i}}
        \right)^{\T}
        \fim(\bm h \mid \bm x)
        \left( 
        \frac{\partial^2\bm{h}(\bm x)}{\partial\theta_{i}\partial\theta_{i}}
        \right)
        \right].
    \end{align*}

    \begin{align*}
        (b)
        &=  \cov\left( 
        \left( \bm\eta_a(\bm x) - \bm{t}_a(\bm{y}) \right)
        \frac{\partial^2\bm{h}^a(\bm x)}{\partial\theta_{i}\partial\theta_{i}}
        ,
        \fim(\theta_{i} \mid \bm x)
        \right) \\
        &= \expect_{p(\bm x, \bm y)}\left[
        \left( \bm\eta_a(\bm x) - \bm{t}_a(\bm{y}) \right)
        \frac{\partial^2\bm{h}^a(\bm x)}{\partial\theta_{i}\partial\theta_{i}}
        \fim(\theta_{i} \mid \bm x)
        \right] \\
        &\quad \quad-
        \expect_{p(\bm x, \bm y)}\left[
        \left( \bm\eta_a(\bm x) - \bm{t}_a(\bm{y}) \right)
        \frac{\partial^2\bm{h}^a(\bm x)}{\partial\theta_{i}\partial\theta_{i}}
        \right]
        \expect_{p(\bm x, \bm y)}\left[
        \fim(\theta_{i} \mid \bm x)
        \right] \\
        &= 0,
    \end{align*}
    which follows by taking the `partial' expectation (\( \bm y \mid \bm x \)) for both terms.

    Thus together,
\begin{align*}
        \edvb(\theta_{i}) =
        \frac{1}{N}\var\left( \fim(\theta_{i} \mid \bm x) \right) + \frac{1}{N}\expect_{q(\bm x)}\left[
        \left( 
        \frac{\partial^2\bm{h}(\bm x)}{\partial\theta_{i}\partial\theta_{i}}
        \right)^{\T}
        \fim(\bm h \mid \bm x)
        \left( 
        \frac{\partial^2\bm{h}(\bm x)}{\partial\theta_{i}\partial\theta_{i}}
        \right)
        \right].
    \end{align*}
\end{proof}  \section{Proof of \texorpdfstring{\cref{lem:joint_first_term}}{Lemma 4.8}}
\setcurrentname{Proof of \texorpdfstring{\cref{lem:joint_first_term}}{}}
\label{sec:pf_joint_first_term}

\begin{proof}
	The lower bound holds from just considering the non-negativity of variance. For the upper bound, we utilize the bound directly consider the bounds of \cref{thm:fim_bound},
\begin{align*}
		\var\left( \fim(\theta_{i} \mid \bm x) \right)
		 & = \expect_{q(\bm{x})}\left[ \fim(\theta_{i} \mid \bm x)^2 \right] - \expect_{q(\bm{x})}\left[ \fim(\theta_{i} \mid \bm x) \right]^2    \\
		 & \leq \expect_{q(\bm{x})}\left[ \fim(\theta_{i} \mid \bm x)^2 \right]                                                                   \\
		 & \leq \expect_{q(\bm{x})} \left[\Vert \partial_{i} \bm h(\bm x) \Vert_{2}^{4} \cdot  \lambda^{2}_{\max}(\fim(\bm h \mid \bm x))\right].
	\end{align*}
\end{proof}
 
\section{Proof of \texorpdfstring{\cref{thm:spectrum_regression}}{Proposition 5.1}}
\setcurrentname{Proof of \texorpdfstring{\cref{thm:spectrum_regression}}{}}
\label{sec:pf_spectrum_regression}

We first derive the statistics \( \fim(\bm h \mid \bm x) \) and \( \kurt^p(\bm t \mid \bm x) \) presented in ``Regression: Isotropic Gaussian Distribution'' \cref{sec:learning_setting}.
It follows that from the regression setting, we have that,
\begin{align*}
    F(\bm h(\bm x))
    &= \log \int \pi(\bm y) \cdot \exp(\bm{t}^\T(\bm y) \bm h(\bm x)) \\
    &= \log \int \pi(\bm y) \cdot \exp(\bm{y}^\T \bm h(\bm x)),
\end{align*}
where notably, by definition, \( \pi(\bm y) \) is independent of learned parameter \( \bm h (\bm x) \).

As such, we have that:
\begin{align*}
    \frac{\partial}{\partial \bm h_i} F(\bm h)  \bigg \vert_{\bm h = \bm h(\bm x)}
    &= \frac{1}{\int \pi(\bm y) \cdot \exp(\bm{t}^\T(\bm y) \bm h(\bm x))} \cdot \int \pi(\bm y) \cdot \exp(\bm{t}^\T(\bm y) \bm h(\bm x)) \cdot \bm h_i(\bm x) = \expect_{p(\bm y \mid \bm x)}[ \bm y_i].
\end{align*}
Now we note that \( \expect_p(\bm y \mid \bm x)[ \bm y_i] \) is exactly \( \bm h_i(\bm x)\) as the parameter \( \bm h(\bm x) \) specifies the mean of the (isotropic) multivariate normal distribution.
As such we have that,
\begin{align*}
    \frac{\partial}{\partial \bm h_i} F(\bm h)  \bigg \vert_{\bm h = \bm h(\bm x)} &= \bm h(\bm x) \\
    \fim(\bm h \mid \bm x) = \frac{\partial^2}{\partial \bm h\partial \bm h^{\T}} F(\bm h) \bigg \vert_{\bm h = \bm h(\bm x)} &= I.
\end{align*}

Furthermore, by \citep[Lemma 5]{varfim}, we have that,
\begin{align*}
    \kurt^p_{abcd}(\bm t \mid \bm x)
    &= \frac{\partial^4 F(\bm h)}{\partial \bm h_a \partial \bm h_b \partial \bm h_c \partial \bm h_d} \bigg \vert_{\bm h = \bm h(\bm x)} \\
    & \quad + \fim_{ab}(\bm h \mid \bm x) \cdot \fim_{cd}(\bm h \mid \bm x) + \fim_{ac}(\bm h \mid \bm x) \cdot \fim_{bd}(\bm h \mid \bm x) + \fim_{ad}(\bm h \mid \bm x) \cdot \fim_{bc}(\bm h \mid \bm x) \\
    &= 0 + \fim_{ab}(\bm h \mid \bm x) \cdot \fim_{cd}(\bm h \mid \bm x) + \fim_{ac}(\bm h \mid \bm x) \cdot \fim_{bd}(\bm h \mid \bm x) + \fim_{ad}(\bm h \mid \bm x) \cdot \fim_{bc}(\bm h \mid \bm x) \\
    &= \fim_{ab}(\bm h \mid \bm x) \cdot \fim_{cd}(\bm h \mid \bm x) + \fim_{ac}(\bm h \mid \bm x) \cdot \fim_{bd}(\bm h \mid \bm x) + \fim_{ad}(\bm h \mid \bm x) \cdot \fim_{bc}(\bm h \mid \bm x).
\end{align*}
In summary, we have,
\begin{align*}
    \kurt_{abcd}^p(\bm t \mid \bm x) &= \fim_{ab}(\bm h \mid \bm x) \cdot \fim_{cd}(\bm h \mid \bm x) + \fim_{ac}(\bm h \mid \bm x) \cdot \fim_{bd}(\bm h \mid \bm x) \\
    & \quad + \fim_{ad}(\bm h \mid \bm x) \cdot \fim_{bc}(\bm h \mid \bm x) \\
    (\kurt^p(\bm t \mid \bm x) - \fim(\bm h \mid \bm x) \otimes \fim(\bm h \mid \bm x))_{abcd} &= \fim_{ac}(\bm h \mid \bm x) \cdot \fim_{bd}(\bm h \mid \bm x) + \fim_{ad}(\bm h \mid \bm x) \cdot \fim_{bc}(\bm h \mid \bm x).
\end{align*}

\begin{proof}
    The minimum and maximum eigenvalues of \( \fim(\bm h \mid \bm x) \) follows directly noting that the trace of a matrix is the sum of eigenvalues. As such, from the statistics presented above we have that the minimum and eigenvalue must be \( 1 \).

    The tensor eigenvalues of \( \kurt^p(\bm t \mid \bm x) - \fim(\bm h \mid \bm x) \otimes \fim(\bm h \mid \bm x) = \fim_{ac}(\bm h \mid \bm x) \cdot \fim_{bd}(\bm h \mid \bm x) + \fim_{ad}(\bm h \mid \bm x) \cdot \fim_{bc}(\bm h \mid \bm x) \) follows from the variational definition \cref{eq:var_tensor_eig}. For instance, for the minimum eigenvalue,
\begin{align*}
        &\inf_{\bm u: \Vert \bm u \Vert_{2} = 1}  {\bm u}^{a}{\bm u}^{b}{\bm u}^{c}{\bm u}^{d} \left( \fim_{ac}(\bm h \mid \bm x) \cdot \fim_{bd}(\bm h \mid \bm x) + \fim_{ad}(\bm h \mid \bm x) \cdot \fim_{bc}(\bm h \mid \bm x) \right) \\
        &= 2 \cdot \inf_{\bm u: \Vert \bm u \Vert_{2} = 1} \Vert \bm u \Vert^2_2 \\
        &= 2.
    \end{align*}
    The maximum eigenvalue is proven identically.
\end{proof}
 \section{Proof of \texorpdfstring{\cref{thm:spectrum_classification}}{Theorem 5.2}}
\setcurrentname{Proof of \texorpdfstring{\cref{thm:spectrum_classification}}{}}
\label{sec:pf_spectrum_classification}

We first prove the following corollary which connects the maximum eigenvalues of \( \kurt(\bm{t} \mid \bm{x}) \) to the maximum eigenvalues of \( \fim(\bm h \mid \bm x) \).

\begin{corollary}
    \label{cor:categorical_2_moment}
    Suppose that the exponential family in \cref{eq:exp} is specified by a categorical distribution. Then,
    \begin{equation}
        \label{eq:categorical_2_moment}
        \tilde{\lambda}_{\max}\left( \kurt(\bm{t} \mid \bm{x}) \right) \leq 2 \cdot \lambda_{\max}(\fim(\bm{h} \mid \bm{x})).
    \end{equation}
\end{corollary}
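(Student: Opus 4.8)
The plan is to reduce the statement to \cref{prop:bounded_suff_stat}, which already bounds $\tilde{\lambda}_{\max}(\kurt^{p}(\bm t \mid \bm x))$ by $B \cdot \lambda_{\max}(\fim(\bm h \mid \bm x))$ whenever the centered sufficient statistic satisfies $\Vert \bm t(\bm y) - \bm\eta(\bm x)\Vert_2^2 \le B$. Thus the entire task is to exhibit the constant $B = 2$ for the categorical family, after which the claimed inequality $\tilde{\lambda}_{\max}(\kurt^{p}(\bm t \mid \bm x)) \le 2\,\lambda_{\max}(\fim(\bm h \mid \bm x))$ is immediate.

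First I would specialize the relevant moments to the categorical setting. Here $\bm\eta(\bm x) = \sigma(\bm x)$ and, for a label $y$, the sufficient statistic $\bm t(y)$ is the one-hot vector $\bm e_y$, so the centered statistic is $\bm e_y - \sigma(\bm x)$. A direct expansion gives
\begin{equation*}
\Vert \bm e_y - \sigma(\bm x)\Vert_2^2 = 1 - 2\sigma_y(\bm x) + \Vert \sigma(\bm x)\Vert_2^2 ,
\end{equation*}
an identity that must hold for every $y$ in the support, i.e.\ for every $y$ with $\sigma_y(\bm x) > 0$.

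Next I would bound the right-hand side uniformly in $y$. Since $\sigma_y(\bm x) \ge 0$, the term $-2\sigma_y(\bm x)$ only decreases the expression, so it may be dropped to produce an upper bound valid for all $y$. For the remaining term, because $\sigma(\bm x)$ is a probability vector one has $\sigma_k(\bm x)^2 \le \sigma_k(\bm x)$ for each $k$, hence $\Vert \sigma(\bm x)\Vert_2^2 = \sum_k \sigma_k(\bm x)^2 \le \sum_k \sigma_k(\bm x) = 1$. Combining the two observations yields $\Vert \bm e_y - \sigma(\bm x)\Vert_2^2 \le 2$ for every $y$, establishing $B = 2$.

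Finally, applying \cref{prop:bounded_suff_stat} with $B = 2$ delivers $\tilde{\lambda}_{\max}(\kurt^{p}(\bm t \mid \bm x)) \le 2\,\lambda_{\max}(\fim(\bm h \mid \bm x))$, which is exactly \cref{eq:categorical_2_moment}. I do not anticipate a genuine obstacle: the only point requiring care is that the bound on $\Vert \bm e_y - \sigma(\bm x)\Vert_2^2$ hold on the whole support, which is automatic since discarding the non-positive term $-2\sigma_y(\bm x)$ is legitimate for every $y$. An alternative, more hands-on route would bypass \cref{prop:bounded_suff_stat} by writing $\tilde{\lambda}_{\max}(\kurt^{p}(\bm t \mid \bm x)) = \sup_{\Vert \bm u\Vert_2 = 1}\sum_k \sigma_k(\bm x)\,(u_k - \bm u^{\T}\sigma(\bm x))^4$ and comparing it termwise against $\sum_k \sigma_k(\bm x)\,(u_k - \bm u^{\T}\sigma(\bm x))^2 = \bm u^{\T}\fim(\bm h \mid \bm x)\bm u$, but the reduction to \cref{prop:bounded_suff_stat} is cleaner and reuses machinery already in place.
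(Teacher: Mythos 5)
Your proof is correct and takes essentially the same route as the paper: the paper also reduces to \cref{prop:bounded_suff_stat} with $B = 2$, justifying the bound by asserting that $\Vert \bm t(\bm y) - \bm\eta(\bm x) \Vert_2 \leq \sqrt{2}$ (tight at vectors of the form $(-1, 1, 0, \ldots)$). Your explicit expansion $\Vert \bm e_y - \sigma(\bm x)\Vert_2^2 = 1 - 2\sigma_y(\bm x) + \Vert \sigma(\bm x)\Vert_2^2 \leq 2$ is a slightly more rigorous derivation of the same constant.
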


\begin{proof}
    As we are consider a categorical distribution we have that
\begin{equation*}
        {\bm v}_i =
        \begin{cases}
            1 - \sigma_{i}(\bm{h}) & \textrm{if } y = i \\
            0 - \sigma_{i}(\bm{h}) & \textrm{if } y \neq i
        \end{cases}.
    \end{equation*}
    Thus we have that \( \vert {\bm v}_i \vert \leq 1 \). Furthermore, note that the maximum \( \ell_2 \)-norm that we can have is \( \Vert {\bm v} \Vert_2 \leq \sqrt{2} \). Note that this is tight when the positive and negative mass are placed only two distinct coordinates, \ie, \( (-1, 1, \ldots) \).

    Thus using \cref{prop:bounded_suff_stat}, the result follows.
\end{proof}

Now by using \cref{cor:variation_lambda_bounds,cor:categorical_2_moment}, the remainder of the proof, all we require is the bounding of \( \lambda_{\max}(\fim(\bm{h} \mid \bm{x})) \).

\begin{proof}
    The first term in the maximum eigenvalue follows from,
\begin{align*}
        \lambda_{\max}(\fim(\bm h \mid \bm x))
        &= \lambda_{\max} \left( \Diag(\sigma(\bm x)) - \sigma(\bm x) \sigma(\bm x)^{\T} \right) \\
        &\leq \lambda_{\max} \left( \Diag(\sigma(\bm x)) \right)  - \lambda_{\min} \left( \sigma(\bm x) \sigma(\bm x)^{\T} \right) \\
        &= \max_k \sigma_k(\bm x).
    \end{align*}
    The second term in the maximum follows from the trace of \( \fim(\bm h \mid \bm x) \) being the sum of total eigenvalues.
\end{proof}
 
\section{Proof of \texorpdfstring{\cref{lem:empfim_covar}}{Lemma 6.1}}
\setcurrentname{Proof of \texorpdfstring{\cref{lem:empfim_covar}}{}}
\label{sec:pf_empfim_covar}

\begin{proof}
    The proof follows from the standard definition of covariance.
Denoting \( \hat{\bm \eta} (\bm x) \defeq \expect_{q(\bm y \mid \bm x)} \left[ \bm t(\bm y) \right] \), we have:
    \begin{align*}
        \cov^{q}(\bm t \mid \bm x)
        &= \expect_{q(\bm y \mid \bm x)} \left[ \bm t(\bm y) \bm t^{\T}(\bm y)\right] - \hat{\bm \eta}(\bm x) \hat{\bm \eta}^{\T}(\bm x).
    \end{align*}

    Also expanding \( \empfim(\bm h \mid \bm x) \):
    \begin{align*}
        \empfim(\bm h \mid \bm x)
        &= \expect_{q(\bm y \mid \bm x)} \left[  (\bm t(\bm y) - \bm \eta(\bm x)) (\bm t(\bm y) - \bm \eta(\bm x))^{\T} \right] \\
        &= \expect_{q(\bm y \mid \bm x)} \left[ \bm t(\bm y) \bm t^{\T}(\bm y)\right]
        - \bm \eta(\bm x) \hat{\bm \eta}^{\T}(\bm x) - \hat{\bm \eta}^{\T}(\bm x) {\bm \eta}^{\T}(\bm x) + {\bm \eta}(\bm x){\bm \eta}^{\T}(\bm x).
    \end{align*}
    Thus we have
    \begin{align*}
        \cov^{q}(\bm t \mid \bm x)
        &= \empfim(\bm h \mid \bm x)
        + \bm \eta(\bm x) \hat{\bm \eta}^{\T}(\bm x) + \hat{\bm \eta}^{\T}(\bm x) {\bm \eta}^{\T}(\bm x) - {\bm \eta}(\bm x){\bm \eta}^{\T}(\bm x)  - \hat{\bm \eta}(\bm x) \hat{\bm \eta}^{\T}(\bm x) \\
        &= \empfim(\bm h \mid \bm x) - (\bm \eta(\bm x) - \hat{\bm \eta}(\bm x))(\bm \eta(\bm x) - \hat{\bm \eta}(\bm x))^{\T}.
    \end{align*}
    As required.
\end{proof}
 \section{Proof of \texorpdfstring{\cref{cor:empefim_var}}{Corollary G.1}}
\setcurrentname{Proof of \cref{cor:empefim_var}}
\label{sec:pf_empefim_var}

\begin{proof}
    We calculate the variance:
    \begin{align*}
        \empedv(\theta_{i})
        &= \frac{1}{N} \left( \expect_{q(\bm y \mid \bm x)}\left[ \left(\frac{\partial \log p(\bm y \mid \bm x)}{\partial \theta_{i}} \right)^{2} \right] - \expect_{q(\bm x, \bm y)} \left[ \frac{\partial \log q(\bm y \mid \bm x)}{\partial \theta_{i}} \right]^{2} \right).
    \end{align*}

    Each of the terms can be calculated:
    Let \( \bm \delta_{a}(\bm x, \bm y) \defeq (\bm t(\bm y) - \bm \eta(\bm x)) \).
    \begin{align*}
        &\expect_{p(\bm x, \bm y)}\left[ \left(\frac{\partial \log p(\bm y \mid \bm x)}{\partial \theta_{i}} \right)^{2} \right] \\
        &=  \expect_{q(\bm y \mid \bm x)}\left[
        \frac{\partial\bm{h}^a(\bm x)}{\partial\theta_{i}}
        \frac{\partial\bm{h}^b(\bm x)}{\partial\theta_{i}}
        \frac{\partial\bm{h}^c(\bm x)}{\partial\theta_{i}}
        \frac{\partial\bm{h}^d(\bm x)}{\partial\theta_{i}}
        \bm \delta_{a}(\bm x, \bm y)
        \bm \delta_{b}(\bm x, \bm y)
        \bm \delta_{c}(\bm x, \bm y)
        \bm \delta_{d}(\bm x, \bm y)
        \right] \\
        &=  \frac{\partial\bm{h}^a(\bm x)}{\partial\theta_{i}}
        \frac{\partial\bm{h}^b(\bm x)}{\partial\theta_{i}}
        \frac{\partial\bm{h}^c(\bm x)}{\partial\theta_{i}}
        \frac{\partial\bm{h}^d(\bm x)}{\partial\theta_{i}}
        \expect_{q(\bm y \mid \bm x)}\left[
        \bm \delta_{a}(\bm x, \bm y)
        \bm \delta_{b}(\bm x, \bm y)
        \bm \delta_{c}(\bm x, \bm y)
        \bm \delta_{d}(\bm x, \bm y)
        \right] \\
        &=  
        \frac{\partial\bm{h}^a(\bm x)}{\partial\theta_{i}}
        \frac{\partial\bm{h}^b(\bm x)}{\partial\theta_{i}}
        \frac{\partial\bm{h}^c(\bm x)}{\partial\theta_{i}}
        \frac{\partial\bm{h}^d(\bm x)}{\partial\theta_{i}}
        \empkurt_{abcd}(\bm t \mid \bm x).
    \end{align*}

    And:

    \begin{align*}
        &\expect_{p(\bm y \mid \bm x)} \left[ \frac{\partial \log p(\bm y \mid \bm x)}{\partial \theta_{i}} \right]^{2} \\
        &= \expect_{p(\bm y \mid \bm x)} \left[
            \frac{\partial\bm{h}^a(\bm x)}{\partial\theta_{i}}
            \frac{\partial\bm{h}^b(\bm x)}{\partial\theta_{i}}
            \bm \delta_{a}(\bm x, \bm y)
            \bm \delta_{b}(\bm x, \bm y)
        \right]^{2} \\
        &=  \left[\frac{\partial\bm{h}^a(\bm x)}{\partial\theta_{i}}
            \frac{\partial\bm{h}^b(\bm x)}{\partial\theta_{i}}
        \expect_{p(\bm y \mid \bm x)} \left[
            \bm \delta_{a}(\bm x, \bm y)
            \bm \delta_{b}(\bm x, \bm y)
        \right]\right]^{2} \\
        &=  \left[\frac{\partial\bm{h}^a(\bm x)}{\partial\theta_{i}}
            \frac{\partial\bm{h}^b(\bm x)}{\partial\theta_{i}}
            \empfim_{ab}(\bm h \mid x) \right]^2.
    \end{align*}

    Together with \cref{lem:empfim_covar} proves the theorem.
\end{proof}
 \section{Proof of \texorpdfstring{\cref{cor:empefim_var_joint}}{Corollary G.2}}
\setcurrentname{Proof of \texorpdfstring{\cref{cor:empefim_var_joint}}{}}
\label{sec:pf_empefim_var_joint}

\begin{proof}
    The proof follows identically to that of \cref{thm:efim_var_joint} with densities changed.
\end{proof}

\end{document}